\documentclass[a4paper,10pt]{article}

\usepackage[margin=1in]{geometry}
\usepackage{indentfirst}
\usepackage{fancyhdr}
\usepackage{titlesec}
\usepackage{authblk}

\usepackage[utf8]{inputenc}
\usepackage[T1]{fontenc}   
\usepackage{textgreek}
\usepackage{microtype}      
\usepackage{xspace}

\usepackage{amsmath}
\usepackage{amssymb}
\usepackage{amsthm}
\usepackage{amsfonts}       
\usepackage{mathtools}
\usepackage{bbm}
\usepackage{bm}
\usepackage{nicefrac}       

\usepackage[dvipsnames,table]{xcolor} 
\definecolor{burgundy}{rgb}{0.5, 0.0, 0.13}
\definecolor{antiquewhite}{rgb}{0.98, 0.92, 0.84} 
\definecolor{blizzardblue}{rgb}{0.67, 0.9, 0.93}
\colorlet{shadecolor}{pink} 

\usepackage{graphicx}
\usepackage{subcaption}     
\usepackage{wrapfig}
\usepackage{framed}
\usepackage{soul}
\usepackage{pifont}
\usepackage{booktabs}       
\usepackage{multirow}
\usepackage{makecell}       
\usepackage{adjustbox}
\usepackage{threeparttable}
\usepackage{tablefootnote}
\usepackage{siunitx}
\usepackage{paralist}

\usepackage[shortlabels]{enumitem} 

\usepackage{algorithm}
\usepackage{algorithmic}

\usepackage{natbib}
\usepackage{doi}

\usepackage{comment}
\usepackage{csquotes}
\usepackage{lipsum}         
\usepackage[colorinlistoftodos,bordercolor=orange,backgroundcolor=orange!20,linecolor=orange,textsize=scriptsize]{todonotes}

\usepackage{hyperref}       
\usepackage{url}            
\usepackage{cleveref}

\hypersetup{
    colorlinks=true,
    citecolor=burgundy,
    linkcolor=blue,
    filecolor=magenta,
    urlcolor=cyan
}
\theoremstyle{plain} 
\newtheorem{theorem}{Theorem}
\newtheorem{proposition}[theorem]{Proposition}
\newtheorem{lemma}[theorem]{Lemma}

\newtheorem{definition}[theorem]{Definition}

\newtheorem{assumption}[theorem]{Assumption}
\newtheorem{hypothesis}[theorem]{Hypothesis}

\newtheorem{remark}[theorem]{Remark}

\newcommand{\vertiii}[1]{{\left\vert\kern-0.25ex\left\vert\kern-0.25ex\left\vert #1 \right\vert\kern-0.25ex\right\vert\kern-0.25ex\right\vert}}

\title{\textbf{Subspace-Aware Sparse Autoencoders for Effective Mechanistic Interpretability}}

\author{Seyed Arshan Dalili \qquad Mehrdad Mahdavi \vspace*{.2em} \\ 
 \quad The Pennsylvania State University \vspace*{.2em} \\  \{\texttt{sbd5760,mzm616\}@psu.edu}}
\date{}

\begin{document}
\maketitle

\begin{abstract}
Sparse Autoencoders (SAEs) are widely used for mechanistic interpretability in large language models, yet their standard formulation assigns each latent feature a single decoder direction, implicitly assuming features to be one-dimensional. We show that this assumption is in fundamental tension with the multi-dimensional structure of model features, and that the tension provably induces feature splitting through two distinct mechanisms. \emph{Geometrically}, reconstructing a feature of intrinsic dimension $d_i \ge 2$ to error $\varepsilon$ with single-direction decoders forces a number of atoms that is exponential in $d_i$. From an end-to-end \emph{optimization} perspective, this splitting is not merely possible but actively preferred. We prove that there exists a continuous path from the true $d_i$-dimensional basis to a strictly lower risk of the $\ell_1$-regularized SAE objective, whose descent directions drive any trained dictionary into exactly that exponential regime. A single coherent feature is therefore fragmented across many near-collinear latents, producing spurious multiplicity and obscuring the intrinsic geometry that interpretability requires. Motivated by this, we introduce \emph{Subspace-Aware Sparse Autoencoders} (SASA), which replace single-vector decoders with learned decoder subspaces, enforce block sparsity via Top-$s$ group gating, and adapt each group's effective rank with a nuclear-norm regularizer. We then establish a converse to the SAE result such that once the block size satisfies $r \ge d_i$, a single group not only \emph{can} represent the entire feature slice but \emph{is} the global minimizer of the SASA objective---unique up to block index and orthogonal rotation---exactly inverting the instability that fragments vector dictionaries. This consolidation reduces feature recovery to principal subspace estimation, yielding sample complexity \emph{polynomial} in $d_i$ rather than exponential---a decisive advantage given that every training activation costs an LLM forward pass. Empirically, on GPT-2 and Mistral-7B, SASA reduces feature splitting and absorption, improves monosemanticity and interpretability, and matches or exceeds standard SAEs while training on roughly half the token budget. The code can be found at \href{https://github.com/arshandalili/sasa}{https://github.com/arshandalili/sasa}.
\end{abstract}
\maketitle

\bigskip

\section{Introduction}
Mechanistic interpretability of Large Language Models (LLMs) aims to reveal how these models internally encode and manipulate knowledge, providing insights crucial for diagnosing behavior~\citep{Wang_Variengien_Conmy_Shlegeris_Steinhardt_2022, Conmy_Mavor-Parker_Lynch_Heimersheim_Garriga-Alonso_2023}, enhancing robustness~\citep{García-Carrasco_Maté_Trujillo_2024, Winninger_Addad_Kapusta_2025}, and ensuring alignment~\citep{Arditi_Obeso_Syed_Paleka_Panickssery_Gurnee_Nanda_2024, Lee_Bai_Pres_Wattenberg_Kummerfeld_Mihalcea_2024}. Recently, Sparse Autoencoders (SAEs) have become a prominent tool in the mechanistic interpretability of LLMs. Rooted in the Superposition Hypothesis \citep{Elhage_Hume_Olsson_Schiefer_Henighan_Kravec_Hatfield-Dodds_Lasenby_Drain_Chen_etal._2022} and Linear Representation Hypothesis (LRH) \citep{Alain_Bengio_2018, Park_Choe_Veitch_2024}, SAEs aim at making LLMs' representation interpretable by disentangling the representation into a \emph{sparse} set of directions in the ambient space. SAEs achieve this by learning a compressed, sparse representation of the model's high-dimensional activations, effectively isolating a small set of active directions that capture the most salient features of the data. Each of the directions in the decoder often corresponds to a distinct, one-dimensional semantic feature, allowing for a more fine-grained, interpretable mapping between the model's internal representations and specific concepts or patterns.


Nevertheless, interpreting LLM representations with SAEs remains challenging for both training and inference. 
From a training perspective, SAEs require large corpora of LLM activations (e.g., hidden states from specific layers and token positions). In practice, this often means billions of hidden states, which limits scaling to large models. 
From an inference perspective, existing SAEs can learn \emph{redundant} latents--multiple vectors corresponding to the same underlying feature--a phenomenon known as \emph{feature splitting} \citep{Bricken_Templeton_Batson_Chen_Jermyn_Conerly_Turner_Anil_Denison_Askell_etal._2023}.
Feature splitting further makes interpretation challenging and fragmented, and, as we show in Section~\ref{sec:sample-compelxity}, results in worse sample complexity for learning the underlying features.

Intuitively, an SAE exhibits feature splitting when a single semantic feature is not captured by one decoder vector, but is instead distributed across many decoder directions. Each direction explains only a local ``slice'' of the factor, so interpretation requires aggregating a cluster of near-duplicate latents rather than inspecting a single unit (Figure~\ref{fig:toy-manifold-sae-sasa}).
In Section~\ref{sec:necessity} we formalize this intuition via a covering number and show that SAE loss actively moves toward such tiling solutions of the underlying feature subspace.

\begin{figure}[t]
    \centering
    \includegraphics[width=\linewidth]{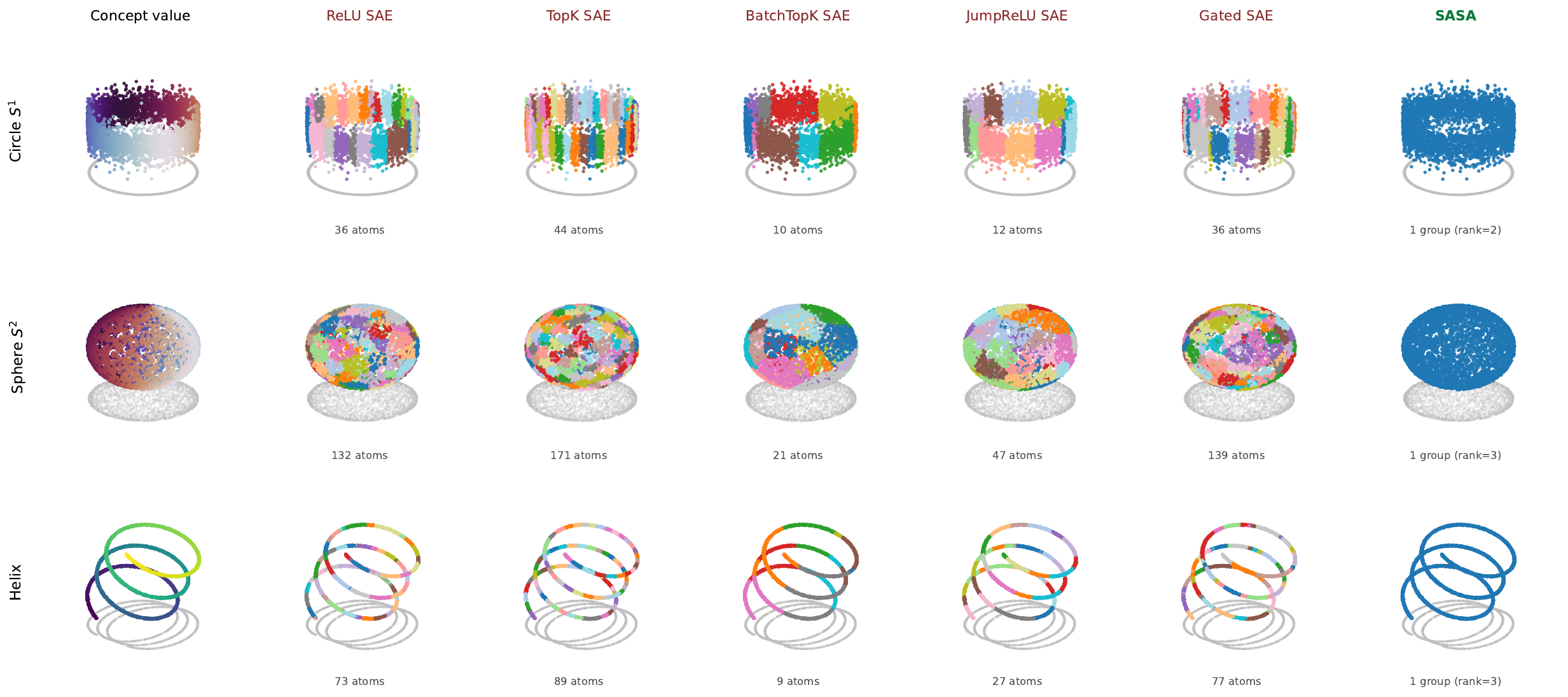}
    \caption{\textbf{Standard SAEs split a multi-dimensional feature across many near-collinear atoms, while SASA captures it as a single subspace.} We embed three ground-truth concept manifolds---a circle ($d_i=2$), a sphere $\mathbb{S}^{2}$ ($d_i=3$), and a helix ($d_i=3$)---into an ambient space of dimension $d=64$ (with $5\%$ noise) and fit six dictionaries of width $256$. \emph{First column:} each manifold colored by its underlying concept value. \emph{Next five columns:} standard vector-based SAEs (ReLU, TopK, BatchTopK, JumpReLU, Gated), in which every latent is tied to a single decoder direction. Each point is colored by the decoder atom most aligned with it. Under the vector-based assumption, the feature is \emph{not} captured by one direction but is instead distributed across tens to hundreds of near-duplicate atoms, each explaining only a local slice of the manifold. Hence, interpreting the feature requires aggregating a whole cluster of latents rather than inspecting a single unit. \emph{Last column:} SASA, which learns decoder \emph{subspace} as the unit of representation. With the same total width, a single active group of effective rank $d_i$ (one latent) covers the entire feature, recovering its intrinsic geometry rather than fragmenting it. For more elaboration and analysis}
    \label{fig:toy-manifold-sae-sasa}
\end{figure}
\begin{figure*}[!h]
  \centering
  \begin{subfigure}[t]{\linewidth}
    \includegraphics[width=\linewidth]{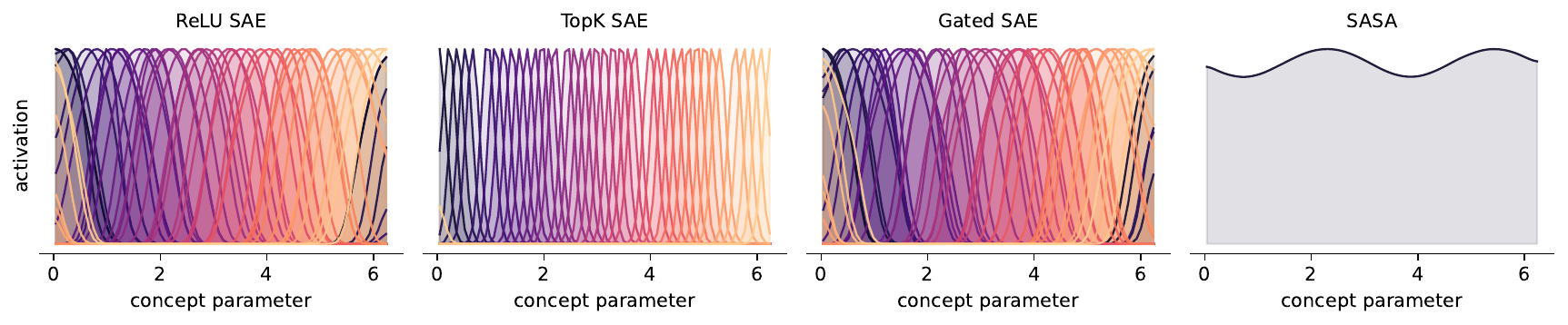}
    \caption{Latent activation profiles for circle.}\label{fig:tune-circle}
  \end{subfigure}\\[2pt]
  \begin{subfigure}[t]{\linewidth}
    \includegraphics[width=\linewidth]{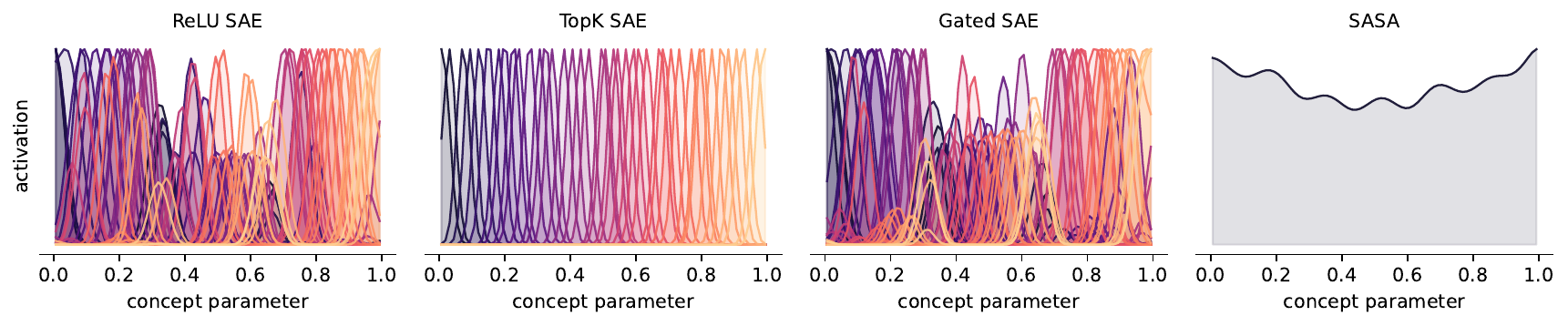}
    \caption{Latent activation profiles for helix.}\label{fig:tune-helix}
  \end{subfigure}\\[4pt]
  \begin{subfigure}[t]{0.5\linewidth}
    \includegraphics[width=\linewidth]{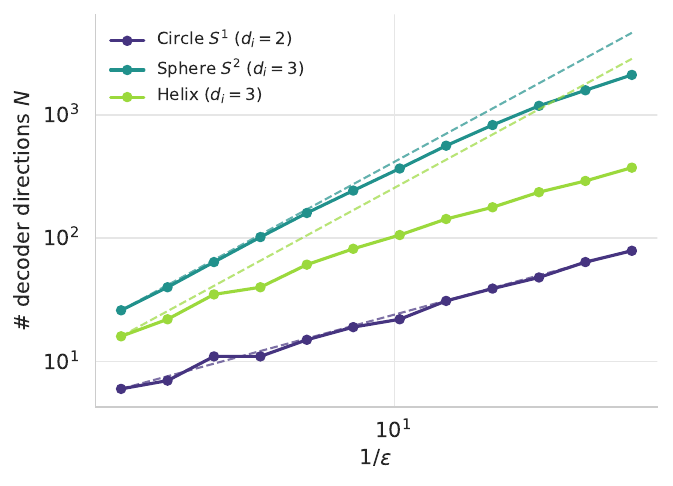}
    \caption{Line-covering number.}\label{fig:covering}
  \end{subfigure}
\caption{\textbf{Standard SAEs fragment the manifold for a feature while SASA uses one broad group, and the cost of covering it with vectors grows with intrinsic dimension, as in Theorem~\ref{thm:line-covering}.} \textbf{(a,b)}~Activation of each latent along the concept parameter, for a circle and a helix. Vector-based SAEs (ReLU, TopK, Gated) tile the manifold with many narrow, overlapping tuning curves and no single latent represents the feature, whereas one SASA group responds broadly across the entire range. \textbf{(c)}~Number of decoder directions $N$ needed to cover a feature slice of intrinsic dimension $d_i$ to error $\varepsilon$. The empirical count (solid) tracks the Theorem~\ref{thm:line-covering} lower bound (dashed), so the vector budget explodes as $d_i$ grows and $\varepsilon$ shrinks.}
  \label{fig:tuning-covering}
\end{figure*}

In parallel, recent empirical observations in LLM representations suggest an extension of the long-held LRH, named as Multi-dimensional LRH \citep{Engels_Michaud_Liao_Gurnee_Tegmark_2025,Modell_Rubin-Delanchy_Whiteley_2025}. Unlike LRH, Multi-dimensional LRH suggests that features are not represented by a \emph{vector} in the ambient space, but rather by \emph{subspaces}. That said, features such as day, year, and colors exhibit circular representations. 
Building on these empirical findings, a natural question arises: if LLM features are structured as low-dimensional \emph{subspaces} rather than single directions, what does this imply for SAEs, whose objective is to recover a \emph{sparse} set of \emph{vectors}? Collectively, these observations suggest a fundamental geometric incompatibility between the structure of features in modern LLM representations and the vector-level sparsity assumptions underlying standard SAEs, one that may manifest as the redundant and fragmented learned vectors commonly observed as feature splitting.


\paragraph{Contributions.} We make the following contributions:
\begin{itemize}
    \item  We identify a geometric mismatch between the multi-dimensional feature structure suggested by the Multi-dimensional LRH and the vector-feature assumptions implicit in standard SAEs, and show how this mismatch can causally induce feature splitting.
\item We formalize the notion of feature splitting for SAEs and give a principled criterion for when a single multi-dimensional feature is fragmented across many learned vectors. We further show that, when features are inherently multi-dimensional and low reconstruction error is required, splitting is unavoidable across broad regimes. Importantly, we show that this is not merely a geometric necessity, but rather what the objective of SAE actively favors.
\item We propose \emph{Subspace-Aware Sparse Autoencoders (SASA)}, which incorporate multi-dimensional structure directly into the SAE objective through group sparsity with a rank-adaptive regularizer. By moving sparsity to the group level, it removes the structural pressure that splits multi-dimensional features, mitigating splitting and improving interpretability.
\item We show that SASA improves sample complexity over standard SAE, which is a main bottleneck during training as each data point is an inference of an LLM.
\item Finally, we validate these gains empirically on synthetic and real-world data and models.
\end{itemize}

\paragraph{Roadmap.} The paper is organized as follows: Section~\ref{sec:problem} formalizes the multi-dimensional superposition hypothesis and reviews standard SAEs.
In Section~\ref{sec:necessity}, we make feature splitting precise via a $k$-sparse covering criterion, whose worst case---the $k=1$ line-covering regime---provably forces standard SAEs to allocate many decoder directions to achieve low reconstruction error. We then show that $\ell_1$ sparsity actively drives a trained dictionary into exactly this regime.
Motivated by the negative results of SAEs, Section~\ref{sec:sasa} introduces SASA, which represents features with low-rank subspaces and enforces structured sparsity over these subspaces. We then show that SASA can both capture the feature and its minimizer actually reaches the top-$d_i$ dimensional eigenspace of data.
Section~\ref{sec:sample-compelxity} then shows how this subspace-based view yields improved sample-complexity guarantees for recovering the underlying feature structure.
Finally, Section~\ref{sec:exp} presents empirical results corroborating our theoretical findings and demonstrating the efficiency of SASA. Detailed proofs of the technical results, implementation details, and additional experimental evidence and evaluations are presented in the appendix.
\section{Related Work}
\label{sec:related_work}

\paragraph{Mechanistic Interpretability.}Mechanistic interpretability seeks mechanistic accounts of neural network behavior, typically validated via targeted interventions. \citet{Wang_Variengien_Conmy_Shlegeris_Steinhardt_2022} demonstrated that complex behaviors can be decomposed into compact circuits that remain functional under controlled ablations and activation patching. Subsequent work has begun to automate this workflow, for example, by automatically identifying sparse, behavior-preserving subgraphs in the computational graph \citep{Conmy_Mavor-Parker_Lynch_Heimersheim_Garriga-Alonso_2023}. Despite the progress made, Mechanistic Interpretability faces key challenges and questions, namely how to decompose networks into meaningful, causally-relevant components and how to rigorously validate mechanistic hypotheses to distinguish faithful explanations from misleading interpretability illusions \citep{Sharkey_Chughtai_Batson_Lindsey_Wu_Bushnaq_Goldowsky-Dill_Heimersheim_Ortega_Bloom_etal._2025}.

\paragraph{LLM Representations.}A central line of work in Deep Learning studies \emph{representation hypotheses} that posit a simple geometric structure for semantic variables in activation space. Early work emphasized a largely one-dimensional view: many concepts should be recoverable as approximately linear directions \citep{Harris_1954}, which motivates linear probing as an operational test of where information is encoded \citep{Alain_Bengio_2018, Park_Choe_Veitch_2024}. More recently, this perspective has been broadened into multi-dimensional hypotheses, where concepts may correspond to low-dimensional subspaces or more structured geometric objects (e.g., simplices) under an appropriate notion of contrast and inner product \citep{Engels_Michaud_Liao_Gurnee_Tegmark_2025,Park_Choe_Jiang_Veitch_2025}. These hypotheses matter for mechanistic interpretability because they define concrete geometric targets for measurement and causal testing: once a concept's structure is specified, probes, interventions, and steering can be applied with respect to that structure rather than individual neurons. To date, much of the mechanistic interpretability literature has implicitly aligned with the one-dimensional version of these hypotheses, but the emerging multi-dimensional evidence motivates revisiting our tools and analyses in the setting where features occupy subspaces instead of single directions.

\citet{Modell_Rubin-Delanchy_Whiteley_2025} \emph{conjectures} that feature splitting is a symptom of inherently multi-dimensional features: when a true feature occupies a low-dimensional subspace, standard vector SAEs are not proper \emph{causal} mediators and instead approximate that subspace with multiple dictionary vectors \citep{Engels_Michaud_Liao_Gurnee_Tegmark_2025, bhalla2026sparse}. Our analysis makes this conjecture precise, and our experiments confirm the underlying geometric assumption articulated by~\citet{Michaud_Gorton_McGrath_2025}. Also recently,~\citet{bhalla2026sparse} formalize what it means for an SAE to capture a concept manifold and prove that an \emph{idealized} sparse decoder over a sufficiently incoherent dictionary recovers the manifold subspace. Empirically, they show that trained SAEs do not reach this idealized decoder and instead settle into a fragmented regime they term \emph{dilution}, in which a single manifold is distributed across many partially redundant atoms rather than a compact group. Yet they do not provide an analysis of why this phenomenon occurs, and the training objective drives SAEs toward fragmented solutions rather than toward the ideal decoder. We answer this question by proving that the $\ell_1$ SAE objective actively rejects the subspace basis and descends toward this fragmented covering solution.

\paragraph{Superposition.}The superposition hypothesis explains why neuron-level interpretations often fail: models can represent many sparse features in a limited activation space, producing polysemantic units and interference \citep{Elhage_Hume_Olsson_Schiefer_Henighan_Kravec_Hatfield-Dodds_Lasenby_Drain_Chen_etal._2022}. This motivates dictionary learning methods that aim to recover a more feature-aligned basis than neurons. Sparse autoencoders (SAEs) are a widely used unsupervised approach in this line, learning sparse latents whose decoded directions reconstruct activations and often admit semantic interpretations \citep{Bricken_Templeton_Batson_Chen_Jermyn_Conerly_Turner_Anil_Denison_Askell_etal._2023}. Yet their training is challenging, as they usually require many hidden states from the LLM to train, motivating the search for more efficient, but less interpretable, methods \citep{Leask_Nanda_Moubayed_2025}.

\paragraph{Standard SAEs.}Most SAE variants implement a vector dictionary: activations are reconstructed as sparse combinations of decoder directions, with sparsity enforced by mechanisms such as ReLU \citep{Bricken_Templeton_Batson_Chen_Jermyn_Conerly_Turner_Anil_Denison_Askell_etal._2023}, JumpReLU \citep{Rajamanoharan_Lieberum_Sonnerat_Conmy_Varma_Kramár_Nanda_2024}, gating \citep{Rajamanoharan_Conmy_Smith_Lieberum_Varma_Kramár_Shah_Nanda_2024}, Top-$k$ \citep{Gao_Biderman_Black_Golding_Hoppe_Foster_Phang_He_Thite_Nabeshima_etal._2020}, BatchTopK \citep{Bussmann_Leask_Nanda_2024}, and Matryoshka \citep{Bussmann_Nabeshima_Karvonen_Nanda_2025}. Recent scaling analysis shows that when features have manifold structure, SAEs may allocate many vectors to cover a single manifold, resulting in far fewer distinct learned features than the model width would suggest \citep{Michaud_Gorton_McGrath_2025}. These limitations motivate moving beyond vector vectors toward subspace-level features, using structured sparsity and low-rank control to represent a single factor as a compact learned subspace rather than a cluster of directions \citep{Yuan_Lin_2006, Candès_Recht_2009, Theodosis_Tolooshams_Tankala_Tasissa_Ba_2022}.

\section{Problem Formulation and Preliminaries}
\label{sec:problem}
\paragraph{Notation.}We use bold-faced font to denote vectors and matrices, e.g., $\boldsymbol{x} \in \mathbb{R}^d$ and $\boldsymbol{X} \in \mathbb{R}^{a\times b}$. Furthermore, we use calligraphic notations to denote a set, e.g., $\mathcal{X}$. We let $[n] := \{1, 2, \cdots, n\}$. For a vector $\boldsymbol{x}\in\mathbb{R}^d$, $[\boldsymbol{x}]_i\in \mathbb{R}$ is the $i$-th element of it, $\|\boldsymbol{x}\|_2$ denotes its Euclidean norm. For a matrix $\boldsymbol{X} \in \mathbb{R}^{d\times d}$, we let $\|\boldsymbol{X}\|_2$ be its spectral norm and $\mathrm{tr}(\boldsymbol{X})$ denotes its trace.
For a linear subspace $\mathcal{V} \subset \mathbb{R}^d$ with $\operatorname{dim}(\mathcal{V})= d^\prime < d$, let $\boldsymbol{V}\in \mathbb{R}^{d \times d^\prime}$ be an orthonormal basis and $\boldsymbol{P}_{\mathcal{V}} = \boldsymbol{V}\boldsymbol{V}^\top$ be the orthogonal projector onto $\mathcal{V}$, and define the distance of a point $\boldsymbol{x}$ from subspace $\mathcal{V}$  as 
$
\operatorname{dist}(\boldsymbol{x},\mathcal{V}) := \inf_{\boldsymbol{v} \in \mathcal{V}} \|\boldsymbol{x}-\boldsymbol{v}\|_2
$. The unit sphere in $\mathbb{R}^d$ is denoted by $\mathbb{S}^{d-1} := \{\boldsymbol{u} \in \mathbb{R}^d : \|\boldsymbol{u}\|_2=1\}$.

With the notation equipped, we now define our data generation process based on the multi-dimensional superposition hypothesis on LLM representations~\citep{Modell_Rubin-Delanchy_Whiteley_2025, Engels_Michaud_Liao_Gurnee_Tegmark_2025}. Appendix~\ref{apx:low-dim-validation} provides empirical evidence that this feature structure arises intrinsically in representations learned by real-world LLMs, while Appendix~\ref{apx:clusters} demonstrates that it can be recovered by SAEs.

\paragraph{Multi-dimensional superposition hypothesis.}We model an activation $\boldsymbol{h}\in\mathbb{R}^d$ of an LLM at a fixed layer and token position (e.g., a residual stream vector or MLP) as a superposition of a small number of low-dimensional features.
Let $\mathcal{I}=[m]$ index the features and each feature $i\in\mathcal{I}$ has a corresponding subspace $\mathcal{V}_i \subset \mathbb{R}^d$ with intrinsic dimension $1\le d_i \le d$ and an orthonormal basis
$\boldsymbol{V}_i\in\mathbb{R}^{d\times d_i}$. The following hypothesis models our generation process for $\boldsymbol{h}$.

\label{sec:mdsh}

\begin{hypothesis}[Multi-dimensional Superposition Hypothesis]
\label{ass:mdsuperposition}
There exist intrinsic dimensions $\{d_i\ge 1\}_{i=1}^m$, matrices
$\{\boldsymbol{V}_i\in\mathbb{R}^{d\times d_i}\}_{i=1}^m$, feature coordinates $\{\boldsymbol{z}_i\in\mathbb{R}^{d_i}\}_{i=1}^m$, and noise $\boldsymbol{\xi}\in\mathbb{R}^d$, such that, for each $\boldsymbol{h}\in \mathbb{R}^d$, we have
\begin{equation}
\boldsymbol{h} \;=\; \sum_{i=1}^{m} \boldsymbol{V}_i \boldsymbol{z}_i \;+\; \boldsymbol{\xi}\vspace{-0.1cm}
\label{eq:mdsh-representation}
\end{equation}
with the following holds:
\begin{enumerate}
\item \textbf{(Low-rank support)} $\operatorname{dim}(\boldsymbol{V}_i)=d_i$
\item \textbf{(Sparsity)} There exists $1\le s \ll m$ such that $|\{i:\ \boldsymbol{z}_i\neq \boldsymbol{0}\}|\le s$.
\item \textbf{(Coherence)} There exists $0\le \mu \ll 1$ such that for all $i\neq j$, $\boldsymbol{V}_i$ and $\boldsymbol{V}_j$ are $\mu$-coherent, meaning $|\boldsymbol{u}^\top \boldsymbol{v}| \le \mu$ for all unit vectors $\boldsymbol{u} \in \mathrm{col}(\boldsymbol{V}_i)$ and $\boldsymbol{v} \in \mathrm{col}(\boldsymbol{V}_j)$. Furthermore, due to the orthonormality of the columns within each block $\boldsymbol{V}_i$, the sub-coherence, defined as the maximum coherence within each block, is zero. Assume the necessary condition for block recovery holds, i.e., $s<\frac{1}{2}(1+\frac{1}{\mu})$ \citep{eldar2010block}.
\end{enumerate}
\end{hypothesis}
We denote the resulting distribution over hidden states by $\mathcal{D}$, and assume access to i.i.d. samples $\mathcal{S}:=\{\boldsymbol{h}_i\}_{i=1}^n$ with  $\boldsymbol{h}_i \sim \mathcal{D}$. Hypothesis~\ref{ass:mdsuperposition} raises two non-convex inference problems:
(i) identifying the active set for an $\boldsymbol{h}$, i.e., $\{i:\boldsymbol{z}_i \neq \boldsymbol{0}\}$, and (ii) identifying basis $\boldsymbol{V}_i$ for the active features' subspaces. A sparse autoencoder solves these two inference problems through its \emph{encoder} and \emph{decoder}.
To analyze a single feature in isolation, we write $\mathcal{M}_i(t):=\{\boldsymbol{V}_i\boldsymbol{z}_i:\ \|\boldsymbol{V}_i\boldsymbol{z}_i\|_2=t\}$ for the \emph{noiseless single-feature slice} of feature $i$, i.e., the radius-$t$ sphere inside the subspace $\mathcal{V}_i$ (with $\boldsymbol{V}_i$ orthonormal, equivalently $\{t\boldsymbol{u}:\boldsymbol{u}\in\mathbb{S}^{d-1}\cap\mathcal{V}_i\}$). The lower bounds and the recovery guarantees are stated over $\mathcal{M}_i(t)$.

\paragraph{Standard sparse autoencoders.}A \emph{standard sparse autoencoder (SAE)} with $m$ latents is parameterized by an encoder $\boldsymbol{E}\in\mathbb{R}^{d\times m}$ and a decoder $\boldsymbol{D}\in\mathbb{R}^{d\times m}$.
\label{sec:standard-sae}
Define the latent activations as $a(\boldsymbol{h}) := \sigma(\boldsymbol{E}^\top\boldsymbol{h})$ (for simplicity, we drop biases). The SAE decoder associates each latent with a single direction $\boldsymbol{d}_j\in\mathbb{R}^d$, so that $\hat{\boldsymbol{h}}=\sum_j [a(\boldsymbol{h})]_j \boldsymbol{d}_j$.
For analysis, we take the general objective form to be the standard reconstruction and $\ell_1$ sparsity objective:
\begin{equation}
\min_{\boldsymbol{E},\boldsymbol{D}}
\ 
\sum_{i=1}^n\Big[\big\|\boldsymbol{h}_i-\boldsymbol{D}\,a(\boldsymbol{h_i})\big\|_2^2
\;+\; \lambda \,\|a(\boldsymbol{h_i})\|_1\Big]\qquad \text{s.t.} \quad\|\boldsymbol{d}_j\|_2=1\quad \forall \boldsymbol{d}_j  \in \boldsymbol{D}.
\label{eq:standard-objective}
\end{equation}
These models are popularly used to solve the inference problems raised by Hypothesis~\ref{ass:mdsuperposition}: identifying the active set is realized by $a(\boldsymbol{h})$, and identifying the basis $\boldsymbol{V}_i$ is realized by $\boldsymbol{D}$.
However, as we show in Section~\ref{sec:necessity}, this formulation fails to recover the correct $\boldsymbol{V}_i$s and instead learns many redundant, nearly collinear directions that fragment each multi-dimensional feature. Motivated by this, in Section~\ref{sec:sasa} we propose a solution to allevite this  issue and in Section~\ref{sec:sample-compelxity} we show that it leads to improved sample-complexity guarantees for recovering the underlying feature structure, thus significantly reducing the end-to-end training time.
\section{On the Necessity of Subspace Learning}
\label{sec:necessity}

Standard SAEs assign each latent a single decoder direction, treating every semantic feature as one-dimensional. When a feature is inherently multi-dimensional and irreducible, this assumption carries two consequences that we make precise in this section. The first (Section~\ref{sec:covering}) is purely geometric and states that any decoder restricted to co-activate at most $k < d_i$ directions per input feature must allocate a number of atoms that grows exponentially in $(d_i - k)/k$, causing splitting, and this count collapses to $d_i$ only once $k$ reaches $d_i$. The second--and perhaps the more important one--is that splitting is not merely a geometric possibility but the solution that the standard SAE objective actively prefers (Section~\ref{sec:instability}). The true $d_i$-dimensional basis is unstable under the objective, and there exists a continuous path from it to a strictly lower population risk, while moving toward an overcomplete covering solution strictly decreases that risk. Together, these results give a causal and geometric account of feature splitting. The single-direction decoder assumption renders splitting both representationally necessary and dynamically preferred, which in turn motivates the subspace design of Section~\ref{sec:sasa}. We develop these two consequences in turn. We begin with the geometric one, asking how many decoder directions a standard dictionary must allocate to reconstruct every realization of a single multi-dimensional feature, a count that grows without bound as the demanded reconstruction error shrinks. All proofs are in Appendix~\ref{apx:theory-necessity}.

\subsection{The subspace-covering phase transition}
\label{sec:covering}

Fix feature $i \in \mathcal{I}$ with intrinsic dimension $d_i \ge 2$. On the noiseless single-feature slice $\mathcal{M}_i(t)$, i.e., $\boldsymbol{h} = \boldsymbol{V}_i\boldsymbol{z}_i \in \mathcal{V}_i$ with $\|\boldsymbol{h}\|_2 = t$, a decoder with co-activation budget $k$ represents each input via the span of at most $k$ of its $N$ columns. We formalize the minimum $N$ required for uniform $\varepsilon$-approximation. The exponent $(d_i-k)/k$ below decays in $k$ and reaches zero at $k=d_i$, so a decoder allowed to co-activate $k\ge d_i$ atoms incurs no splitting and the bound bites precisely in the small-budget regime. As we show in Section~\ref{sec:instability}, the standard SAE objective drives a trained dictionary into the extreme $k=1$ (line-covering) case, which is therefore the operative regime throughout.

\begin{definition}[$k$-sparse covering number]
\label{def:k-cover}
For $1 \le k \le d_i$ and $0 < \varepsilon < t$, define
\[
L_i^{(k)}(\varepsilon) \;:=\; \min\Bigl\{N : \exists\,\boldsymbol{d}_1,\dots,\boldsymbol{d}_N \in \mathbb{S}^{d-1}\;\text{s.t.}\;\sup_{\boldsymbol{h} \in \mathcal{M}_i(t)}\;\min_{\substack{J \subseteq [N]\\|J| \le k}} \operatorname{dist}\bigl(\boldsymbol{h},\operatorname{span}\{\boldsymbol{d}_j\}_{j \in J}\bigr) \le \varepsilon\Bigr\}.
\]
We say that feature $i$ \emph{splits at budget $k$} whenever $L_i^{(k)}(\varepsilon) > k$.
\end{definition}

\noindent
When $L_i^{(k)}(\varepsilon) > k$, the dictionary devotes more than $k$ columns to feature $i$ and distinct inputs route to distinct subsets, fragmenting the feature across decoder atoms. To ensure the feature is genuinely $d_i$-dimensional, we assume a mild richness condition, stated using the geodesic ball $B_{\gamma}(\boldsymbol{u}_0,\rho)$ on the unit sphere centered at $\boldsymbol{u}_0$ with angular radius $\rho$.

\begin{assumption}[Local richness]
\label{ass:local-richness}
Let $t>0$, $\dim(\mathcal{V}_i)=d_i\ge 2$. There exists $\boldsymbol{u}_0\in \mathcal{V}_i\cap\mathbb{S}^{d-1}$ and $\rho\in(0,\pi)$ such that
\[
t\cdot \big(B_{\gamma}(\boldsymbol{u}_0,\rho)\cap \mathcal{V}_i\big)\subseteq \mathcal{M}_i(t),
\]
\end{assumption}

We next use this local cap to lower bound how many $k$-dimensional spans are needed to cover the feature slice.

\begin{theorem}[Covering number of feature subspace]
\label{thm:line-covering}
Let $d_i \ge 2$ and Assumption~\ref{ass:local-richness} hold. For every $1 \le k < d_i$ and $\varepsilon \in (0, t\sin\rho)$,
\begin{equation}\label{eq:cover-lb}
L_i^{(k)}(\varepsilon) \;\ge\; C\!\left(\frac{t}{\varepsilon}\right)^{(d_i - k)/k},
\end{equation}
where $C = C(d_i,k,\rho) > 0$. When $k \ge d_i$, any orthonormal basis of $\mathcal{V}_i$ yields $L_i^{(d_i)}(0) = d_i$ with zero error and no splitting.
\end{theorem}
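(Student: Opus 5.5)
The plan is a volume (measure) argument on the unit sphere of $\mathcal{V}_i$. We compare the measure of the cap guaranteed by Assumption~\ref{ass:local-richness} with the total measure that all the $k$-dimensional spans can $\varepsilon$-cover. Fix a dictionary $\boldsymbol{d}_1,\dots,\boldsymbol{d}_N$ achieving the covering in Definition~\ref{def:k-cover}. There are at most $\sum_{j\le k}\binom{N}{j}\le c_k N^k$ candidate subspaces $W_J=\operatorname{span}\{\boldsymbol{d}_j\}_{j\in J}$ with $|J|\le k$, and every point of $t\cdot(B_\gamma(\boldsymbol{u}_0,\rho)\cap\mathcal{V}_i)$ lies within $\varepsilon$ of at least one of them.

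\textbf{Step 1: reduction to subspaces inside $\mathcal{V}_i$.} For $\boldsymbol{h}\in\mathcal{V}_i$ and $\boldsymbol{w}\in W_J$, the decomposition $\boldsymbol{h}-\boldsymbol{w}=(\boldsymbol{h}-\boldsymbol{P}_{\mathcal{V}_i}\boldsymbol{w})+(\boldsymbol{P}_{\mathcal{V}_i}\boldsymbol{w}-\boldsymbol{w})$ is orthogonal. Hence
\[
\operatorname{dist}(\boldsymbol{h},W_J)\ge\operatorname{dist}(\boldsymbol{h},W_J'),\qquad W_J':=\boldsymbol{P}_{\mathcal{V}_i}W_J\subseteq\mathcal{V}_i,\quad \dim W_J'\le k.
\]
So we may assume every covering subspace lives in $\mathcal{V}_i\cong\mathbb{R}^{d_i}$. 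Dividing by $t$, each unit vector $\boldsymbol{u}$ of the cap satisfies $\|\boldsymbol{P}_{(W_J')^\perp}\boldsymbol{u}\|_2\le\delta:=\varepsilon/t$ for some $J$.

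\textbf{Step 2: tube measure bound.} Let $\sigma$ be the normalized uniform measure on $\mathbb{S}^{d_i-1}$ and let $W\subseteq\mathbb{R}^{d_i}$ have dimension at most $k$. Enlarge $W$ to dimension exactly $k$, which only enlarges the tube. For $\boldsymbol{u}\sim\sigma$, the quantity $\|\boldsymbol{P}_{W^\perp}\boldsymbol{u}\|^2$ is $\mathrm{Beta}\big(\tfrac{d_i-k}{2},\tfrac{k}{2}\big)$. Its density is bounded by $c\,x^{(d_i-k)/2-1}$ near $0$; for the other endpoint one can use the crude bound $\le 1$ together with the restriction $\delta<\sin\rho\le 1$. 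This gives
\[
\sigma\{\boldsymbol{u}:\|\boldsymbol{P}_{W^\perp}\boldsymbol{u}\|\le\delta\}\le c_1(d_i,k)\,\delta^{d_i-k}.
\]
Alternatively, one can get the same bound by integrating in coordinates adapted to $W$. I expect this to be the main technical step, chiefly getting a clean constant uniformly in $\delta$. The fallback is the polar-coordinate computation of the volume of the $\delta$-neighbourhood of a great $(k-1)$-subsphere, which is $\asymp\delta^{d_i-k}$.

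\textbf{Step 3: counting.} The cap $B_\gamma(\boldsymbol{u}_0,\rho)\cap\mathcal{V}_i$ has measure $c_2(d_i,\rho)>0$. A union bound then yields
\[
c_2\le c_kN^k\,c_1\,\delta^{d_i-k},
\]
that is, $N\ge\big(c_2/(c_kc_1)\big)^{1/k}(t/\varepsilon)^{(d_i-k)/k}$. This is \eqref{eq:cover-lb} with $C=C(d_i,k,\rho)$. The hypothesis $\varepsilon<t\sin\rho$ only guarantees the regime is nontrivial, since a single $k$-subspace cannot swallow the whole cap.

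\textbf{Step 4: the case $k\ge d_i$.} Take $\boldsymbol{d}_j$ to be the columns of $\boldsymbol{V}_i$. Every $\boldsymbol{h}\in\mathcal{M}_i(t)\subseteq\mathcal{V}_i$ lies in their span, so the error is zero with $N=d_i$. Conversely, fewer than $d_i$ directions cannot span $\mathcal{V}_i$, so some $\boldsymbol{h}$ on the slice has positive distance from their span. Hence $L_i^{(d_i)}(0)=d_i$, and since $L_i^{(d_i)}(0)=d_i\not>d_i$, the feature does not split at budget $d_i$.
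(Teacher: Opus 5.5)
Your proposal is correct and follows the paper's proof essentially step for step. You project the covering spans into $\mathcal{V}_i$ (the paper's projection lemma), bound the measure of each $\varepsilon$-tube around a $k$-dimensional subspace by a constant times $(\varepsilon/t)^{d_i-k}$, lower bound the measure of the cap, and union-bound over at most $kN^k$ index sets before taking $k$-th roots. The differences are cosmetic: you bound the tube via the $\mathrm{Beta}\bigl(\tfrac{d_i-k}{2},\tfrac{k}{2}\bigr)$ law in the variable $\varepsilon/t$, where the paper uses a polar-coordinate bound in the angle $\arcsin(\varepsilon/t)$ together with $\sin\beta\ge\tfrac{2}{\pi}\beta$, and you explicitly handle the case $\dim W_J'<k$ and the lower bound $L_i^{(d_i)}(0)\ge d_i$, which the paper leaves implicit.
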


\noindent
The following proposition extends the same bound to the full superposition setting of Hypothesis~\ref{ass:mdsuperposition}.

\begin{proposition}[Superposition covering number extension]
\label{prop:superposition}
Under Hypothesis~\ref{ass:mdsuperposition} with $\mu(2s-1) < 1$, $\|\boldsymbol{\xi}\|_2 \le \eta$, and $\|\boldsymbol{z}_j\|_2 \le t$, the bound~\eqref{eq:cover-lb} holds with $\varepsilon$ replaced by $\varepsilon + \eta + \mu st$, provided the effective error remains below $t\sin\rho$.
\end{proposition}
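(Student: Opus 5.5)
Our plan is to reduce the superposition case to the single-feature slice and then invoke Theorem~\ref{thm:line-covering} with a larger error.

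We interpret the statement as follows. Suppose a dictionary $\boldsymbol{d}_1,\dots,\boldsymbol{d}_N$ achieves uniform error $\varepsilon$ on all superposed activations $\boldsymbol{h}=\sum_j \boldsymbol{V}_j\boldsymbol{z}_j+\boldsymbol{\xi}$, with $k$-sparse spans, and suppose feature $i$ is active among at most $s$ features. Then the same dictionary must $(\varepsilon+\eta+\mu s t)$-cover the clean slice $\mathcal{M}_i(t)$. Applying \eqref{eq:cover-lb} to this covering would give $N\ge C\,(t/(\varepsilon+\eta+\mu st))^{(d_i-k)/k}$.

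The reduction runs as follows.

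\textbf{Step 1.} Fix $\boldsymbol{x}=\boldsymbol{V}_i\boldsymbol{z}_i\in\mathcal{M}_i(t)$. Consider the superposed input $\boldsymbol{h}=\boldsymbol{x}+\boldsymbol{r}+\boldsymbol{\xi}$, where $\boldsymbol{r}=\sum_{j\in S\setminus\{i\}}\boldsymbol{V}_j\boldsymbol{z}_j$ collects the co-active features.

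\textbf{Step 2.} Distance to a fixed span is $1$-Lipschitz. Hence for the span $W_J$ that $\varepsilon$-covers $\boldsymbol{h}$,
\[
\operatorname{dist}(\boldsymbol{x},W_J)\le\varepsilon+\|\boldsymbol{\xi}\|_2+\operatorname{dist}\text{-contribution of }\boldsymbol{r}.
\]
The noise term is at most $\eta$.

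\textbf{Step 3.} The cross-feature term should be charged only at order $\mu s t$, not $\|\boldsymbol{r}\|_2$. We would project onto $\mathcal{V}_i$. Write $\boldsymbol{P}_i\boldsymbol{h}=\boldsymbol{x}+\boldsymbol{P}_i\boldsymbol{r}+\boldsymbol{P}_i\boldsymbol{\xi}$. By $\mu$-coherence, $\|\boldsymbol{P}_i\boldsymbol{V}_j\boldsymbol{z}_j\|_2\le\mu\|\boldsymbol{z}_j\|_2\le\mu t$, so $\|\boldsymbol{P}_i\boldsymbol{r}\|_2\le(s-1)\mu t\le\mu st$.

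The covering of $\boldsymbol{h}$ by $W_J$ then has to be transported to a covering of $\boldsymbol{x}$ by some $k$-dimensional subspace built from the dictionary, with controlled error. Two routes seem plausible:
\begin{itemize}
\item use the projected span $\boldsymbol{P}_iW_J$, which is still at most $k$-dimensional, inside $\mathcal{V}_i$;
\item choose the co-active coordinates adversarially, e.g.\ $\boldsymbol{r}$ and $-\boldsymbol{r}$ (both admissible by symmetry of the constraint $\|\boldsymbol{z}_j\|_2\le t$), and average the two covering approximations.
\end{itemize}

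\textbf{Step 4.} The covering lower bound in Theorem~\ref{thm:line-covering} holds for arbitrary unit vectors, and its proof presumably only uses that $\mathcal{M}_i(t)$ is covered by at most $\binom{N}{k}$ $k$-dimensional subspaces. So a family of $k$-dimensional subspaces indexed by $k$-subsets of $N$ atoms suffices, even after projection, for the bound to transfer. The condition that the effective error be below $t\sin\rho$ is exactly the hypothesis needed to invoke the theorem.

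\textbf{Main obstacle.} The hard part is Step 3: showing the interference costs only $\mu st$ and not the full $\|\boldsymbol{r}\|_2$. Projecting onto $\mathcal{V}_i$ and covering $\boldsymbol{P}_i\boldsymbol{h}$ by $\boldsymbol{P}_iW_J$ handles this cleanly, because projection is a contraction on distances: $\operatorname{dist}(\boldsymbol{P}_i\boldsymbol{h},\boldsymbol{P}_iW_J)\le\operatorname{dist}(\boldsymbol{h},W_J)\le\varepsilon$. The triangle inequality then gives
\[
\operatorname{dist}(\boldsymbol{x},\boldsymbol{P}_iW_J)\le\varepsilon+\eta+\mu st.
\]
The condition $\mu(2s-1)<1$ is used to guarantee that the superposed inputs are well-posed (block identifiability), so that feature $i$'s slice is genuinely present in $\mathcal{D}$.

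Finally, we would check that the count argument in the proof of Theorem~\ref{thm:line-covering} applies to the projected subspaces $\boldsymbol{P}_iW_J$, which are no longer spans of unit atoms. The proof is a volume/cap-counting bound on neighborhoods of $k$-dimensional subspaces intersected with the cap $B_\gamma(\boldsymbol{u}_0,\rho)$, and that bound does not care how the subspaces arose.
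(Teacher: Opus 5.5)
Your proposal is correct and is essentially the paper's proof. The paper likewise passes from a point of $\operatorname{span}\{\boldsymbol{d}_j\}_{j\in J}$ within $\varepsilon$ of $\boldsymbol{h}$ to the at-most-$k$-dimensional projected span $\boldsymbol{P}_{\mathcal{V}_i}\operatorname{span}\{\boldsymbol{d}_j\}_{j\in J}$ (via Lemma~\ref{lem:proj}), bounds the in-subspace interference by $(s-1)\mu t+\eta\le\mu st+\eta$ using coherence and $\|\boldsymbol{z}_j\|_2\le t$, and then reruns the tube-counting argument of Theorem~\ref{thm:line-covering}, which, as you note, never uses that the covering subspaces are spans of unit atoms. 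The averaging alternative in your Step~3 is unnecessary, and it would in any case combine two subsets $J_+\cup J_-$ into spans of dimension up to $2k$, worsening the exponent; the projection route you settle on is the right one.
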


\subsection{The SAE loss landscape rejects the basis solution}
\label{sec:instability}
The covering bound of Section~\ref{sec:covering} is purely geometric. It counts \emph{how many} atoms are needed to cover the feature slice, yet it reveals nothing about which dictionary a training run actually selects. To close that gap, we turn from counting to optimization, studying the landscape that the standard SAE objective of Equation~\eqref{eq:standard-objective} induces over the decoder alone. Recall that this objective penalizes reconstruction error together with an $\ell_1$ term on the codes while constraining the decoder columns to unit norm. For a fixed decoder $\boldsymbol{D}$, the smallest loss attainable on an input $\boldsymbol{h}$ over all codes is the value of the inner $\ell_1$-regularized reconstruction problem $\min_{a(\boldsymbol{h})}\frac{1}{2}\|\boldsymbol{h} - \boldsymbol{D}a(\boldsymbol{h})\|_2^2 + \lambda\|a(\boldsymbol{h})\|_1$, and averaging this value over $\boldsymbol{h}\sim \mathcal{D}$ defines the decoder risk $R(\boldsymbol{D}) = \underset{\boldsymbol{h}\sim \mathcal{D}}{\mathbb{E}}[\min_{a(\boldsymbol{h})}\frac{1}{2}\|\boldsymbol{h} - \boldsymbol{D}a(\boldsymbol{h})\|_2^2 + \lambda\|a(\boldsymbol{h})\|_1]$ over the unit-norm column manifold $\mathcal{D}_m = \{\boldsymbol{D} \in \mathbb{R}^{d \times m} : \|\boldsymbol{d}_j\|_2 = 1\;\forall j\}$. This risk is the exact value of the SAE objective at its best code on each input. Any encoder, whether amortized or solved exactly, must produce codes whose loss is at least this inner minimum, and so $R(\boldsymbol{D})$ lower bounds the training loss of every SAE that uses the decoder $\boldsymbol{D}$. As we show, on a genuinely multi-dimensional feature, the objective does not merely tolerate splitting but \emph{actively drives} the decoder $\boldsymbol{D}$ into it, rejecting the true basis as a solution. To make this precise we specialize the decoder risk $R(\boldsymbol{D})$ to the single-feature slice, which yields
\begin{equation}\label{eq:risk}
R(\boldsymbol{D}) \;:=\; \underset{\boldsymbol{h}\sim \mathcal{D}}{\mathbb{E}}\Bigl[\min_{a(\boldsymbol{h}) \in \mathbb{R}^m}\;\tfrac{1}{2}\|\boldsymbol{h} - \boldsymbol{D}a(\boldsymbol{h})\|_2^2 + \lambda\|a(\boldsymbol{h})\|_1\Bigr],\qquad \boldsymbol{h} = t\boldsymbol{u},\;\;\boldsymbol{u} \sim \mathrm{Uniform}(\mathbb{S}^{d_i-1} \cap \mathcal{V}_i).
\end{equation}
Now we analyze the risk at the true basis decoder $\boldsymbol{D}_{\mathrm{basis}} = [\boldsymbol{v}_1,\dots,\boldsymbol{v}_{d_i}]$. Because its columns are orthonormal, the loss decouples into $d_i$ independent scalar soft-thresholding problems, each solved by $a_k^* = \operatorname{sign}(u_k)(t|u_k| - \lambda)_+$ (Lemma~\ref{lem:residual}). Consider the full-activation set $\mathcal{A} = \{\boldsymbol{u} \in \mathbb{S}^{d_i-1} : |u_k| > \lambda/t\;\forall\,k\}$, a positive-measure region on which every coordinate clears the threshold and is therefore shrunk so that its residual is exactly $\operatorname{sign}(u_k)\lambda$. Summing these orthogonal per-coordinate residuals across all $d_i$ directions, the residual collapses to
\begin{equation}\label{eq:residual}
r(\boldsymbol{h}) = \lambda\sum_{k=1}^{d_i}\operatorname{sign}(u_k)\,\boldsymbol{v}_k.
\end{equation}
On each sign orthant of $\mathcal{A}$ the residual $r(\boldsymbol{h})$ is constant, with norm $\|r(\boldsymbol{h})\|_2 = \lambda\sqrt{d_i}$, which is above the residual $\lambda$ that a single aligned atom with $\boldsymbol{h}$ would leave. Because $\|r(\boldsymbol{h})\|_2$ is the maximal correlation of $r(\boldsymbol{h})$ with any unit direction, a single non-basis direction aligned with that orthant attains correlation $\lambda\sqrt{d_i} > \lambda$ at every point of it, so as we show in Theorem~\ref{thm:violation}, the inactivity certificate for a non-basis atom fails on a set of positive measure.
\begin{theorem}[Dual certificate violation]
\label{thm:violation}
Let $d_i \ge 2$ and $\lambda \in (0,t)$. For any $\widetilde{\boldsymbol{d}} \in \mathrm{col}(\boldsymbol{D}_{\mathrm{basis}})$ with basis coordinates $\boldsymbol{c} = (c_1,\dots,c_{d_i})^\top$, $\|\boldsymbol{c}\|_2 = 1$, $\widetilde{\boldsymbol{d}} \neq \pm\boldsymbol{v}_k\;\forall k$, for $\boldsymbol{h}= t\boldsymbol{u}$,  the inactivity certificate for $\widetilde{\boldsymbol{d}}$ is violated on the aligned orthant $\mathcal{U}_{\boldsymbol{c}} = \{\boldsymbol{u} \in \mathcal{A} : \operatorname{sign}(u_k) = \operatorname{sign}(c_k)\}$:
\begin{equation}\label{eq:violation}
\widetilde{\boldsymbol{d}}^\top r(\boldsymbol{h}) = \lambda\|\boldsymbol{c}\|_1 > \lambda.
\end{equation}
\end{theorem}

\noindent
This certificate violation means that adding a non-basis decoder direction $\widetilde{\boldsymbol{d}}$ can lower the risk on a positive-measure set of activations. We next turn this local violation into a strict risk decrease and then into instability of the basis dictionary.

\begin{proposition}[Strict risk reduction]
\label{prop:reduction}
Under the assumptions of Theorem~\ref{thm:violation}, let $\boldsymbol{D}=[\boldsymbol{D}_{\mathrm{basis}},\,\widetilde{\boldsymbol{d}}]$. Then
\begin{equation}\label{eq:strict-risk-gap}
R(\boldsymbol{D}_{\mathrm{basis}})-R(\boldsymbol{D})
\;\ge\;
\frac{\lambda^2}{2}\bigl(\|\boldsymbol{c}\|_1-1\bigr)^2
\;>\;0.
\end{equation}
\end{proposition}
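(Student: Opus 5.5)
The plan is to prove a \emph{pointwise} improvement on the aligned orthant $\mathcal{U}_{\boldsymbol{c}}$ of Theorem~\ref{thm:violation} with an explicit one-dimensional line search, and then integrate. Two observations frame the argument. First, the decoder $\boldsymbol{D}=[\boldsymbol{D}_{\mathrm{basis}},\widetilde{\boldsymbol{d}}]$ contains $\boldsymbol{D}_{\mathrm{basis}}$ as a sub-dictionary. Setting the new coefficient to $0$ therefore shows that the inner minimum for $\boldsymbol{D}$ is at most the inner minimum for $\boldsymbol{D}_{\mathrm{basis}}$ for \emph{every} $\boldsymbol{h}$. So the integrand of $R(\boldsymbol{D}_{\mathrm{basis}})-R(\boldsymbol{D})$ is nonnegative everywhere, and it suffices to lower bound it on $\mathcal{U}_{\boldsymbol{c}}$. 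Second, on $\mathcal{A}$ the basis-optimal code is the soft-thresholded one of Lemma~\ref{lem:residual}, with residual $r(\boldsymbol{h})$ given by \eqref{eq:residual}.

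\textbf{Pointwise step.} Fix $\boldsymbol{u}\in\mathcal{U}_{\boldsymbol{c}}$ and start from the code $(a^*,0)$, whose loss equals the basis loss $\ell_{\mathrm{basis}}(\boldsymbol{h})$. Perturb only the new coordinate to $\alpha\ge 0$, keeping $a^*$ fixed. Since $\|\widetilde{\boldsymbol{d}}\|_2=1$, the loss becomes
\[
\tfrac12\|r(\boldsymbol{h})-\alpha\widetilde{\boldsymbol{d}}\|_2^2+\lambda\|a^*\|_1+\lambda\alpha
=\ell_{\mathrm{basis}}(\boldsymbol{h})-\alpha\bigl(\widetilde{\boldsymbol{d}}^\top r(\boldsymbol{h})-\lambda\bigr)+\tfrac{\alpha^2}{2}.
\]
By Theorem~\ref{thm:violation}, $\widetilde{\boldsymbol{d}}^\top r(\boldsymbol{h})=\lambda\|\boldsymbol{c}\|_1$. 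Choosing $\alpha=\lambda(\|\boldsymbol{c}\|_1-1)>0$ lowers the loss by exactly $\frac{\lambda^2}{2}(\|\boldsymbol{c}\|_1-1)^2$. The inner minimum for $\boldsymbol{D}$ is at most this value, which gives the pointwise gap. Strictness follows because $\|\boldsymbol{c}\|_2=1$ and $\boldsymbol{c}\ne\pm\boldsymbol{e}_k$ force at least two nonzero coordinates, hence $\|\boldsymbol{c}\|_1>1$.

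\textbf{Integration and technical points.}
\begin{enumerate}
\item \emph{Zero coordinates of $\boldsymbol{c}$.} If some $c_k=0$, the condition $\operatorname{sign}(u_k)=\operatorname{sign}(c_k)$ would make the orthant empty. I would read the orthant as constraining only the coordinates with $c_k\neq0$. The identity $\widetilde{\boldsymbol{d}}^\top r=\lambda\sum_k c_k\operatorname{sign}(u_k)=\lambda\|\boldsymbol{c}\|_1$ still holds there.
\item \emph{Positive measure of the orthant.} One needs $\mathcal{U}_{\boldsymbol{c}}$ to have positive measure, which requires $\mathcal{A}\ne\emptyset$, i.e.\ $\lambda/t<1/\sqrt{d_i}$. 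Under that condition each orthant piece of $\mathcal{A}$ is a nonempty open subset of the sphere. This should be recorded as the operative range of $\lambda$ inside $(0,t)$.
\item \emph{The measure factor.} Integrating the pointwise bound yields
\[
R(\boldsymbol{D}_{\mathrm{basis}})-R(\boldsymbol{D})\ \ge\ \Pr(\boldsymbol{u}\in\mathcal{U}_{\boldsymbol{c}})\cdot\frac{\lambda^2}{2}\bigl(\|\boldsymbol{c}\|_1-1\bigr)^2\ >\ 0 .
\]
\end{enumerate}

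\textbf{Main obstacle.} The hard part is this last factor. The displayed bound \eqref{eq:strict-risk-gap} omits $\Pr(\mathcal{U}_{\boldsymbol{c}})$, so read literally as a bound on the unconditional expectation it holds only up to that factor. My plan is to prove the pointwise (equivalently, conditional-on-$\mathcal{U}_{\boldsymbol{c}}$) version exactly as stated, and to state the unconditional risk gap with the factor $\Pr(\mathcal{U}_{\boldsymbol{c}})$, which is strictly positive by item 2. If a measure-free constant is wanted, I would try extending the line search off $\mathcal{U}_{\boldsymbol{c}}$: on other orthants of $\mathcal{A}$ one still has $\widetilde{\boldsymbol{d}}^\top r=\lambda\sum_k c_k\operatorname{sign}(u_k)$. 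However, this exceeds $\lambda$ only on some orthants, so I do not expect the factor to disappear in general.
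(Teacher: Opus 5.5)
Your argument is the paper's own: keep the soft-thresholded basis code, line-search only the coefficient on $\widetilde{\boldsymbol{d}}$ to gain $\frac{\lambda^2}{2}(\|\boldsymbol{c}\|_1-1)^2$ pointwise on $\mathcal{U}_{\boldsymbol{c}}$, and use feasibility of $(a^*,0)$ for non-worsening elsewhere. Your caveats are also correct. The paper's proof only concludes $R(\boldsymbol{D}_{\mathrm{basis}})-R(\boldsymbol{D})>0$, never accounts for the factor $\Pr(\mathcal{U}_{\boldsymbol{c}})$, and omits the condition $\lambda/t<1/\sqrt{d_i}$ needed for $\mathcal{A}\neq\emptyset$.

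The factor cannot be dropped. Take $d_i=2$, $\boldsymbol{c}=(1,1)/\sqrt{2}$ and $\lambda=0.7t$, so $\mathcal{A}\neq\emptyset$. Unit-norm columns give $\|\boldsymbol{D}a\|_2\le\|a\|_1$, so every code costs at least $\lambda t-\lambda^2/2$, while the basis cost is at most $t^2/2$. On the two mixed-sign quadrants $|\widetilde{\boldsymbol{d}}^\top r(\boldsymbol{h})|\le\lambda/\sqrt{2}$, so the basis code stays optimal there. The risk gap is therefore at most $\frac14(t-\lambda)^2\approx 0.023\,t^2$, below the claimed $\frac{\lambda^2}{2}(\sqrt{2}-1)^2\approx 0.042\,t^2$. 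Your version with $\Pr(\mathcal{U}_{\boldsymbol{c}})$ is the correct statement.
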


\begin{theorem}[Basis instability]
\label{thm:instability}
Let $2 \le d_i < m$ and $\lambda/t$ sufficiently small. For any $\boldsymbol{D}_0 \in \mathcal{D}_m$ whose first $d_i$ columns form an orthonormal basis of $\mathcal{V}_i$ and an extra column inactive on the all-positive orthant $\mathcal{U}^+$, let $\boldsymbol{D}(\alpha)$ be the path obtained by rotating that column toward $\widetilde{\boldsymbol{d}} = \frac{1}{\sqrt{d_i}}\sum_{k=1}^{d_i}\boldsymbol{v}_k$, and write $f(\alpha)=\boldsymbol{d}_j(\alpha)^\top r(\boldsymbol{h})/\lambda$ for $\boldsymbol{h}=t\boldsymbol{u}$, $\boldsymbol{u}\in\mathcal{U}^+$, with $f$ increasing from $f(0)\le 1$ to $\sqrt{d_i}>1$ and $\alpha_0:=\inf\{\alpha\ge 0:f(\alpha)\ge 1\}$ the boundary-crossing angle. Then $R(\boldsymbol{D}(\alpha))=R(\boldsymbol{D}_0)$ for $\alpha\in[0,\alpha_0]$, while for every $\alpha>\alpha_0$
\begin{equation}\label{eq:descent}
R(\boldsymbol{D}_0) - R(\boldsymbol{D}(\alpha)) \;\ge\; 2^{-(d_i+1)}\lambda^2\bigl(f(\alpha)-1\bigr)^2
  \;>\;0.
\end{equation}
In particular, there is a continuous path from $\boldsymbol{D}_0$ to strictly lower risk, so $\boldsymbol{D}_0$ is not a minimizer of $R$ on $\mathcal{D}_m$.
\end{theorem}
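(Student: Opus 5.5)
We prove Theorem~\ref{thm:instability} in two parts, each comparing the inner $\ell_1$ problem pointwise in $\boldsymbol{h}=t\boldsymbol{u}$ and then integrating over $\boldsymbol{u}$.

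\textbf{Flat segment.} Write $\boldsymbol{D}(\alpha)=[\boldsymbol{D}_{\mathrm{basis}},\boldsymbol{d}_j(\alpha)]$. The other columns of $\boldsymbol{D}_0$ are fixed and assumed inactive, so they can be dropped. For each $\boldsymbol{h}$, the basis-only code $\boldsymbol{a}^*$ of Lemma~\ref{lem:residual}, padded with a zero in the extra coordinate, is feasible for $\boldsymbol{D}(\alpha)$. By the KKT conditions of the convex inner problem, this padded code is optimal exactly when the extra atom satisfies the inactivity certificate $|\boldsymbol{d}_j(\alpha)^\top r(\boldsymbol{h})|\le\lambda$. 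On $\mathcal{A}$ the residual is the orthant-constant vector~\eqref{eq:residual}. Off $\mathcal{A}$ some coordinates are fully thresholded, and there the residual has components $t u_k$ with $|tu_k|\le\lambda$.

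I will need, using $\lambda/t$ small, that along the rotation the first orthant on which the certificate can fail is $\mathcal{U}^+$, where the correlation equals $\lambda f(\alpha)$. This requires the rotation to be chosen, for instance along the geodesic in the plane of $\boldsymbol{d}_j(0)$ and $\widetilde{\boldsymbol{d}}$, so that $|\boldsymbol{d}_j(\alpha)^\top \mathrm{sgn}\text{-vector}|\le\sqrt{d_i}f(\alpha)/\sqrt{d_i}$ on the other orthants. The regions off $\mathcal{A}$ need similar care, and I expect to absorb them by taking $\lambda/t$ small; this is a technical point I expect to need care. Given it, for $\alpha\le\alpha_0$ the certificate holds everywhere, the inner minimum is unchanged pointwise, and $R(\boldsymbol{D}(\alpha))=R(\boldsymbol{D}_0)$.

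\textbf{Strict decrease.} For $\alpha>\alpha_0$ and $\boldsymbol{u}\in\mathcal{U}^+$, let $g=\boldsymbol{d}_j(\alpha)^\top r(\boldsymbol{h})=\lambda f(\alpha)>\lambda$. Perturb the padded optimal code to $(\boldsymbol{a}^*,\beta)$ with $\beta\ge0$; this upper-bounds the new inner minimum. The objective change is
\[
\tfrac12\|r-\beta \boldsymbol{d}_j\|^2-\tfrac12\|r\|^2+\lambda\beta=-\beta(g-\lambda)+\tfrac12\beta^2 .
\]
Minimizing at $\beta=g-\lambda$ gives a decrease of at least $\tfrac12\lambda^2(f(\alpha)-1)^2$. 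This mirrors Proposition~\ref{prop:reduction}; keeping $\boldsymbol{a}^*$ fixed is suboptimal but suffices.

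Everywhere else the inner minimum cannot increase, because the padded code stays feasible. Integrating therefore gives
\[
R(\boldsymbol{D}_0)-R(\boldsymbol{D}(\alpha))\ge \tfrac12\lambda^2(f(\alpha)-1)^2\,\Pr(\boldsymbol{u}\in\mathcal{U}^+).
\]

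It remains to bound $\Pr(\mathcal{U}^+)$. By sign symmetry of the uniform law on $\mathbb{S}^{d_i-1}$, the positive orthant has mass $2^{-d_i}$. The set $\mathcal{A}$ removes only slabs $\{|u_k|\le\lambda/t\}$, each of mass $O(\sqrt{d_i}\,\lambda/t)$. So for $\lambda/t$ sufficiently small, $\Pr(\mathcal{U}^+)\ge 2^{-d_i}\cdot\tfrac12$ (with the $\tfrac12$ after the product), which yields the constant $2^{-(d_i+1)}\cdot$ in~\eqref{eq:descent} up to the $\tfrac12$. I will check whether the $\tfrac12$ from the quadratic is meant to be absorbed, possibly via the slightly sharper bound $\Pr(\mathcal{U}^+)\ge 2^{-d_i}$ together with the $\tfrac12$.

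\textbf{Conclusion and main obstacle.} Continuity of $\alpha\mapsto\boldsymbol{D}(\alpha)$ within $\mathcal{D}_m$ is immediate, since the rotation preserves unit norm. The main obstacle is the flat-segment claim. It requires that no other orthant, and no boundary region outside $\mathcal{A}$, trips the certificate before $\alpha_0$. Otherwise the risk could already decrease earlier, which would still give instability but not the exact equality on $[0,\alpha_0]$.
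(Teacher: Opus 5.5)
Your route is the paper's route: pad the basis code with a zero, perturb only the rotated coefficient to gain $\tfrac{\lambda^2}{2}(f(\alpha)-1)^2$ on $\mathcal{U}^+$, then integrate. The two steps you leave open are exactly where the paper's own proof breaks, and neither can be closed as stated. There is also a hidden hypothesis in a step you treat as routine.

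\textbf{Hidden hypothesis.} Your claim that everywhere else ``the inner minimum cannot increase, because the padded code stays feasible'' compares with $L^*(\boldsymbol{h};\boldsymbol{D}_0)$ only if $L^*(\boldsymbol{h};\boldsymbol{D}_0)=L_{\mathrm{basis}}(\boldsymbol{h})$ off $\mathcal{U}^+$. That means the rotated column must be inactive on the whole slice, but the theorem assumes this only on $\mathcal{U}^+$. The paper's sentence that on $(\mathcal{U}^+)^c$ the $\boldsymbol{D}_0$ optimal code ``is still feasible'' makes the same silent step. The assumption cannot be dropped. Take $d_i=2$ and $\boldsymbol{D}_0=[\boldsymbol{v}_1,\boldsymbol{v}_2,(\boldsymbol{v}_1-\boldsymbol{v}_2)/\sqrt2]$.
\begin{itemize}
\item The third column is orthogonal to the residual $\lambda(\boldsymbol{v}_1+\boldsymbol{v}_2)$ on $\mathcal{U}^+$, so it is inactive there.
\item Along the path, $f(\alpha)=\sqrt2\sin\alpha$, so $\alpha_0=\pi/4$ and $\boldsymbol{D}(\alpha_0)=[\boldsymbol{v}_1,\boldsymbol{v}_2,\boldsymbol{v}_1]$.
\item A duplicated atom never helps under $\ell_1$, so $R(\boldsymbol{D}(\alpha_0))=R(\boldsymbol{D}_{\mathrm{basis}})$.
\item By Proposition~\ref{prop:reduction} on the $(+,-)$ orthant, $R(\boldsymbol{D}_{\mathrm{basis}})>R(\boldsymbol{D}_0)$.
\end{itemize}
So the claimed equality already fails at $\alpha_0$, and by continuity of $R$ the bound \eqref{eq:descent} fails just past $\alpha_0$.

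\textbf{Flat segment.} Even with inactivity on the whole slice, the equality on $[0,\alpha_0]$ is false, so your worry is justified. No choice of rotation fixes it, because the obstruction comes from the arbitrary starting column. Take $d_i=3$ and let $\boldsymbol{d}_j(0)$ have $\mathcal{V}_i$-coordinates $\tfrac{0.99}{3}(1,1,-1)$, with the remaining norm in $\mathcal{V}_i^\perp$.
\begin{itemize}
\item Every basis-residual coordinate is either $\lambda\operatorname{sign}(u_k)$ or $tu_k$ with $|tu_k|\le\lambda$. Hence $|\boldsymbol{d}_j(0)^\top r(\boldsymbol{h})|\le 0.99\lambda$ everywhere, and the column is inactive on the whole slice.
\item We have $f(0)=0.33$.
\item On the $(+,+,-)$ orthant the column's correlation starts at $0.99\lambda$ and is about $1.02\lambda$ at $\alpha=0.1$, where $f(0.1)\approx 0.49<1$.
\end{itemize}
So $R$ drops strictly before $\alpha_0$. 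The paper only shows $L^*(\boldsymbol{h};\boldsymbol{D}(\alpha))\le L^*(\boldsymbol{h};\boldsymbol{D}_0)$ off $\mathcal{U}^+$ and then writes equality. What is actually true is $R(\boldsymbol{D}(\alpha))\le R(\boldsymbol{D}_0)$ on $[0,\alpha_0]$.

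\textbf{The constant.} Your factor $\tfrac12\Pr(\mathcal{U}^+)$ is the honest output of a single-orthant argument. The sharper bound $\Pr(\mathcal{U}^+)\ge 2^{-d_i}$ you plan to check is false: $\mathcal{U}^+\subseteq\mathcal{A}$ removes the slabs $\{0<u_k\le\lambda/t\}$, so $\Pr(\mathcal{U}^+)<2^{-d_i}$. The paper reaches $2^{-(d_i+1)}$ only by asserting $\Pr(\mathcal{U}^+)=2^{-d_i}$.

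The stated constant can still be recovered through antipodal symmetry, which you did not use.
\begin{itemize}
\item Replacing $\boldsymbol{a}$ by $-\boldsymbol{a}$ gives $L^*(-\boldsymbol{h};\boldsymbol{D})=L^*(\boldsymbol{h};\boldsymbol{D})$ for every $\boldsymbol{D}$.
\item So the pointwise gain is even in $\boldsymbol{h}$, and the same $\tfrac{\lambda^2}{2}(f(\alpha)-1)^2$ holds on $-\mathcal{U}^+$, with coefficient $-\lambda(f(\alpha)-1)$.
\item Integrating over $\mathcal{U}^+\cup(-\mathcal{U}^+)$ gives $\lambda^2(f(\alpha)-1)^2\Pr(\mathcal{U}^+)$.
\item This is at least $2^{-(d_i+1)}\lambda^2(f(\alpha)-1)^2$ once $\lambda/t$ is small enough that the slabs remove at most half of the orthant. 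This is where ``$\lambda/t$ sufficiently small'' genuinely enters.
\end{itemize}
Make two hypotheses explicit: the rotated column is inactive on the whole slice, and the other non-basis columns are inactive on $\mathcal{U}^+$, so the $\boldsymbol{D}_0$-residual there is $r(\boldsymbol{h})$. Under these, your argument plus this symmetry proves \eqref{eq:descent} and the non-minimality of $\boldsymbol{D}_0$. The flat-segment claim, however, must be weakened from equality to the inequality $R(\boldsymbol{D}(\alpha))\le R(\boldsymbol{D}_0)$.
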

 
\noindent
Iterating the dual-certificate argument, every new non-basis column captures a patch of $\mathbb{S}^{d_i-1}$ on which the certificate was violated, and the KKT conditions stabilize only once every direction lies within angular distance $O(\lambda/t)$ of some column. Covering the sphere to this resolution forces $N = \Theta((t/\lambda)^{d_i-1})$ atoms, exactly the $k=1$, $\varepsilon=\lambda$ instance of the lower bound in Theorem~\ref{thm:line-covering}. Geometry and optimization dynamics align perfectly here. The exponential blow-up, which according to Theorem~\ref{thm:line-covering} is geometrically necessary, is the regime toward which the standard SAE objective converges. This means that feature splitting is not just an avoidable artifact of the training process; it is the optimum that the objective function selects. As a result, the model fragments a single $d_i$-dimensional feature across exponentially many near-collinear atoms.

\begin{remark}
As noted earlier, \citet{bhalla2026sparse} formalized concept manifold capture for SAEs and proved that an idealized sparse decoder over an incoherent dictionary recovers the manifold subspace. Empirically, however, they observed that trained SAEs fall short of this ideal and settle instead into a fragmented regime they term dilution, leaving open why the objective should favor such a solution over the compact decoder their theory shows to be feasible. Theorem~\ref{thm:instability} closes this gap by showing rigorously that the coordinate-wise $\ell_1$ penalty makes the true subspace basis unstable and actively prefers a fragmented cover over the compact decoder and drives trained dictionaries into this regime.
\end{remark}

\section{Subspace-Aware Sparse Autoencoders}
\label{sec:sasa}
Section~\ref{sec:necessity} established that a standard SAE cannot represent a multi-dimensional feature without splitting it, because tying each latent to a single decoder direction and penalizing the code coordinate by coordinate makes a feature of intrinsic dimension $d_i$ cheapest when split over $\Theta\big((t/\lambda)^{d_i-1}\big)$ near-collinear atoms. We now fix this problem with Subspace-Aware Sparse Autoencoders (SASA), which consider a subspace as the unit of representation through block decoders $\boldsymbol{D}_k$ and move sparsity to the level of blocks, so that one block carries an entire feature at the price of a single active latent. All proofs are deferred to Appendix~\ref{apx:theory-sasa}

\paragraph{Model.} Let $\boldsymbol{h}\in\mathbb{R}^d$ denote the input and let $m=Kr$ be the latent dimension. We parameterize an encoder $p(\boldsymbol{h})=\boldsymbol{E}\boldsymbol{h}\in\mathbb{R}^{m}$ with $\boldsymbol{E}=[\boldsymbol{E}_1^\top\ \cdots\ \boldsymbol{E}_K^\top]^\top\in\mathbb{R}^{m\times d}$ and $\boldsymbol{E}_k\in\mathbb{R}^{r\times d}$, so that the pre-activations partition as $p_k(\boldsymbol{h})=\boldsymbol{E}_k\boldsymbol{h}\in\mathbb{R}^r$. The Top-$s$ gate $\mathcal{T}_s(\boldsymbol{h})\in\arg\max_{T\subset[K],\,|T|=s}\sum_{k\in T}\|p_k(\boldsymbol{h})\|_2$ keeps the $s$ groups of largest norm and the block-sparse latent is
\[
a_k(\boldsymbol{h})=
\begin{cases}
p_k(\boldsymbol{h}), & k\in\mathcal{T}_s(\boldsymbol{h}),\\
\boldsymbol{0}, & \text{otherwise}.
\end{cases}
\]
The decoder is block-columned $\boldsymbol{D}=[\boldsymbol{D}_1\ \ldots\ \boldsymbol{D}_K]\in\mathbb{R}^{d\times m}$ with $\boldsymbol{D}_k\in\mathbb{R}^{d\times r}$, and the reconstruction is $\boldsymbol{D}a(\boldsymbol{h})=\sum_{k=1}^K \boldsymbol{D}_k a_k(\boldsymbol{h})$. Because $\mathcal{T}_s$ ranks groups solely by the scalar $\|p_k(\boldsymbol{h})\|_2$ and the construction charges no per-coordinate penalty inside a group, it removes the root cause of sparsity penalty that Section~\ref{sec:instability} identified as the source of splitting.

\paragraph{Objective.} We train SASA on activations $\mathcal{S}$ by minimizing the reconstruction error under the Top-$s$ block-sparse gate together with a spectral regularizer acting on each block reconstruction map,
\begin{equation}
\label{eq:sasa-objective}
\min_{\boldsymbol{E},\boldsymbol{D}}\
\sum_{i=1}^n \big\|\boldsymbol{h}_i-\boldsymbol{D}a(\boldsymbol{h}_i)\big\|_2^2
+\lambda_{\mathrm{dim}}\sum_{k=1}^{K}\big\|\boldsymbol{D}_k\boldsymbol{E}_k\big\|_*.
\end{equation}
The regularizer in Equation\eqref{eq:sasa-objective} acts on the spectrum of the reconstruction map $\boldsymbol{W}_k=\boldsymbol{D}_k\boldsymbol{E}_k$, and this is what binds the block penalty to a benign optimization geometry. The map nuclear norm admits the variational factorization $\|\boldsymbol{D}_k\boldsymbol{E}_k\|_*=\min_{\boldsymbol{D}_k\boldsymbol{E}_k=\boldsymbol{W}_k}\tfrac12(\|\boldsymbol{D}_k\|_F^2+\|\boldsymbol{E}_k\|_F^2)$, which is exactly the symmetric weight decay of a regularized linear autoencoder, so each active block inherits the landscape that \citet{Kunin_Bloom_Goeva_Seed_2019} characterize and in which every local minimizer is global. Penalizing the spectrum of $\boldsymbol{W}_k$ in this way lets a group commit only as many effective dimensions as the feature actually occupies and drives the rest toward zero, which is precisely the behavior that recovers a coherent subspace.

The objective we optimize in practice realizes this principle through a computationally efficient evaluation of the same penalty. Rather than forming an explicit SVD for every group at every step, we evaluate the penalty in batch through the Gram trace identity $\sum_{j=1}^{r}\sigma_j(\boldsymbol{W}_k)=\mathrm{tr}\big((\boldsymbol{W}_k^\top\boldsymbol{W}_k)^{1/2}\big)$. A dead-group auxiliary loss \citep{Gao_Tour_Tillman_Goh_Troll_Radford_Sutskever_Leike_Wu_2024} keeps under-selected groups alive, and we treat it as a stabilization heuristic and defer more details on it to Appendix~\ref{apx:alg-aux} along with the full training and inference pseudo-code in Appendix~\ref{apx:alg-pseuodo}.

\paragraph{Capacity for capturing the feature.}
We first show that a single block of sufficient width can contain the feature.
\begin{proposition}[SASA on a single-feature slice]
\label{prop:sasa-no-splitting}
Fix $i\in\mathcal{I}$ and suppose $\dim(\boldsymbol{V}_i)=d_i\ge 1$. If SASA uses block size $r\ge d_i$ and Top-$s$ gating with $s=1$, then there exist decoder blocks $\{\boldsymbol{D}_k\}_{k=1}^K$ and some $k^\star\in[K]$ such that $\sup_{\boldsymbol{h}\in\mathcal{M}_i(t)} \operatorname{dist}\left(\boldsymbol{h},\operatorname{col}(\boldsymbol{D}_{k^\star})\right)=0$. In particular, $L_i^{(d_i)}(0)=d_i$.
\end{proposition}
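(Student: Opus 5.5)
The proof is a direct construction, and the plan has three steps. Let $\boldsymbol{V}_i\in\mathbb{R}^{d\times d_i}$ be the orthonormal basis of $\mathcal{V}_i$ from Hypothesis~\ref{ass:mdsuperposition}.

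\emph{Step 1: the decoder.} Fix any index $k^\star\in[K]$ and set
\[
\boldsymbol{D}_{k^\star}=[\boldsymbol{V}_i,\ \boldsymbol{W}],
\]
where $\boldsymbol{W}\in\mathbb{R}^{d\times(r-d_i)}$ is either zero or any completion; this is possible because $r\ge d_i$. Hence $\operatorname{col}(\boldsymbol{D}_{k^\star})\supseteq\mathcal{V}_i$. Every $\boldsymbol{h}\in\mathcal{M}_i(t)$ has the form $\boldsymbol{h}=\boldsymbol{V}_i\boldsymbol{z}_i\in\mathcal{V}_i$, so taking $\boldsymbol{v}=\boldsymbol{h}$ in the infimum defining $\operatorname{dist}$ gives $\operatorname{dist}(\boldsymbol{h},\operatorname{col}(\boldsymbol{D}_{k^\star}))=0$. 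This holds pointwise, so the supremum over the slice is $0$. The other blocks $\boldsymbol{D}_k$, $k\neq k^\star$, are arbitrary.

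\emph{Step 2: the encoder, so the statement holds for the SASA forward map.} I would also check that reconstruction is exact. Set $\boldsymbol{E}_{k^\star}=[\boldsymbol{V}_i,\boldsymbol{0}]^\top$ and $\boldsymbol{E}_k=\boldsymbol{0}$ for all $k\neq k^\star$. Then $\|p_{k^\star}(\boldsymbol{h})\|_2=\|\boldsymbol{V}_i^\top\boldsymbol{h}\|_2=t>0$, while every other group has norm $0$. So the Top-$1$ gate selects $k^\star$ uniquely, with no tie-breaking issue, since $t>0$. The output is then
\[
\boldsymbol{D}a(\boldsymbol{h})=\boldsymbol{V}_i\boldsymbol{V}_i^\top\boldsymbol{h}=\boldsymbol{h}.
\]
Thus a single active group, i.e. one latent under block sparsity $s=1$, reproduces the whole slice with zero error.

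\emph{Step 3: the covering claim.} For $L_i^{(d_i)}(0)=d_i$, the columns of $\boldsymbol{V}_i$ give $d_i$ unit vectors whose span contains every $\boldsymbol{h}\in\mathcal{M}_i(t)$, so $L_i^{(d_i)}(0)\le d_i$. This is the $k\ge d_i$ clause of Theorem~\ref{thm:line-covering}.

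For the matching lower bound, take any $N<d_i$ unit vectors. Their span $S$ has dimension less than $d_i$, so some unit $\boldsymbol{u}\in\mathcal{V}_i$ is orthogonal to $S$. Then $t\boldsymbol{u}\in\mathcal{M}_i(t)$ lies at distance $t>0$ from every subset-span. Therefore no such family achieves zero error, and $L_i^{(d_i)}(0)=d_i$.

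\emph{Expected obstacle.} There is no real difficulty. The only subtleties are confirming that the Top-$s$ gate is well-defined on the slice, which strict positivity of $t$ handles, and noting that Definition~\ref{def:k-cover} was stated for $\varepsilon>0$, so the value at $\varepsilon=0$ has to be read by extending the same formula.
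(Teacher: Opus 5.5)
Your proposal is correct and uses the same construction as the paper: the block $\boldsymbol{D}_{k^\star}$ holds the orthonormal basis of $\mathcal{V}_i$, padded to width $r$. This gives zero distance pointwise on $\mathcal{M}_i(t)$ and exact reconstruction $\boldsymbol{V}_i\boldsymbol{V}_i^\top\boldsymbol{h}=\boldsymbol{h}$. You go a bit further than the paper in two places: you write down an explicit encoder so the Top-$1$ gate provably and uniquely selects $k^\star$ (the paper leaves the encoder implicit), and you prove the matching lower bound $L_i^{(d_i)}(0)\ge d_i$ via a unit vector of $\mathcal{V}_i$ orthogonal to the span of fewer than $d_i$ atoms, whereas the paper only cites the achievability clause of Theorem~\ref{thm:line-covering}.
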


\paragraph{The SASA objective selects a single block.} Proposition~\ref{prop:sasa-no-splitting} shows that a block of width $r\ge d_i$ can carry the feature, yet capacity does not determine the learned representation. Section~\ref{sec:instability} established the analogous capacity for the standard SAE while proving that its objective drives the dictionary away from it, and the SASA objective reverses this conclusion. Consider the population objective on the slice,

\[ R(\boldsymbol{E},\boldsymbol{D}) =\underset{\boldsymbol{h}\sim\mathcal{M}_i(t)}{\mathbb{E}} \|\boldsymbol{h}-\boldsymbol{D}a(\boldsymbol{h})\|_2^2 +\lambda_{\mathrm{dim}}\sum_{k}\|\boldsymbol{W}_k\|_*. \]

The Top-$1$ gate routes each activation $\boldsymbol{h}\in\mathcal{M}_i(t)$ to a single block, so the
population objective separates into one independent term per block. The term for block $k$ depends
on the data only through the second moment
$\boldsymbol{M}_k=\mathbb{E}\big[\boldsymbol{h}\boldsymbol{h}^{\top}\,\mathbbm{1}\{\boldsymbol{h}\ \text{routed to}\ k\}\big]$
of the activations it receives, and its minimized value is a strictly concave spectral function of
$\boldsymbol{M}_k$. Any block that captures a positive-measure share of the slice sees activations
spanning the whole of $\mathcal{V}_i$, so its $\boldsymbol{M}_k$ has full rank $d_i$. If a single block carries the entire slice, its moment would
be $\boldsymbol{\Sigma}_i=\tfrac{t^2}{d_i}\boldsymbol{P}_{\mathcal{V}_i}$, and in the regime
$0<\lambda_{\mathrm{dim}}<t^2/d_i$ its reconstruction map spans $\mathcal{V}_i$ at effective rank $d_i$.

These moments obey the conservation law $\sum_k\boldsymbol{M}_k=\boldsymbol{\Sigma}_i$, which fixes the
total spectral budget independently of how the slice is divided among blocks. Because the per-block cost is strictly concave and every active block spans the common subspace
$\mathcal{V}_i$, the Rotfel'd trace inequality bounds the aggregate cost of any fragmented
allocation strictly above its value at $\boldsymbol{\Sigma}_i$, the cost of assigning all of
$\mathcal{M}_i(t)$ to a single block. Consolidating the budget onto one block therefore attains a
strictly lower value of $R$ than any allocation that activates two or more blocks.

\begin{theorem}[Block recovery on the slice]
\label{thm:block-recovery}
Fix $i\in\mathcal{I}$ with $\dim(\mathcal{V}_i)=d_i\ge 2$, let SASA use block size $r\ge d_i$ and
Top-$s$ gating with $s=1$, and fix $0<\lambda_{\mathrm{dim}}<t^2/d_i$. Let $R$ denote the population
objective over $\mathcal{M}_i(t)$. Then every global minimizer of $R$ activates a single block
$k^\star$, whose decoder recovers the feature subspace,
$\operatorname{col}(\boldsymbol{D}_{k^\star})=\mathcal{V}_i$, with its $d_i$ columns spanning the
top-$d_i$ eigenspace of $\boldsymbol{\Sigma}_i$, and any two global minimizers differ only by the
index $k^\star$ and an orthogonal rotation within $\mathcal{V}_i$.
\end{theorem}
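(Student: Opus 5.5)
The plan follows the sketch preceding the statement. First reduce to per-block problems, then show that the minimized per-block cost is a strictly concave spectral function of the block's second moment, and finally use the conservation law $\sum_k\boldsymbol{M}_k=\boldsymbol{\Sigma}_i$ to show that consolidation onto one block is strictly optimal.

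\emph{Step 1 (decoupling).} Fix a measurable routing $\boldsymbol{h}\mapsto k(\boldsymbol{h})$ induced by the Top-$1$ gate, and relax the routing to be free. This relaxation only lowers the value, and the consolidated solution we construct is realizable by a gate. Since each $\boldsymbol{h}$ is reconstructed by $\boldsymbol{W}_{k(\boldsymbol{h})}\boldsymbol{h}$, the objective becomes
\[
R=\sum_k\Bigl[\mathrm{tr}\bigl((\boldsymbol{I}-\boldsymbol{W}_k)\boldsymbol{M}_k(\boldsymbol{I}-\boldsymbol{W}_k)^\top\bigr)+\lambda_{\mathrm{dim}}\|\boldsymbol{W}_k\|_*\Bigr],
\]
which depends on the data only through $\boldsymbol{M}_k$. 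Using the variational form $\|\boldsymbol{W}_k\|_*=\min\tfrac12(\|\boldsymbol{D}_k\|_F^2+\|\boldsymbol{E}_k\|_F^2)$ and $r\ge d_i$, every rank-$\le r$ map $\boldsymbol{W}_k$ is attainable. We therefore minimize over $\boldsymbol{W}$ directly.

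\emph{Step 2 (closed form of the block cost).} For $\boldsymbol{M}\succeq 0$ with eigenvalues $\mu_j$, solve
\[
g(\boldsymbol{M})=\min_{\boldsymbol{W}}\ \mathrm{tr}\bigl((\boldsymbol{I}-\boldsymbol{W})\boldsymbol{M}(\boldsymbol{I}-\boldsymbol{W})^\top\bigr)+\lambda\|\boldsymbol{W}\|_*.
\]
By a von Neumann trace argument, an optimal $\boldsymbol{W}$ shares eigenvectors with $\boldsymbol{M}$ and is PSD. Its eigenvalues $w_j$ then solve the scalar problems $\min_w \mu(1-w)^2+\lambda w$, giving $w_j=(1-\lambda/(2\mu_j))_+$ and
\[
\phi(\mu)=\lambda-\lambda^2/(4\mu)\ \text{ for }\mu\ge\lambda/2,\qquad \phi(\mu)=\mu \ \text{ otherwise}.
\]
So $g(\boldsymbol{M})=\sum_j\phi(\mu_j)$, where $\phi$ is concave, nondecreasing, $\phi(0)=0$, and strictly concave on $[\lambda/2,\infty)$. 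A concave function with $\phi(0)=0$ is subadditive; for PSD matrices, Rotfel'd's inequality gives $\mathrm{tr}\,\phi(\boldsymbol{A}+\boldsymbol{B})\le\mathrm{tr}\,\phi(\boldsymbol{A})+\mathrm{tr}\,\phi(\boldsymbol{B})$.

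The single-block allocation $\boldsymbol{M}=\boldsymbol{\Sigma}_i=\tfrac{t^2}{d_i}\boldsymbol{P}_{\mathcal{V}_i}$ has $d_i$ equal eigenvalues $t^2/d_i$. The hypothesis $\lambda_{\mathrm{dim}}<t^2/d_i$ should be exactly what places these in the strictly concave regime and makes $w_j>0$, possibly after adjusting the factor of $2$ to the paper's normalization. The optimal $\boldsymbol{W}$ is then $c\,\boldsymbol{P}_{\mathcal{V}_i}$ with $c>0$. This yields $\operatorname{col}(\boldsymbol{D}_{k^\star})=\mathcal{V}_i$ with $d_i$ effective columns spanning the top-$d_i$ eigenspace. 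Uniqueness up to rotation comes from the balanced factorization $\boldsymbol{D}_{k^\star}=\boldsymbol{U}\boldsymbol{S}^{1/2}\boldsymbol{Q}$, $\boldsymbol{E}_{k^\star}=\boldsymbol{Q}^\top\boldsymbol{S}^{1/2}\boldsymbol{U}^\top$ with $\boldsymbol{Q}$ orthogonal. The zero $r-d_i$ extra columns are forced by the Frobenius weight decay.

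\emph{Step 3 (strictness, the main obstacle).} We must show that any allocation with at least two blocks of positive mass gives $\sum_k g(\boldsymbol{M}_k)>g(\boldsymbol{\Sigma}_i)$. Subadditivity from Rotfel'd yields only $\ge$. For strictness, I would first note that every $\boldsymbol{M}_k$ with positive mass is supported in $\mathcal{V}_i$. For the split $\boldsymbol{A}+\boldsymbol{B}=\boldsymbol{\Sigma}_i$ with both nonzero, I would then use the scalar fact that $\phi(a)+\phi(b)>\phi(a+b)$ whenever $a,b>0$ and $a+b>\lambda/2$, since on that range strict concavity beats linearity. The way to lift this to matrices is to apply Rotfel'd's proof via majorization: the eigenvalues of $\boldsymbol{A}\oplus\boldsymbol{B}$ majorize those of $\boldsymbol{A}+\boldsymbol{B}$ padded with zeros, and we need equality cases of Karamata for a function that is strictly concave on the relevant range. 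Equality would force one of $\boldsymbol{A},\boldsymbol{B}$ to vanish.

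The delicate point is that $\phi$ is only linear below $\lambda/2$, so tiny eigenvalues could in principle cause equality. This is exactly where the claim that each positive-measure block spans all of $\mathcal{V}_i$ is needed, together with the isotropy of $\boldsymbol{\Sigma}_i$. It is also why $\lambda_{\mathrm{dim}}<t^2/d_i$ enters. If a direct Karamata equality analysis is messy, I would fall back on differentiating: the derivative $\phi'(\mu)=\lambda^2/(4\mu^2)$ is strictly decreasing at the top eigenvalue, so moving mass from a small block into the large one strictly decreases the cost. Finally, the conclusion that every global minimizer activates a single block follows, and the Top-$1$ gate realizing this routing exists by taking $\boldsymbol{E}_k=0$ for $k\ne k^\star$.
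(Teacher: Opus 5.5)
Your route is the paper's (Top-$1$ cell decoupling, the closed-form shrinkage value $\phi$, Rotfel'd, and full rank of every occupied $\boldsymbol{M}_k$ on $\mathcal{V}_i$). However, the decisive step, strictness in Step~3, is left open, and the tools you sketch for it do not work as stated. First, the majorization is reversed. The spectrum of $\boldsymbol{A}+\boldsymbol{B}$ padded with zeros majorizes the spectrum of $\boldsymbol{A}\oplus\boldsymbol{B}$, not conversely, and with your direction concavity would give the opposite of Rotfel'd. Second, equality does not force $\boldsymbol{A}$ or $\boldsymbol{B}$ to vanish: $\boldsymbol{A}=c\boldsymbol{P}_{\mathcal{U}}$, $\boldsymbol{B}=c(\boldsymbol{P}_{\mathcal{V}_i}-\boldsymbol{P}_{\mathcal{U}})$ with $\{\boldsymbol{0}\}\neq\mathcal{U}\subsetneq\mathcal{V}_i$ attains it. So the full-rank property has to be used, and you do not say how. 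Third, the derivative fallback fails at balanced allocations. In your free-routing relaxation, assigning two antipodal hemispheres to two blocks gives $\boldsymbol{M}_1=\boldsymbol{M}_2=\boldsymbol{\Sigma}_i/2$, since $\boldsymbol{h}\boldsymbol{h}^\top$ is invariant under $\boldsymbol{h}\mapsto-\boldsymbol{h}$. There is then no ``small'' block, and the first-order change $\operatorname{tr}[(\phi'(\boldsymbol{M}_1)-\phi'(\boldsymbol{M}_2))\boldsymbol{\Delta}]$ vanishes for every transfer $\boldsymbol{\Delta}$. Your caution about $\phi$ being affine on $[0,\lambda_{\mathrm{dim}}/2]$ is justified. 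The paper's proof gets strictness by asserting that $g$ is strictly concave on all of $(0,\infty)$, which is false for the true piecewise value function, so its equality-case argument needs the same repair.

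The missing idea is to use the isotropy of $\boldsymbol{\Sigma}_i$ directly; this makes strictness elementary and does not even need Rotfel'd. Let $c=t^2/d_i$. Each $\boldsymbol{M}_k\preceq\boldsymbol{\Sigma}_i=c\boldsymbol{P}_{\mathcal{V}_i}$, so its eigenvalues lie in $[0,c]$. Concavity and $\phi(0)=0$ give the chord bound $\phi(u)\ge u\,\phi(c)/c$ on $[0,c]$, with equality only at $u\in\{0,c\}$, because $c>\lambda_{\mathrm{dim}}/2$ means $\phi$ is not affine on $[0,c]$. Summing over all eigenvalues of all blocks and using $\sum_k\operatorname{tr}\boldsymbol{M}_k=t^2$,
\[
\sum_k\operatorname{tr}\,\phi(\boldsymbol{M}_k)\;\ge\;\frac{\phi(c)}{c}\sum_k\operatorname{tr}\boldsymbol{M}_k\;=\;d_i\,\phi(c)\;=\;\operatorname{tr}\,\phi(\boldsymbol{\Sigma}_i),
\]
with equality only if every eigenvalue of every $\boldsymbol{M}_k$ is $0$ or $c$. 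An occupied block is positive definite on $\mathcal{V}_i$, so equality would force $\boldsymbol{M}_k=\boldsymbol{\Sigma}_i$ for it, i.e., that block carries all the mass. Any allocation with two or more occupied blocks is therefore strictly worse. Separately, your claims that the $r-d_i$ extra columns vanish and that the factors are unique up to an orthogonal $\boldsymbol{Q}$ rest on Frobenius weight decay, not on the stated penalty $\|\boldsymbol{D}_k\boldsymbol{E}_k\|_*$, which determines only $\boldsymbol{W}_{k^\star}$. The paper makes the same substitution when it invokes Kunin et al., so on that point you match its proof.
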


\noindent This inverts the mechanism of Section~\ref{sec:instability}. There, the separable $\ell_1$ penalty acted coordinatewise and left a residual of fixed norm $\lambda\sqrt{d_i}$ on the full-activation orthant, which no basis atom could absorb and which certified a strict descent direction away from the $d_i$-dimensional basis into the exponential line-covering regime. The group nuclear norm couples a block's coordinates into a single concave spectral cost, and that concavity makes any redistribution of the fixed budget $\boldsymbol{\Sigma}_i$ across blocks strictly more expensive. The configuration the $\ell_1$ objective rejected as unstable is exactly the unique global optimum under the spectral penalty.

\paragraph{From the slice to superposition.}
For $s>1$ concurrently active features under Hypothesis~\ref{ass:mdsuperposition}, the gate must recover the correct $s$-block support, a projection identical to block iterative hard thresholding whose exactness \citet{eldar2008robust,eldar2010block} certify precisely under the assumed $s<\frac{1}{2}(1+\frac{1}{\mu})$, with sub-coherence vanishing by orthonormality of each $\boldsymbol{V}_i$.
\section{Sample Complexity Efficiency}
\label{sec:sample-compelxity}

A major practical limitation of SAEs is their high training cost. This bottleneck stems from the nature of the training data, since \textit{each} input activation $\boldsymbol{h}$ is not drawn from a static dataset but produced by a forward pass through an LLM that extracts an intermediate representation at a specified layer. Generating every training example therefore incurs the cost of an LLM inference, and data acquisition itself becomes the dominant. For example, on Mistral-7B, collecting hook activations for $1\mathrm{M}$ tokens takes $\approx196.8\,$s, compared with only $\approx3.3\,$s for SAE forward and $4.6\,$s for SAE backward passes. We show that SASA attains the same recovery guarantee from markedly fewer activations than a splitting SAE, which directly reduces the number of forward passes and the end-to-end training time. To begin, we first derive the lower bound for a standard SAE. All proofs are provided in Appendix~\ref{apx:theory-sample-complexity}
\begin{proposition}
\label{prop:split-sample-complexity}
Consider a standard SAE with decoders $\boldsymbol{d}_1,\dots,\boldsymbol{d}_N\in\mathbb{S}^{d-1}$, co-activation budget $k=1$, and routing rule $J(\boldsymbol{h})\in\arg\min_{j\in[N]}\operatorname{dist}\big(\boldsymbol{h},\operatorname{span}(\boldsymbol{d}_j)\big)$. Let $C_j$ denote the number of training samples $\boldsymbol{h}_1,\dots,\boldsymbol{h}_n$ routed to index $j$. Fix $\delta\in(0,1)$. If $n < N\big(\log N + \log(\tfrac{1}{\delta})\big)$, then with probability at least $\delta$ there exists some $j\in[N]$ such that $C_j=0$.
\end{proposition}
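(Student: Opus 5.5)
This is a coupon-collector lower bound. The routing rule induces a partition of the input distribution into $N$ cells; write $p_j := \Pr_{\boldsymbol{h}\sim\mathcal{D}}[J(\boldsymbol{h})=j]$. Then $(C_1,\dots,C_N)$ is multinomial with parameters $n$ and $(p_1,\dots,p_N)$.

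The statement needs some assumption on the cell masses, since otherwise a dictionary with all mass on one cell trivially has empty cells. I will assume the cells are balanced, $p_j = 1/N$. This is natural on the slice $\mathcal{M}_i(t)$ with uniform $\boldsymbol{u}$ when the atoms form the near-uniform $\varepsilon$-cover forced by Theorem~\ref{thm:line-covering} and Theorem~\ref{thm:instability}. If the masses are unequal, some cell has $p_j\le 1/N$, and the bound only gets easier.

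\textbf{Step 1: expected number of empty cells.} Let $Z=\sum_j \mathbbm{1}\{C_j=0\}$. Then
\[
\mathbb{E}Z = N(1-1/N)^n \ge N\exp\!\big(-n/(N-1)\big).
\]
With $n < N(\log N + \log(1/\delta))$ this gives $\mathbb{E}Z \gtrsim \delta$, up to the $N$ versus $N-1$ discrepancy in the exponent.

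\textbf{Step 2: second moment.} A lower bound on the mean does not give a lower bound on the probability, so I apply Paley--Zygmund in the form $\Pr[Z>0] \ge (\mathbb{E}Z)^2/\mathbb{E}Z^2$. The second moment is
\[
\mathbb{E}Z^2 = \mathbb{E}Z + N(N-1)(1-2/N)^n \le \mathbb{E}Z + (\mathbb{E}Z)^2,
\]
using $(1-2/N)\le(1-1/N)^2$; the empty-cell indicators are negatively correlated. Hence
\[
\Pr[Z>0] \ge \frac{\mathbb{E}Z}{1+\mathbb{E}Z}.
\]

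\textbf{Step 3: constants (the main obstacle).} Step 2 yields only $\Pr[Z>0]\ge \delta/(1+\delta)$ (times $e^{-O(1)}$ factors), not the clean $\delta$ in the statement. The constant must be absorbed in one of two ways. The first is to sharpen the sample threshold slightly, e.g.\ to $n\le (N-1)\log(N/(2\delta))$, which makes $\mathbb{E}Z\ge 2\delta$ and $\Pr[Z>0]\ge 2\delta/(1+2\delta)\ge\delta$ for $\delta\le 1/2$. The second is the Poissonization argument: replace $n$ by $\mathrm{Poisson}(n)$, so that the $C_j$ are independent $\mathrm{Poisson}(n/N)$. Then
\[
\Pr[\text{no empty cell}] = \big(1-e^{-n/N}\big)^N \le \exp\!\big(-Ne^{-n/N}\big) \le e^{-\delta}\cdot(\text{slack}),
\]
and one de-Poissonizes by monotonicity in $n$. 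I would try Poissonization first and fall back to the sharpened threshold. In either case I would state the result "up to absolute constants" if the exact $\delta$ cannot be matched.

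Finally I would note the consequence. $N \ge C(t/\varepsilon)^{d_i-1}$ by Theorem~\ref{thm:line-covering} with $k=1$, so the SAE needs
\[
n = \Omega\big((t/\varepsilon)^{d_i-1}\log\big((t/\varepsilon)/\delta\big)\big)
\]
samples just to visit every atom. An atom with $C_j=0$ receives no gradient, and its covering cap stays unreconstructed.
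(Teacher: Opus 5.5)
Your Steps 1--2 are correct, but Step 3 cannot be completed, and the obstruction is not a matter of technique: with the exact factor $\delta$ the statement is false, even under balanced routing. Take $N=2$, $p_1=p_2=\tfrac12$, $\delta=0.1$. The hypothesis $n<2(\log 2+\log 10)=2\log 20\approx 5.99$ allows $n=5$, yet $\Pr[\exists j:\,C_j=0]=2\cdot 2^{-5}=1/16<\delta$.

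More generally, for fixed $\delta$ and $n=\lceil N\log(N/\delta)\rceil-1$, the number of empty cells $Z$ is asymptotically $\mathrm{Poisson}(\delta)$, so $\Pr[Z>0]\to 1-e^{-\delta}<\delta$. Your Poissonization route runs into exactly this ceiling. The bound $\Pr[\text{no empty cell}]\le\exp(-Ne^{-n/N})<e^{-\delta}$ only gives failure probability above $1-e^{-\delta}$, and de-Poissonizing loses more, so that route can never return $\delta$.

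Your hedge ``up to absolute constants'' is therefore not a fallback but the only correct conclusion, and your first remedy is already a complete, non-asymptotic proof of it. From $\Pr[Z>0]\ge\mathbb{E}Z/(1+\mathbb{E}Z)$ and $\mathbb{E}Z\ge Ne^{-n/(N-1)}$, any $n\le(N-1)\log(N/(2\delta))$ with $\delta\le\tfrac12$ gives $\Pr[Z>0]\ge\delta$. Alternatively, keeping the threshold $(N-1)\log(N/\delta)$ gives $\Pr[Z>0]\ge\delta/(1+\delta)\ge\delta/2$. State one of these corrected versions as the result.

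The paper's route is different. It also assumes balanced routing, but argues through the Gumbel limit law of the coupon-collector time $T$. With $y=\log(1/\delta)$ it obtains $\Pr(T\le n^\star)\to e^{-\delta}$, hence $\Pr(T>n^\star)\le\delta$ at the threshold. It then asserts that monotonicity of $n\mapsto\Pr(T>n)$ forces $\Pr(T>n)>\delta$ for every $n<n^\star$. That inference runs the wrong way: an upper bound at the threshold gives no lower bound below it. The limit is also only asymptotic in $N$, and the failure probability it computes, $1-e^{-\delta}$, is precisely what refutes the stated constant. Your second-moment argument is thus more rigorous than the paper's: it is finite-$N$, has explicit constants, and exposes what is actually provable.

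One smaller point. Your remark that unequal cell masses only make things easier is true, but not for the reason you give. A single cell with $p_j\le 1/N$ only yields $\Pr[Z>0]\ge(1-1/N)^n\approx\delta/N$. What you need is that $\Pr[\min_j C_j\ge 1]$ is Schur-concave in $(p_1,\dots,p_N)$. This follows by conditioning on the draws landing in a pair of cells and noting that $q^m+(1-q)^m$ is minimized at $q=\tfrac12$. Either include that argument or simply adopt balanced routing as an explicit assumption, as the paper does.
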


\noindent Now, we turn to quantifying the sample complexity of SASA. As shown in Theorem~\ref{thm:block-recovery}, the SASA minimizer places the entire feature in one block whose map has column space $\mathcal{V}_i$, so the SASA estimator on the slice is the top-$d_i$ eigenspace of the empirical second moment and the recovery problem is exactly principal subspace estimation, for which the sample complexity is provided in the below theorem.
\begin{theorem}
\label{thm:sasa-sample-complexity}
Fix feature $i\in\mathcal{I}$ and its subspace $\mathcal{V}_i\subset \mathbb{R}^d$ with intrinsic dimension $d_i=\dim(\mathcal{V}_i)\ge 2$ and radius $t>0$. Let $\boldsymbol{h}_1,\dots,\boldsymbol{h}_n$ be i.i.d. samples from $\mathcal{M}_i(t)$ with $\|\boldsymbol{h}_i\|_2=t$, let $\widehat{\boldsymbol{\Sigma}}_i:=\frac{1}{n}\sum_{j=1}^{n}\boldsymbol{h}_j\boldsymbol{h}_j^{\top}$, and let $\widehat{\mathcal{V}}_i$ be the span of the top $d_i$ eigenvectors of $\widehat{\boldsymbol{\Sigma}}_i$. Then for any $\varepsilon\in(0,t)$ and $\delta\in(0,1)$, if $n \ge 128\,d_i^2 \log\big(\frac{2d}{\delta}\big) \frac{t^2}{\varepsilon^2}$, then with probability at least $1-\delta$, $\sup_{\boldsymbol{h}\in\mathcal{M}_i(t),\ \|\boldsymbol{h}\|_2=t}\operatorname{dist}\big(\boldsymbol{h},\widehat{\mathcal{V}}_i\big)\le \varepsilon$.
\end{theorem}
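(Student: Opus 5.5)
The plan is to treat this as a matrix concentration argument followed by a Davis--Kahan perturbation bound. Since each $\boldsymbol{h}_j = t\boldsymbol{u}_j$ with $\boldsymbol{u}_j$ uniform on $\mathbb{S}^{d-1}\cap\mathcal{V}_i$, the population second moment is $\boldsymbol{\Sigma}_i=\mathbb{E}[\boldsymbol{h}\boldsymbol{h}^\top]=\tfrac{t^2}{d_i}\boldsymbol{P}_{\mathcal{V}_i}$. Its top-$d_i$ eigenspace is exactly $\mathcal{V}_i$, and the eigengap is $\delta_{\mathrm{gap}}=\lambda_{d_i}-\lambda_{d_i+1}=t^2/d_i$.

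First, I would control $\|\widehat{\boldsymbol{\Sigma}}_i-\boldsymbol{\Sigma}_i\|_2$ with the matrix Bernstein (or matrix Hoeffding) inequality. The summands $\boldsymbol{X}_j=\boldsymbol{h}_j\boldsymbol{h}_j^\top-\boldsymbol{\Sigma}_i$ are i.i.d., centered, symmetric $d\times d$ matrices. Each satisfies $\|\boldsymbol{X}_j\|_2\le t^2$, since both $\boldsymbol{h}_j\boldsymbol{h}_j^\top$ and $\boldsymbol{\Sigma}_i$ are PSD with spectral norm at most $t^2$. The variance proxy obeys $\|\mathbb{E}\boldsymbol{X}_j^2\|_2\le t^2\|\boldsymbol{\Sigma}_i\|_2= t^4/d_i\le t^4$. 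I would use the cruder Hoeffding-type bound, because the target constant $128$ suggests it: with probability at least $1-\delta$,
\[
\|\widehat{\boldsymbol{\Sigma}}_i-\boldsymbol{\Sigma}_i\|_2\le t^2\sqrt{\tfrac{8\log(2d/\delta)}{n}}.
\]
The dimension factor $d$ in the logarithm comes from working in the ambient space. Everything could be restricted to $\mathcal{V}_i$, where it would become $d_i$, but the statement uses $d$.

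Second, I would apply Davis--Kahan $\sin\Theta$ in its Yu--Wang--Samworth form,
\[
\|\sin\Theta(\widehat{\mathcal{V}}_i,\mathcal{V}_i)\|_2\le \frac{2\|\widehat{\boldsymbol{\Sigma}}_i-\boldsymbol{\Sigma}_i\|_2}{t^2/d_i}.
\]
This requires no a priori assumption that the perturbation is smaller than the gap. For $\boldsymbol{h}\in\mathcal{M}_i(t)$ we have $\operatorname{dist}(\boldsymbol{h},\widehat{\mathcal{V}}_i)=\|(\boldsymbol{I}-\boldsymbol{P}_{\widehat{\mathcal{V}}_i})\boldsymbol{P}_{\mathcal{V}_i}\boldsymbol{h}\|_2\le t\,\|\sin\Theta\|_2$. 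Combining this with the first step gives
\[
\sup_{\boldsymbol{h}\in\mathcal{M}_i(t)}\operatorname{dist}(\boldsymbol{h},\widehat{\mathcal{V}}_i)\le 2d_i\, t\sqrt{\tfrac{8\log(2d/\delta)}{n}}.
\]
Requiring the right-hand side to be at most $\varepsilon$ gives $n\ge 32 d_i^2\log(2d/\delta)\,t^2/\varepsilon^2$, which the stated $128$ comfortably covers. The slack absorbs looser constants, for example if Hoeffding is applied with range $2t^2$.

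The main obstacle is bookkeeping the constants in the concentration step. Matrix Hoeffding for bounded summands needs the right variance parameter: $\boldsymbol{X}_j^2\preceq t^4\boldsymbol{I}$ gives $\sigma^2=t^4$, and the tail is $2d\exp(-n\epsilon^2/(8t^4))$. I would also check that the top-$d_i$ eigenspace of $\widehat{\boldsymbol{\Sigma}}_i$ is well defined; if eigenvalues tie, any choice works, because Davis--Kahan applies to any choice of top-$d_i$ eigenvectors. Finally, because $\varepsilon<t$ the resulting requirement is nontrivial, and no further conditions are needed.
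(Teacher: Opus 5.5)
Your proposal is correct and follows essentially the same route as the paper: concentrate $\widehat{\boldsymbol{\Sigma}}_i$ around $\boldsymbol{\Sigma}_i=\tfrac{t^2}{d_i}\boldsymbol{P}_{\mathcal{V}_i}$, apply Davis--Kahan with gap $t^2/d_i$, and bound $\operatorname{dist}(\boldsymbol{h},\widehat{\mathcal{V}}_i)\le t\,\|\sin\Theta\|_2$. The paper instead uses matrix Bernstein with the looser $R=2t^2$ and $\sigma^2\le 4nt^4$ (hence its $128$ rather than your $32$), and classical Davis--Kahan plus Weyl after checking $\|\widehat{\boldsymbol{\Sigma}}_i-\boldsymbol{\Sigma}_i\|_2\le \tfrac{t^2}{2d_i}$; the operator-norm bound $\|\sin\Theta\|_2\le 2\|\widehat{\boldsymbol{\Sigma}}_i-\boldsymbol{\Sigma}_i\|_2\,d_i/t^2$ you invoke is exactly that Weyl argument combined with the trivial bound $\|\sin\Theta\|_2\le 1$, and your sample-size condition already forces the perturbation below half the gap, so just avoid the Frobenius-norm form with its extra $\sqrt{d_i}$, which would cost a factor $d_i$ in $n$.
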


\noindent This result highlights a sample complexity bottleneck induced by splitting. SASA learns the subspace $\boldsymbol{V}_i$ holistically, requiring $n \ge 128\,d_i^2\log(\frac{2d}{\delta})\frac{t^2}{\varepsilon^2}$ samples---a polynomial dependence on $d_i$ (Theorem~\ref{thm:sasa-sample-complexity}). In contrast, for a standard SAE to achieve error $\varepsilon$, the number of directions must scale as $N\ge C(\frac{t}{\varepsilon})^{d_i-1}$---the $k=1$ (line-covering) instance of the general $\Omega((\frac{t}{\varepsilon})^{(d_i-k)/k})$ bound (Theorem~\ref{thm:line-covering})---and Proposition~\ref{prop:split-sample-complexity} requires $n \ge N\log N$ samples to ensure every direction is trained.

\noindent Since the cost of data acquisition scales linearly with $n$, and each sample requires a forward pass of the LLM, reducing sample complexity directly lowers the end-to-end training cost. Under mild structural assumptions that the underlying features admit a multi-dimensional latent structure, accurate recovery needs fewer activations and thus fewer LLM forward passes.
\section{Experiments}
\label{sec:exp}
In this section, we conduct experiments on real LLMs to evaluate how SASA helps efficient training, monosemanticity, and interpretability.

\subsection{Experimental setup}
\label{subsec:exp-setup}

\paragraph{LLMs and training signal.}We train SASA on residual-stream activations from two pretrained LLMs: GPT-2 Small ($d=768$) and Mistral-7B-v0.1 ($d=4096$). Following standard SAE practice, we (i) sample text from a large web corpus, (ii) run the frozen LLM and cache residual-stream activations at a fixed mid-layer SAE hook, and (iii) optimize an autoencoder on the resulting activation stream. All experiments were run on 1 A100 80GB GPU.

\paragraph{Training protocol.}We conduct two types of comparisons. First, to test sample efficiency, we train SASA with roughly half the token budget of the standard externally trained SAEs~\citep{bloom2024saetrainingcodebase, Engels_Michaud_Liao_Gurnee_Tegmark_2025} on both GPT-2 and Mistral (Tables~\ref{tab:sae_sasa_summary} and~\ref{tab:absorption_summary}). Second, to isolate the effect of architecture, we train SASA and all scalar baselines on GPT-2 with the same data, token budget, and optimization schedule (Table~\ref{tab:matched-budget}). Full training details are provided in Appendix~\ref{apx:training}.

\paragraph{Baselines.}We compare against ReLU~\citep{Bricken_Templeton_Batson_Chen_Jermyn_Conerly_Turner_Anil_Denison_Askell_etal._2023}, Gated~\citep{Rajamanoharan_Conmy_Smith_Lieberum_Varma_Kramár_Shah_Nanda_2024}, JumpReLU~\citep{Rajamanoharan_Lieberum_Sonnerat_Conmy_Varma_Kramár_Nanda_2024}, TopK~\citep{Gao_Tour_Tillman_Goh_Troll_Radford_Sutskever_Leike_Wu_2024}, BatchTopK~\citep{Bussmann_Leask_Nanda_2024} and the Mistral SAE from~\citet{Engels_Michaud_Liao_Gurnee_Tegmark_2025}.

\begin{table}[t]
\centering
\footnotesize
\begin{minipage}[t]{0.49\linewidth}
\centering
\caption{Core results for SAE and SASA.}
\begin{threeparttable}
\scriptsize
\setlength{\tabcolsep}{3pt}
\resizebox{\linewidth}{!}{%
\begin{tabular}{l cc cc}
\toprule
\multirow{2}{*}{\textbf{Metric}} &
\multicolumn{2}{c}{\textbf{GPT-2}} &
\multicolumn{2}{c}{\textbf{Mistral-7B}} \\
\cmidrule(lr){2-3} \cmidrule(lr){4-5}
 & \textbf{Standard} & \textbf{SASA} & \textbf{Standard} & \textbf{SASA} \\
\midrule
Training Token Budget & $300\mathrm{M}$ & $150\mathrm{M}$ & $1\mathrm{B}$ & $500\mathrm{M}$ \\
\midrule
KL Score & $98.0\%$ & $97.0\%$ & $93.2\%$ & $95.3\%$ \\
CE Score & $98.0\%$ & $97.0\%$ & $94.1\%$ & $95.1\%$ \\
Frac. Explained Variance & $99.1\%$ & $98.8\%$ & $96.7\%$ & $98.6\%$ \\
Sparsity ($\ell_0$) & $60$ & $60$ & $285$ & $80$ \\
\bottomrule
\end{tabular}%
}
\end{threeparttable}
\label{tab:sae_sasa_summary}
\end{minipage}\hfill
\begin{minipage}[t]{0.49\linewidth}
\centering
\caption{Feature absorption on the first-letter benchmark of \citet{Chanin_Wilken-Smith_Dulka_Bhatnagar_Golechha_Bloom_2025}. Lower is better.}
\scriptsize
\setlength{\tabcolsep}{3pt}
\resizebox{\linewidth}{!}{%
\begin{tabular}{l cc cc}
\toprule
\multirow{2}{*}{\textbf{Metric}} &
\multicolumn{2}{c}{\textbf{GPT-2}} &
\multicolumn{2}{c}{\textbf{Mistral-7B}} \\
\cmidrule(lr){2-3} \cmidrule(lr){4-5}
 & \textbf{Standard} & \textbf{SASA} & \textbf{Standard} & \textbf{SASA} \\
\midrule
Mean frac. absorption & $37.2\%$ & $6.6\%$ & $24.0\%$ & $18.3\%$ \\
Full frac. absorption & $49.6\%$ & $4.7\%$ & $23.3\%$ & $11.9\%$ \\
\bottomrule
\end{tabular}%
}
\label{tab:absorption_summary}
\end{minipage}
\end{table}

\begin{table}[t]
\centering
\footnotesize
\caption{GPT-2 layer~7 comparison across SAE architectures, all trained on the same data and token budget. Best in each column is \textbf{bold}; second-best is \underline{underlined}.}
\begin{tabular}{l ccc}
\toprule
\textbf{Model} & \textbf{AutoInterp} $\uparrow$ & \textbf{Mean Absorption} $\downarrow$ & \textbf{Frac.\ Var.\ Expl.} $\uparrow$ \\
\midrule
ReLU SAE & $0.710$ & $0.267$ & $\mathbf{1.000}$ \\
JumpReLU SAE & $0.698$ & $0.553$ & $\mathbf{1.000}$ \\
Gated SAE & $0.669$ & $0.240$ & $\mathbf{1.000}$ \\
TopK SAE & $0.831$ & \underline{$0.114$} & $0.993$ \\
BatchTopK SAE & $\mathbf{0.840}$ & $0.192$ & $0.993$ \\
\midrule
TopK-SASA & \underline{$0.833$} & $\mathbf{0.046}$ & $0.989$ \\
\bottomrule
\end{tabular}
\label{tab:matched-budget}
\end{table}

\begin{figure}[t!]
    \centering
    \captionsetup{skip=2pt}
    \begin{subfigure}[t]{0.82\linewidth}
        \centering
        \includegraphics[width=\linewidth]{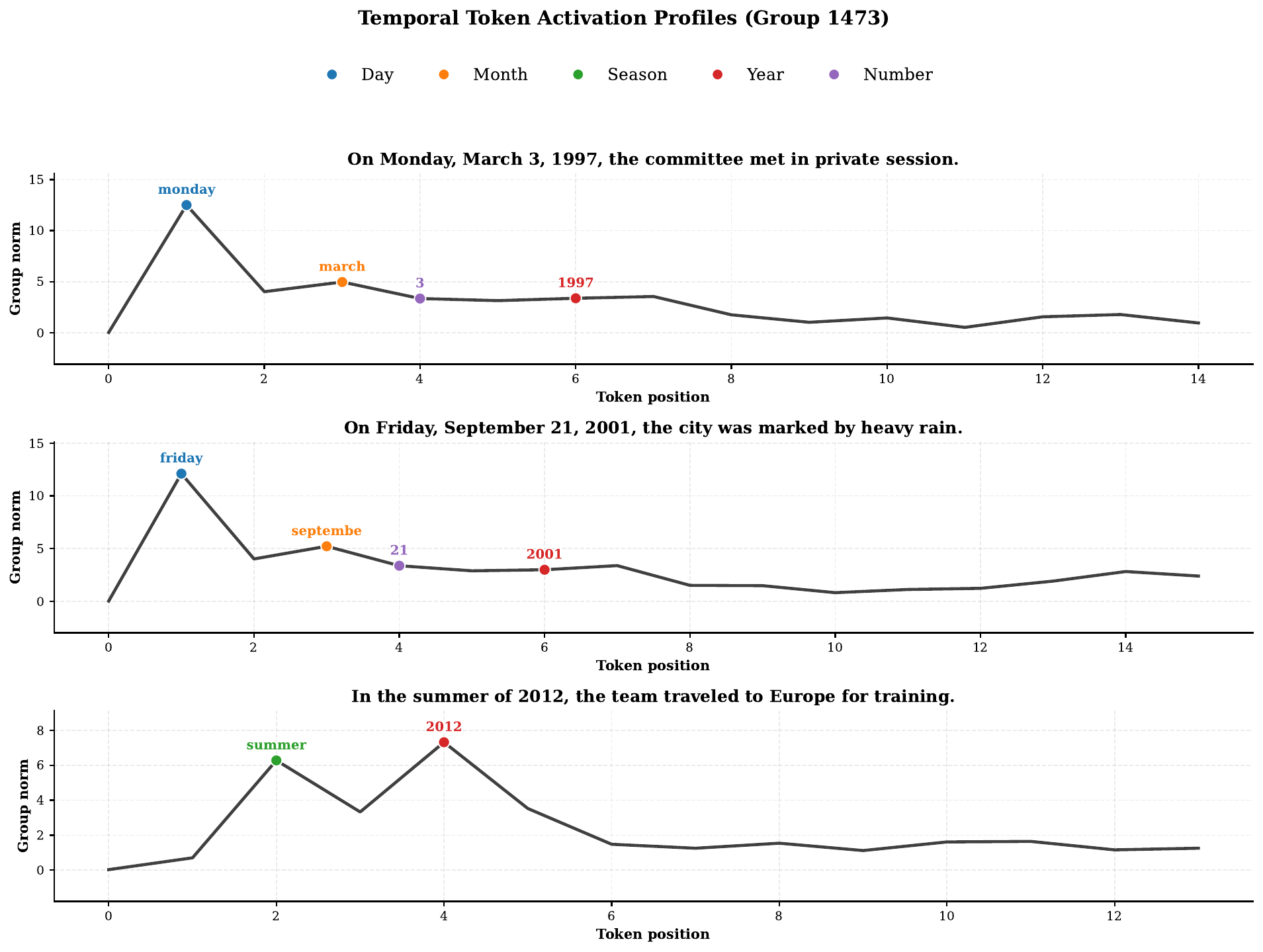}
        \caption{\textbf{Group 1473 activations.} Consistent responses to temporal concepts.}
        \label{fig:temporal_profiles}
    \end{subfigure}

    \par\smallskip
    \begin{subfigure}[t]{0.48\linewidth}
        \centering
        \includegraphics[width=\linewidth]{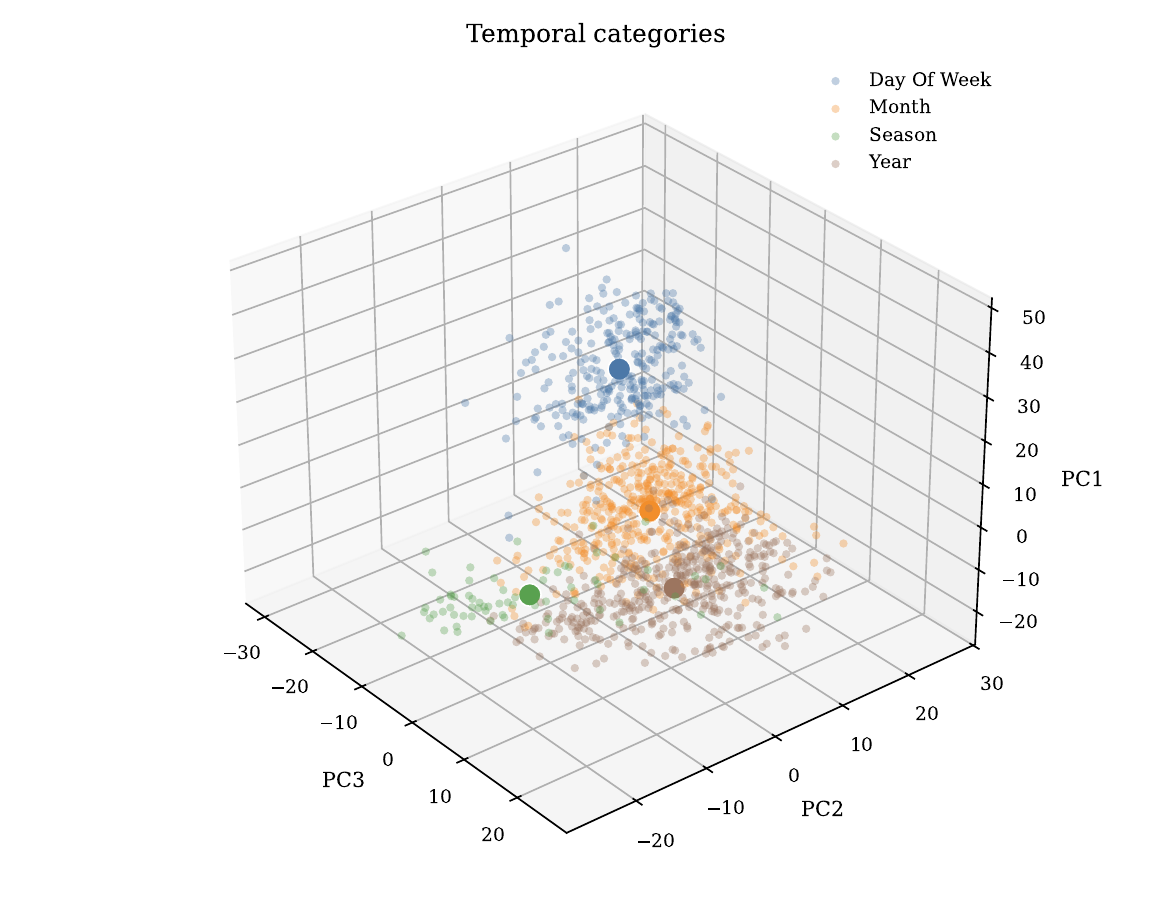}
        \caption{\textbf{Temporal subspace.} PCA separates days, months, seasons, and years.}
        \label{fig:temporal_scatter}
    \end{subfigure}\hfill
    \begin{subfigure}[t]{0.48\linewidth}
        \centering
        \includegraphics[width=\linewidth]{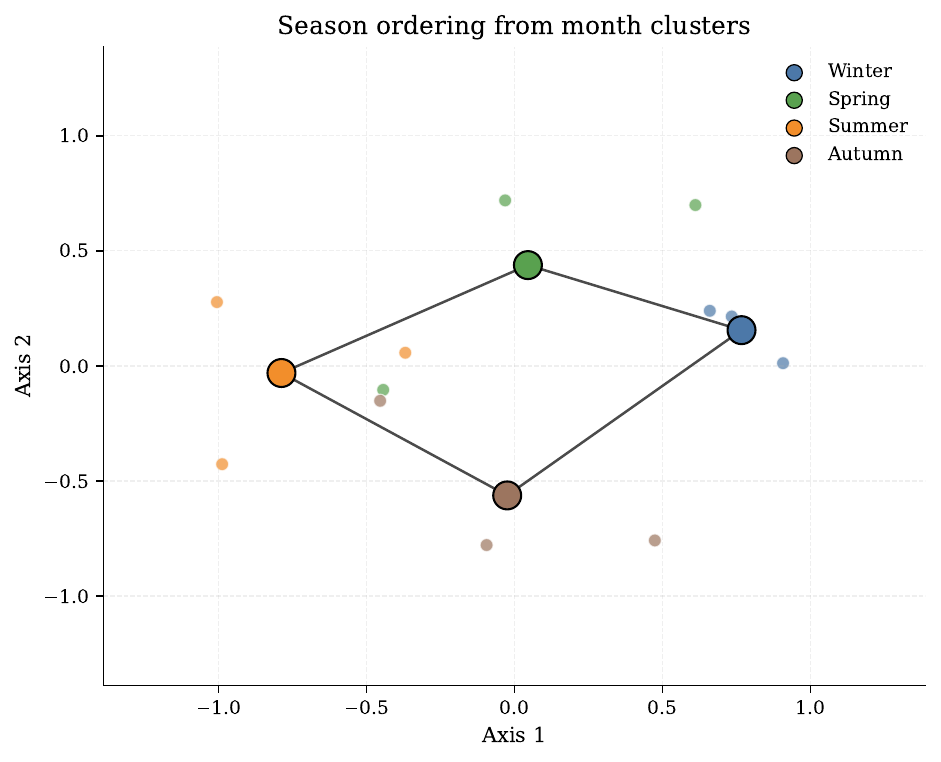}
        \caption{\textbf{Cyclic topology.} The learned subspace recovers seasonal order.}
        \label{fig:temporal_season_order}
    \end{subfigure}
    \vspace{-5pt}
\end{figure}

\subsection{Core metrics and baseline comparisons}

Section~\ref{sec:sample-compelxity} predicts that SASA requires fewer activations to recover feature structure than standard SAEs. To test this, we train SASA with half the token budget of the corresponding standard SAEs (GPT-2: $150\mathrm{M}$ vs $300\mathrm{M}$; Mistral: $500\mathrm{M}$ vs $1\mathrm{B}$). Table~\ref{tab:sae_sasa_summary} shows that SASA matches or exceeds standard SAE performance on KL score~\citep{Braun_Taylor_Goldowsky-Dill_Sharkey_2024}, CE score~\citep{Rajamanoharan_Conmy_Smith_Lieberum_Varma_Kramár_Shah_Nanda_2024}, explained variance, and sparsity across both models, providing empirical support for the sample-complexity advantage.

Beyond reconstruction, the key question is whether SASA improves monosemanticity. \emph{Feature absorption}---where a child latent absorbs its parent's direction while the parent develops a ``hole''---is a failure mode closely linked to splitting, since distributing a feature across multiple latents facilitates direction migration. Table~\ref{tab:absorption_summary} evaluates absorption via the first-letter benchmark~\citep{Chanin_Wilken-Smith_Dulka_Bhatnagar_Golechha_Bloom_2025} (details in Appendix~\ref{apx:exp-absorption}), showing that SASA substantially reduces absorption on both models (GPT-2: $6.6\%$ vs $37.2\%$; Mistral: $18.3\%$ vs $24.0\%$).

TopK-SASA achieves the best overall balance with $0.833$ AutoInterp and only $0.046$ mean absorption. BatchTopK is the only baseline with marginally higher AutoInterp ($0.840$), but at $4\times$ higher absorption ($0.192$).

\subsection{Interpretability}
\label{subsec:interpretability}

The temporal concepts examined below occupy a genuinely low-dimensional subspace in raw GPT-2 activations ($\mathrm{dim90}=14$ out of $768$ dimensions, without any SAE; see Appendix~\ref{apx:low-dim-validation}), confirming the structural assumption underlying our theoretical analysis. \citet{Engels_Michaud_Liao_Gurnee_Tegmark_2025} found that standard SAEs learn fragmented clusters of decoder atoms for these concepts: $9$ atoms for days, $16$ for months, and $10$ for years, totaling $35$ atoms. We find that GPT-2 SASA \emph{Group~1473} captures a \emph{universal} set of temporal relations within a single decoder group of $6$ vectors, consolidating what standard SAEs fragment across $35$ directions.

We illustrate this in Figures~\ref{fig:temporal_profiles}--\ref{fig:temporal_season_order}. The group acts as a general temporal detector (Figure~\ref{fig:temporal_profiles}), firing consistently on days, months, years, and seasons within a natural context. Crucially, the subspace preserves the geometry of these features. PCA visualization (Figure~\ref{fig:temporal_scatter}) separates temporal categories into distinct regions, and the subspace recovers the inherent cyclic topology of time (Figure~\ref{fig:temporal_season_order}). These subspaces extend beyond temporal and geographical concepts, with Appendix~\ref{apx:additional-interp} presenting a sports subspace (Group~1056). Geographical analysis appears in Appendix~\ref{apx:geo}; for the complete methodology, see Appendix~\ref{apx:exp-interpretabiity}.

\section{Conclusion}
\label{sec:discussion}
We introduced Subspace-Aware Sparse Autoencoders (SASA), which replace single-decoder directions with learned decoder subspaces and enforce structured sparsity at the group level for mechanistic interpretability of LLMs. We provided theoretical motivation for why vector dictionaries can split multi-dimensional concepts (leading to feature splitting), how this is not only a necessity geometrically, but a preference in the optimization dynamics of SAEs, and why subspace groups can represent them more faithfully. We also rigorously show the gain in sample efficiency needed for training SASA compared to standard SAE.

\paragraph{Future work.} We plan to study whether SASA generalizes across model families, guided by ideas such as the Platonic Representation Hypothesis \citep{Huh_Cheung_Wang_Isola_2024} and the Universal Weight Subspace Hypothesis \citep{Kaushik_Chaudhari_Vaidya_Chellappa_Yuille_2025}. If subspaces align across models, they could enable the transfer of interpretability structure and reduce repeated effort. We also aim to recover meaningful internal coordinates within each learned subspace and use them for inference-time steering and intervention. This direction explicitly leverages the richer geometry of LLM representations and may help address known robustness and generalization issues in steering, as discussed in \citet{Tan_Chanin_Lynch_Kanoulas_Paige_Garriga-Alonso_Kirk_2025}.

\paragraph{Limitations.} Despite the strengths of our approach, this work has several limitations that merit discussion. First, although SASA cuts training cost substantially, the remaining compute is still too high for very large LLMs. Collecting enough activations and training at scale would still take considerable time and resources. Second, learning a feature \emph{subspace} does not automatically yield an interpretable coordinate system inside that subspace. We can identify the span, but we do not yet know how different feature values are organized within it, and our current analysis often relies on manual inspection.

\bibliography{ref}
\bibliographystyle{plainnat}

\clearpage
\appendix

\section*{Appendix}

\section{Proofs of Theoretical Results}
\label{apx:theory}
This appendix collects the detailed proofs of all theorems, propositions, and lemmas stated or used in the main body.

\subsection{Necessity of Subspace Learning}
\label{apx:theory-necessity}

We introduce a few useful facts, which are key in the proof of Theorem~\ref{thm:line-covering} and Theorem~\ref{thm:instability}. To start with, we first state some geometric preliminaries. Throughout this section, $\sigma_n$ denotes the surface measure on $\mathbb{S}^n$ and $\gamma(\boldsymbol{u},\boldsymbol{v}) = \arccos(|\langle\boldsymbol{u},\boldsymbol{v}\rangle|)$ denotes the projective angular distance on $\mathbb{S}^{d-1}$.

\subsubsection{Geometric preliminaries}

\begin{lemma}[Subspace projection reduces distance]
\label{lem:proj}
For any $\boldsymbol{h}\in\mathcal{V}_i$ and any linear subspace $\boldsymbol{W}\subset\mathbb{R}^d$,
\[
  \operatorname{dist}\bigl(\boldsymbol{h},\boldsymbol{P}_{\mathcal{V}_i}\boldsymbol{W}\bigr)
  \le
  \operatorname{dist}(\boldsymbol{h},\boldsymbol{W}).
\]
\end{lemma}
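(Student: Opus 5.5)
The plan is to reduce the statement to the elementary fact that orthogonal projection onto a subspace is a contraction, combined with the fact that $\boldsymbol{h}$ is fixed by $\boldsymbol{P}_{\mathcal{V}_i}$. Throughout, $\boldsymbol{P}_{\mathcal{V}_i}\boldsymbol{W}$ denotes the image $\{\boldsymbol{P}_{\mathcal{V}_i}\boldsymbol{w}:\boldsymbol{w}\in\boldsymbol{W}\}$. This image is again a linear subspace, since it is the image of a subspace under a linear map. Because $\boldsymbol{W}$ is a finite-dimensional subspace, the infimum defining $\operatorname{dist}(\boldsymbol{h},\boldsymbol{W})$ is attained, for instance at $\boldsymbol{w}^\star=\boldsymbol{P}_{\boldsymbol{W}}\boldsymbol{h}$. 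We could also work with an arbitrary $\boldsymbol{w}\in\boldsymbol{W}$ and take the infimum at the end, which avoids the attainment question altogether.

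The key steps, in order, are as follows. First, fix any $\boldsymbol{w}\in\boldsymbol{W}$. Since $\boldsymbol{h}\in\mathcal{V}_i$, we have $\boldsymbol{P}_{\mathcal{V}_i}\boldsymbol{h}=\boldsymbol{h}$, and therefore
\[
\boldsymbol{h}-\boldsymbol{P}_{\mathcal{V}_i}\boldsymbol{w}=\boldsymbol{P}_{\mathcal{V}_i}(\boldsymbol{h}-\boldsymbol{w}).
\]
Second, $\boldsymbol{P}_{\mathcal{V}_i}=\boldsymbol{V}_i\boldsymbol{V}_i^\top$ is an orthogonal projector, so $\|\boldsymbol{P}_{\mathcal{V}_i}\|_2\le 1$. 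Equivalently, Pythagoras gives $\|\boldsymbol{x}\|_2^2=\|\boldsymbol{P}_{\mathcal{V}_i}\boldsymbol{x}\|_2^2+\|(\boldsymbol{I}-\boldsymbol{P}_{\mathcal{V}_i})\boldsymbol{x}\|_2^2$. Combining the two steps yields
\[
\operatorname{dist}(\boldsymbol{h},\boldsymbol{P}_{\mathcal{V}_i}\boldsymbol{W})\le\|\boldsymbol{h}-\boldsymbol{P}_{\mathcal{V}_i}\boldsymbol{w}\|_2\le\|\boldsymbol{h}-\boldsymbol{w}\|_2.
\]
Third, take the infimum over $\boldsymbol{w}\in\boldsymbol{W}$ on the right-hand side, which gives exactly the claimed inequality.

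No step here is a genuine obstacle. The only point needing care is notational: $\boldsymbol{P}_{\mathcal{V}_i}\boldsymbol{W}$ must be read as the image subspace, which lies inside $\mathcal{V}_i$ and may have dimension smaller than $\dim\boldsymbol{W}$. The argument never uses dimensions, so a possible drop in dimension under projection causes no issue. This is also the form in which the lemma will be used in Theorem~\ref{thm:line-covering}. There it lets us replace each span of $k$ decoder atoms in $\mathbb{R}^d$ by a subspace of $\mathcal{V}_i$ of dimension at most $k$ without increasing the covering error, so the covering problem reduces to one intrinsic to the $d_i$-dimensional sphere.
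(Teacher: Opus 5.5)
Your proof is correct and is essentially the paper's argument. The paper takes a minimizer $\boldsymbol{w}^*\in\boldsymbol{W}$ and applies Pythagoras to the splitting $\boldsymbol{h}-\boldsymbol{w}^*=(\boldsymbol{h}-\boldsymbol{P}_{\mathcal{V}_i}\boldsymbol{w}^*)-\boldsymbol{P}_{\mathcal{V}_i^\perp}\boldsymbol{w}^*$, which is exactly your identity $\boldsymbol{h}-\boldsymbol{P}_{\mathcal{V}_i}\boldsymbol{w}=\boldsymbol{P}_{\mathcal{V}_i}(\boldsymbol{h}-\boldsymbol{w})$ combined with the contraction property, and your choice to use an arbitrary $\boldsymbol{w}$ and take the infimum at the end simply avoids needing the attainment step.
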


\begin{proof}
Recall that $\mathcal{V}_i$ is the feature subspace and $\boldsymbol{P}_{\mathcal{V}_i}$ the orthogonal projector onto it. The claim is that projecting a competing subspace $\boldsymbol{W}$ onto $\mathcal{V}_i$ can only bring it closer to a point that already lies in $\mathcal{V}_i$. To see this, let $\boldsymbol{w}^*$ attain the distance from $\boldsymbol{h}$ to $W$. Since $\boldsymbol{h}\in\mathcal{V}_i$, the decomposition $\boldsymbol{h}-\boldsymbol{w}^* = (\boldsymbol{h}-\boldsymbol{P}_{\mathcal{V}_i}\boldsymbol{w}^*) - \boldsymbol{P}_{\mathcal{V}_i^\perp}\boldsymbol{w}^*$ splits into orthogonal components lying in $\mathcal{V}_i$ and $\mathcal{V}_i^\perp$ respectively, so the Pythagorean theorem gives
\[
  \|\boldsymbol{h}-\boldsymbol{w}^*\|_2^2
  =
  \|\boldsymbol{h}-\boldsymbol{P}_{\mathcal{V}_i}\boldsymbol{w}^*\|_2^2
  +
  \|\boldsymbol{P}_{\mathcal{V}_i^\perp}\boldsymbol{w}^*\|_2^2
  \ge
  \operatorname{dist}\bigl(\boldsymbol{h},\boldsymbol{P}_{\mathcal{V}_i}W\bigr)^2.
  \qedhere
\]
\end{proof}

\begin{lemma}[Tube volume on the sphere]
\label{lem:tube}
Let $\boldsymbol{W}\subset\mathbb{R}^{d_i}$ be a $k$-dimensional linear subspace with $1\le k < d_i$, and let $\beta\in(0,\pi/2)$. Define the angular tube
\[
  \mathrm{Tube}_\beta(\boldsymbol{W})
  =
  \bigl\{\boldsymbol{u}\in\mathbb{S}^{d_i-1}
  :\operatorname{dist}(\boldsymbol{u},\boldsymbol{W})
  \le\sin\beta\bigr\}.
\]
Then
\[
  \sigma_{d_i-1}\bigl(\mathrm{Tube}_\beta(\boldsymbol{W})\bigr)
  \le
  C_{\mathrm{T}}(d_i,k)\beta^{d_i-k},
  \qquad
  C_{\mathrm{T}}(d_i,k)
  =
  \frac{\sigma_{k-1}(\mathbb{S}^{k-1})\sigma_{d_i-k-1}(\mathbb{S}^{d_i-k-1})}{d_i-k}.
\]
\end{lemma}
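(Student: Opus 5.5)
We compute the tube volume in coordinates adapted to the splitting $\mathbb{R}^{d_i}=\boldsymbol{W}\oplus\boldsymbol{W}^\perp$. By rotation invariance of $\sigma_{d_i-1}$ we may take $\boldsymbol{W}=\mathbb{R}^k\times\{\boldsymbol{0}\}$. Every $\boldsymbol{u}\in\mathbb{S}^{d_i-1}$ off the two exceptional great spheres can be written as
\[
\boldsymbol{u}=(\cos\theta\,\boldsymbol{a},\ \sin\theta\,\boldsymbol{b}),\qquad \boldsymbol{a}\in\mathbb{S}^{k-1},\ \boldsymbol{b}\in\mathbb{S}^{d_i-k-1},\ \theta\in[0,\pi/2].
\]
Here $\operatorname{dist}(\boldsymbol{u},\boldsymbol{W})=\|\boldsymbol{P}_{\boldsymbol{W}^\perp}\boldsymbol{u}\|_2=\sin\theta$. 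So the tube is exactly $\{\theta\le\beta\}$, since $\sin$ is increasing on $[0,\pi/2]$ and $\beta<\pi/2$.

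The key step is the standard join (bispherical) formula for surface measure:
\[
d\sigma_{d_i-1}=\cos^{k-1}\theta\,\sin^{d_i-k-1}\theta\,d\theta\,d\sigma_{k-1}(\boldsymbol{a})\,d\sigma_{d_i-k-1}(\boldsymbol{b}).
\]
I would justify it in one of two ways. The first is to compute the Gram determinant of the parametrization: the tangent directions along $\boldsymbol{a}$ are scaled by $\cos\theta$, those along $\boldsymbol{b}$ by $\sin\theta$, and the $\theta$-direction is a unit vector orthogonal to both. The second is to integrate a radial Gaussian in polar coordinates on each factor and match with polar coordinates on $\mathbb{R}^{d_i}$.

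With that formula, Fubini gives
\[
\sigma_{d_i-1}(\mathrm{Tube}_\beta)=\sigma_{k-1}(\mathbb{S}^{k-1})\,\sigma_{d_i-k-1}(\mathbb{S}^{d_i-k-1})\int_0^\beta\cos^{k-1}\theta\,\sin^{d_i-k-1}\theta\,d\theta .
\]
Bounding $\cos^{k-1}\theta\le1$ and $\sin\theta\le\theta$ makes the integral at most $\beta^{d_i-k}/(d_i-k)$, which is exactly the stated constant $C_{\mathrm T}(d_i,k)$.

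Two degenerate cases need conventions. When $k=1$, $\mathbb{S}^0=\{\pm1\}$ has counting measure $2$, which accounts for both sides of the line. When $d_i-k=1$, $\mathbb{S}^0$ again has measure $2$ and the integrand is $\cos^{k-1}\theta$. In both cases the formula above remains valid. The exceptional sets $\{\theta=0\}$ and $\{\theta=\pi/2\}$ have measure zero and can be ignored.

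The main obstacle is stating and justifying the measure decomposition cleanly, including these edge cases. I would cite it as the standard polar/join decomposition and verify the Jacobian briefly rather than derive it in full.
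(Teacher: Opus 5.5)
Your proposal is correct and follows essentially the same route as the paper: the same parametrization $\boldsymbol{u}=\cos\theta\,\boldsymbol{w}+\sin\theta\,\boldsymbol{n}$ adapted to $\boldsymbol{W}\oplus\boldsymbol{W}^\perp$, the same product formula for $d\sigma_{d_i-1}$, the identification of the tube with $\{\theta\le\beta\}$, and the same bounds $\cos\theta\le 1$, $\sin\theta\le\theta$. The only difference is that the paper cites the measure decomposition from Dai--Xu instead of checking the Jacobian, so your Gram-determinant check and your remarks on the $\mathbb{S}^0$ edge cases are a sound, self-contained addition.
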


\begin{proof}
The bound estimates the surface area of a thin angular neighborhood of a $k$-dimensional subspace inside the sphere, which is the geometric quantity that limits how much of the feature slice a single $k$-dimensional span can cover. Without loss of generality, let $\boldsymbol{W}=\operatorname{span}\{\boldsymbol{e}_1,\dots,\boldsymbol{e}_k\}$. Parameterize $\boldsymbol{u}\in\mathbb{S}^{d_i-1}$ as $\boldsymbol{u}=\cos\theta\boldsymbol{w}+\sin\theta\boldsymbol{n}$ with $\boldsymbol{w}\in\mathbb{S}^{k-1}\subset \boldsymbol{W}$, $\boldsymbol{n}\in\mathbb{S}^{d_i-k-1}\subset \boldsymbol{W}^\perp$, and $\theta\in[0,\pi/2]$. The decomposition of the surface measure adapted to the splitting
$\mathbb{R}^{d_i}=\boldsymbol{W}\oplus\boldsymbol{W}^\perp$
\citep[Lemma~A.5.4]{dai2013approximation}, after writing the
$\boldsymbol{W}^\perp$ component in polar coordinates (with $m=k$, $d=d_i-k$), gives
\[
  d\sigma_{d_i-1}(\boldsymbol{u})
  =\cos^{k-1}\theta\,\sin^{d_i-k-1}\theta\,
   d\theta\,d\sigma_{k-1}(\boldsymbol{w})\,d\sigma_{d_i-k-1}(\boldsymbol{n}).
\]
The tube condition $\operatorname{dist}(\boldsymbol{u},\boldsymbol{W})=\sin\theta\le\sin\beta$ is equivalent to $\theta\le\beta$, the restriction $\beta\in(0,\pi/2)$ keeping $\sin$ monotone so the two are interchangeable. Integrating and using $\cos\theta\le 1$, $\sin\theta\le\theta$ on $[0,\pi/2]$,
\[
  \sigma_{d_i-1}\bigl(\mathrm{Tube}_\beta(\boldsymbol{W})\bigr)
  \le
  \sigma_{k-1}(\mathbb{S}^{k-1})\sigma_{d_i-k-1}(\mathbb{S}^{d_i-k-1})
  \int_0^\beta\theta^{d_i-k-1}d\theta
  =
  C_{\mathrm{T}}(d_i,k)\beta^{d_i-k}.
  \qedhere
\]
\end{proof}

\begin{lemma}[Cap volume positivity]
\label{lem:cap}
For $d_i\ge 2$, $\boldsymbol{u}_0\in\mathbb{S}^{d_i-1}$, and $\rho\in(0,\pi/2)$, there exists a constant $c_\rho=c_\rho(d_i)>0$ such that
\[
  \sigma_{d_i-1}\bigl(B_\gamma(\boldsymbol{u}_0,\rho)\bigr)
  \ge
  c_\rho.
\]
\end{lemma}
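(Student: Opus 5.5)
The plan is to compute the cap measure explicitly in the same adapted coordinates used in the proof of Lemma~\ref{lem:tube}, now with $k=1$. By rotation invariance of $\sigma_{d_i-1}$ we may take $\boldsymbol{u}_0=\boldsymbol{e}_1$. Any $\boldsymbol{u}\in\mathbb{S}^{d_i-1}$ is then written as $\boldsymbol{u}=\cos\theta\,\boldsymbol{e}_1+\sin\theta\,\boldsymbol{n}$ with $\boldsymbol{n}\in\mathbb{S}^{d_i-2}\subset\boldsymbol{e}_1^\perp$ and polar angle $\theta\in[0,\pi]$. In these coordinates $d\sigma_{d_i-1}=\sin^{d_i-2}\theta\,d\theta\,d\sigma_{d_i-2}(\boldsymbol{n})$; this is the standard polar decomposition and also the $k=1$ instance of the formula invoked from \citet{dai2013approximation}. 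The choice $\theta\in[0,\pi]$ rather than $[0,\pi/2]$ lets us cover the full cap about $\boldsymbol{u}_0$.

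Next I identify the geodesic ball with an angular cap. The ball is $B_\gamma(\boldsymbol{u}_0,\rho)=\{\boldsymbol{u}:\arccos\langle\boldsymbol{u},\boldsymbol{u}_0\rangle\le\rho\}=\{\theta\le\rho\}$. If $\gamma$ is read with the projective absolute value, the ball is the union of two antipodal caps and is only larger, so a lower bound for one cap suffices. Fubini then gives
\[
\sigma_{d_i-1}\bigl(B_\gamma(\boldsymbol{u}_0,\rho)\bigr)\ \ge\ \sigma_{d_i-2}(\mathbb{S}^{d_i-2})\int_0^{\rho}\sin^{d_i-2}\theta\,d\theta .
\]

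To bound the integral from below, I use that $\rho<\pi/2$, so on $[0,\rho]$ Jordan's inequality $\sin\theta\ge 2\theta/\pi$ applies. This yields
\[
\int_0^\rho\sin^{d_i-2}\theta\,d\theta\ \ge\ \left(\tfrac{2}{\pi}\right)^{d_i-2}\frac{\rho^{d_i-1}}{d_i-1}.
\]
Therefore $c_\rho:=\sigma_{d_i-2}(\mathbb{S}^{d_i-2})\,(2/\pi)^{d_i-2}\rho^{d_i-1}/(d_i-1)>0$ works, and it depends only on $d_i$ and $\rho$.

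The one point needing care is the degenerate case $d_i=2$, where $\mathbb{S}^{0}=\{\pm1\}$ carries counting measure, so $\sigma_0(\mathbb{S}^0)=2$. The formula then reduces to the arc length $2\rho$, which is consistent with the general expression. No step is a serious obstacle. The only thing to check is that the measure decomposition holds with the full range $\theta\in[0,\pi]$, which is standard.
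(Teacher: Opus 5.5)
Your proposal is correct and follows the paper's route: both use rotation invariance and the polar decomposition $d\sigma_{d_i-1}=\sin^{d_i-2}\theta\,d\theta\,d\sigma_{d_i-2}$ to reduce the cap measure to $\sigma_{d_i-2}(\mathbb{S}^{d_i-2})\int_0^\rho\sin^{d_i-2}\theta\,d\theta>0$. Your Jordan-inequality step only makes the constant explicit, and your $\ge$ in place of the paper's $=$ is the more careful choice, since under the paper's projective $\gamma$ the ball is two disjoint antipodal caps when $\rho<\pi/2$.
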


\begin{proof}
The statement states that a geodesic cap of fixed angular radius $\rho$ on the unit sphere carries a strictly positive amount of surface measure, a fact we use later as the area that the covering tubes must collectively account for. By rotational invariance, $\sigma_{d_i-1}(B_\gamma(\boldsymbol{u}_0,\rho)) = \sigma_{d_i-2}(\mathbb{S}^{d_i-2})\int_0^\rho\sin^{d_i-2}\theta\,d\theta$, which is positive since the integrand is continuous and strictly positive on $(0,\rho)$.
\end{proof}

\subsubsection{Proof of Theorem~\ref{thm:line-covering}}
\label{apx:proof-covering}

\begin{proof}
Recall that $\mathcal{M}_i(t)$ is the noiseless single-feature slice, i.e.\ the radius-$t$ sphere $\{t\boldsymbol{u}:\boldsymbol{u}\in\mathbb{S}^{d-1}\cap\mathcal{V}_i\}$ inside the feature subspace $\mathcal{V}_i$, and that the goal is to lower bound the number $N$ of unit decoder directions needed to $\varepsilon$-cover it with $k$-dimensional spans. Fix $1\le k<d_i$, set $\beta=\arcsin(\varepsilon/t)\in(0,\rho)$, where the hypothesis $\varepsilon\in(0,t\sin\rho)$ keeps the argument in range so that $\beta<\rho<\pi/2$, and let $\boldsymbol{d}_1,\dots,\boldsymbol{d}_N\in\mathbb{S}^{d-1}$ account for $L_i^{(k)}(\varepsilon)=N$. By Definition~\ref{def:k-cover}, every $\boldsymbol{h}=t\boldsymbol{u}\in\mathcal{M}_i(t)$ admits a subset $J\subseteq[N]$ with $|J|\le k$ satisfying
\[
  \operatorname{dist}\bigl(\boldsymbol{h},\operatorname{span}\{\boldsymbol{d}_j\}_{j\in J}\bigr)
  \le
  \varepsilon.
\]

Lemma~\ref{lem:proj} projects this guarantee into $\mathcal{V}_i$, so that with $\widetilde{\boldsymbol{W}}_J = \boldsymbol{P}_{\mathcal{V}_i}\operatorname{span}\{\boldsymbol{d}_j\}_{j\in J}$ and $\dim(\widetilde{\boldsymbol{W}}_J)\le k$, every $\boldsymbol{u}\in B_\gamma(\boldsymbol{u}_0,\rho)\cap\mathcal{V}_i$ lies in $\mathrm{Tube}_\beta(\widetilde{\boldsymbol{W}}_J)$. The number of $k$-element subsets of $[N]$ is at most
\[
  \sum_{j=1}^{k}\binom{N}{j}
  \le
  kN^k.
\]
Each tube has surface measure at most $C_{\mathrm{T}}(d_i,k)\beta^{d_i-k}$ by Lemma~\ref{lem:tube}. The union of all tubes must cover $B_\gamma(\boldsymbol{u}_0,\rho)\cap\mathcal{V}_i$, so Lemma~\ref{lem:cap} gives
\[
  0
  <
  c_\rho
  \le
  kN^k\cdot C_{\mathrm{T}}(d_i,k)\cdot\beta^{d_i-k}.
\]

Now, rearranging and using $\sin\beta\ge\frac{2}{\pi}\beta$ on $(0,\frac{\pi}{2})$, which gives $\beta\le\frac{\pi\varepsilon}{2t}$,
\[
  N^k
  \ge
  \frac{c_\rho}{kC_{\mathrm{T}}} \cdot \beta^{-(d_i-k)}
  \ge
  \frac{c_\rho}{kC_{\mathrm{T}}}
  \left(\frac{2}{\pi}\right)^{d_i-k}
  \left(\frac{t}{\varepsilon}\right)^{d_i-k}.
\]
Taking $k$-th roots yields $N\ge C(d_i,k,\rho)(t/\varepsilon)^{(d_i-k)/k}$ with $C(d_i,k,\rho) = (c_\rho/kC_{\mathrm{T}})^{1/k}(2/\pi)^{(d_i-k)/k}$.

When $k=d_i$, any orthonormal basis of $\mathcal{V}_i$ spans the full subspace, achieving $\operatorname{dist}(\boldsymbol{h},\mathcal{V}_i)=0$ for all $\boldsymbol{h}\in\mathcal{M}_i(t)$ with $N=d_i$.
\end{proof}

\subsubsection{Proof of Proposition~\ref{prop:superposition}}
\label{apx:proof-super}

\begin{proof}
Under Hypothesis~\ref{ass:mdsuperposition}, conditioning on feature $i$ with at most $s$ active features gives $\boldsymbol{h}=\boldsymbol{V}_i\boldsymbol{z}_i+\boldsymbol{\delta}$ with $\boldsymbol{\delta}=\sum_{j\neq i}\boldsymbol{V}_j\boldsymbol{z}_j+\boldsymbol{\xi}$. By Lemma~\ref{lem:proj}, the count in Theorem~\ref{thm:line-covering} depends only on the projected spans $\widetilde{\boldsymbol{W}}_J=\boldsymbol{P}_{\mathcal{V}_i}\operatorname{span}\{\boldsymbol{d}_j\}_{j\in J}$, so the slice point $\boldsymbol{V}_i\boldsymbol{z}_i$ is perturbed only by the in-subspace component $\boldsymbol{P}_{\mathcal{V}_i}\boldsymbol{\delta}$, which we bound first.

The $\mu$-coherence of Hypothesis~\ref{ass:mdsuperposition} is the operator-norm bound $\|\boldsymbol{V}_i^\top\boldsymbol{V}_j\|_2\le\mu$ for $j\neq i$. Because $\boldsymbol{V}_i$ has orthonormal columns, each off-feature satisfies
\[
  \|\boldsymbol{P}_{\mathcal{V}_i}\boldsymbol{V}_j\boldsymbol{z}_j\|_2
  =\|\boldsymbol{V}_i\boldsymbol{V}_i^\top\boldsymbol{V}_j\boldsymbol{z}_j\|_2
  =\|\boldsymbol{V}_i^\top\boldsymbol{V}_j\boldsymbol{z}_j\|_2
  \le\|\boldsymbol{V}_i^\top\boldsymbol{V}_j\|_2\,\|\boldsymbol{z}_j\|_2
  \le\mu t,
\]
where the last step uses $\|\boldsymbol{z}_j\|_2\le t$. Summing over the at most $s-1$ active off-features and using $\|\boldsymbol{P}_{\mathcal{V}_i}\boldsymbol{\xi}\|_2\le\|\boldsymbol{\xi}\|_2\le\eta$ gives
\[
  \|\boldsymbol{P}_{\mathcal{V}_i}\boldsymbol{\delta}\|_2
  \le
  (s-1)\mu t+\eta
  \le
  \mu s t+\eta.
\]

Now fix a subset $J\subseteq[N]$ with $|J|\le k$ and a point $\boldsymbol{p}\in\operatorname{span}\{\boldsymbol{d}_j\}_{j\in J}$ with $\|\boldsymbol{h}-\boldsymbol{p}\|_2\le\varepsilon$. Then $\boldsymbol{P}_{\mathcal{V}_i}\boldsymbol{p}\in\widetilde{\boldsymbol{W}}_J$ and $\dim\widetilde{\boldsymbol{W}}_J\le k$. Since $\boldsymbol{V}_i\boldsymbol{z}_i=\boldsymbol{P}_{\mathcal{V}_i}\boldsymbol{h}-\boldsymbol{P}_{\mathcal{V}_i}\boldsymbol{\delta}$,
\[
  \operatorname{dist}\bigl(\boldsymbol{V}_i\boldsymbol{z}_i,\widetilde{\boldsymbol{W}}_J\bigr)
  \le
  \|\boldsymbol{P}_{\mathcal{V}_i}(\boldsymbol{h}-\boldsymbol{p})-\boldsymbol{P}_{\mathcal{V}_i}\boldsymbol{\delta}\|_2
  \le
  \|\boldsymbol{h}-\boldsymbol{p}\|_2+\|\boldsymbol{P}_{\mathcal{V}_i}\boldsymbol{\delta}\|_2
  \le
  \varepsilon+\mu s t+\eta.
\]
The projected spans thus $k$-cover every slice point $\boldsymbol{V}_i\boldsymbol{z}_i\in\mathcal{M}_i(t)$ to error $\varepsilon'=\varepsilon+\mu st+\eta$. This is the in-$\mathcal{V}_i$ covering counted in the proof of Theorem~\ref{thm:line-covering}, so that argument applies with effective error $\varepsilon'=\varepsilon+\mu st+\eta$ whenever $\varepsilon'<t\sin\rho$.
\end{proof}

\subsubsection{Basis instability}
\label{apx:instability}

Throughout this part we work with the decoder risk $R(\boldsymbol{D})$ induced by the standard SAE objective. For a fixed decoder $\boldsymbol{D}$ the code is chosen to minimize the inner $\ell_1$-regularized reconstruction objective $\tfrac12\|\boldsymbol{h}-\boldsymbol{D}a(\boldsymbol{h})\|_2^2+\lambda\|a(\boldsymbol{h})\|_1$, and $R(\boldsymbol{D})$ averages this best-code value over $\boldsymbol{h}$. The goal is to show that the true basis decoder is not a minimizer of this risk. We begin by computing the optimal code, and the residual it leaves, when the decoder is exactly the feature basis.

\begin{lemma}[Basis residual under full activation]
\label{lem:residual}
At $\boldsymbol{D}_{\mathrm{basis}}=[\boldsymbol{v}_1,\dots,\boldsymbol{v}_{d_i}]$, the $\ell_1$-regularized reconstruction objective
\[
  \min_{a(\boldsymbol{h})\in\mathbb{R}^{d_i}}
  \tfrac{1}{2}\|\boldsymbol{h}-\boldsymbol{D}_{\mathrm{basis}}a(\boldsymbol{h})\|_2^2
  +\lambda\|a(\boldsymbol{h})\|_1
\]
for $\boldsymbol{h}=t\boldsymbol{u}$ with $\boldsymbol{u}=\sum_{k=1}^{d_i}u_k\boldsymbol{v}_k\in\mathbb{S}^{d_i-1}\cap\mathcal{V}_i$ has the unique solution
\[
  a_k^*
  =
  \operatorname{sign}(u_k)\bigl(t|u_k|-\lambda\bigr)_+,
  \qquad k\in[d_i].
\]
On the full-activation set $\mathcal{A}=\{\boldsymbol{u}\in\mathbb{S}^{d_i-1}:|u_k|>\lambda/t\forall k\}$, the residual is
\[
   r(\boldsymbol{h})
  =
  \lambda\sum_{k=1}^{d_i}\operatorname{sign}(u_k)\boldsymbol{v}_k,
  \qquad
  \| r(\boldsymbol{h})\|_2 = \lambda\sqrt{d_i}.
\]
\end{lemma}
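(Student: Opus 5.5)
The plan is to exploit the orthonormality of $\boldsymbol{D}_{\mathrm{basis}}$, which lets the objective split into $d_i$ decoupled scalar problems. Writing $\boldsymbol{h}=t\boldsymbol{u}=\sum_k tu_k\boldsymbol{v}_k$ and using that $\boldsymbol{v}_1,\dots,\boldsymbol{v}_{d_i}$ are orthonormal, the Pythagorean theorem gives
\[
  \tfrac12\|\boldsymbol{h}-\boldsymbol{D}_{\mathrm{basis}}a\|_2^2+\lambda\|a\|_1
  =\sum_{k=1}^{d_i}\Bigl[\tfrac12(tu_k-a_k)^2+\lambda|a_k|\Bigr].
\]
This holds because $\boldsymbol{h}$ lies in $\mathcal{V}_i=\operatorname{col}(\boldsymbol{D}_{\mathrm{basis}})$, so no component of $\boldsymbol{h}$ falls outside the span. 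The minimization therefore separates coordinatewise.

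Next I would solve each scalar problem $\min_{a_k}\tfrac12(x-a_k)^2+\lambda|a_k|$ with $x=tu_k$. Each summand is $1$-strongly convex, so the minimizer is unique, and hence so is the full minimizer. The subgradient optimality condition $0\in a_k-x+\lambda\,\partial|a_k|$ yields the soft-thresholding formula $a_k^*=\operatorname{sign}(x)(|x|-\lambda)_+$. Since $t>0$, this equals $\operatorname{sign}(u_k)(t|u_k|-\lambda)_+$. I would verify the two cases briefly: if $|x|\le\lambda$, then $a_k=0$ is optimal because $x\in\lambda[-1,1]$; if $|x|>\lambda$, then $a_k=x-\lambda\operatorname{sign}(x)$ has the same sign as $x$ and satisfies the condition.

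For the residual, on $\mathcal{A}$ every $|u_k|>\lambda/t$, so every coordinate is active and $tu_k-a_k^*=\lambda\operatorname{sign}(u_k)$. Hence
\[
r(\boldsymbol{h})=\boldsymbol{h}-\boldsymbol{D}_{\mathrm{basis}}a^*=\sum_k(tu_k-a_k^*)\boldsymbol{v}_k=\lambda\sum_k\operatorname{sign}(u_k)\boldsymbol{v}_k.
\]
Its squared norm is $\lambda^2\sum_k\operatorname{sign}(u_k)^2=\lambda^2 d_i$, again by orthonormality, and signs are nonzero on $\mathcal{A}$.

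The main obstacle is minor. One must be careful that the decomposition uses only in-span coordinates, and that $\operatorname{sign}(u_k)\neq0$ on $\mathcal{A}$. To make the norm claim meaningful, I would also note that $\mathcal{A}$ is nonempty with positive measure when $\lambda/t<1/\sqrt{d_i}$: the point $u_k=1/\sqrt{d_i}$ lies in its interior. This is not needed for the lemma itself, however.
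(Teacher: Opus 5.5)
Your proposal is correct and follows the paper's proof essentially step for step. Orthonormality of the basis decouples the objective into $d_i$ scalar problems solved by soft-thresholding, and on $\mathcal{A}$ each residual coordinate equals $\lambda\operatorname{sign}(u_k)$, giving norm $\lambda\sqrt{d_i}$. Beyond the paper, your strong-convexity argument justifies the claimed uniqueness, and you correctly note that $\mathcal{A}$ has positive measure when $\lambda/t<1/\sqrt{d_i}$ (it is in fact empty otherwise).
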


\begin{proof}
Here $\lambda/t$ plays the role of a normalized activation threshold, and the full-activation set $\mathcal{A}$ is exactly the region of the slice on which every coordinate of $\boldsymbol{u}$ exceeds it and is therefore left active by soft-thresholding. Orthonormality of $\{\boldsymbol{v}_k\}$ and $\boldsymbol{h}\in\mathcal{V}_i$ decouple the objective into $d_i$ independent scalar subproblems,
\[
  \tfrac{1}{2}\|\boldsymbol{h}-\boldsymbol{D}_{\mathrm{basis}}a(\boldsymbol{h})\|_2^2
  +\lambda\|a(\boldsymbol{h})\|_1
  =
  \sum_{k=1}^{d_i}\Bigl[\tfrac{1}{2}(tu_k-a_k)^2+\lambda|a_k|\Bigr],
\]
each solved by the proximal operator of the absolute value (soft-thresholding) \citep{Tibshirani_1996}, yielding $a_k^*=\operatorname{sign}(u_k)(t|u_k|-\lambda)_+$. On full-activation set $\mathcal{A}$ every coordinate is active with $a_k^*=\operatorname{sign}(u_k)(t|u_k|-\lambda)$, and the $k$-th residual component is
\[
  tu_k - a_k^*
  =
  tu_k - \operatorname{sign}(u_k)(t|u_k|-\lambda)
  =
  \lambda\operatorname{sign}(u_k).
\]
Summing over $k$ and using orthonormality, $\| r(\boldsymbol{h})\|_2^2 = \lambda^2\sum_{k=1}^{d_i}\operatorname{sign}(u_k)^2 = d_i\lambda^2$ and the proof concludes with $\| r(\boldsymbol{h})\|_2 =\lambda \sqrt{d_i}$.
\end{proof}

\paragraph{Proof of Theorem~\ref{thm:violation}.}

\begin{proof}
The point is that the residual $ r(\boldsymbol{h})$ left by the basis decoder correlates with the off-basis direction $\widetilde{\boldsymbol{d}}$ more strongly than $\lambda$, which is exactly the condition under which $\widetilde{\boldsymbol{d}}$ would be activated. Write $\widetilde{\boldsymbol{d}}=\sum_{k=1}^{d_i}c_k\boldsymbol{v}_k$ with $\|\boldsymbol{c}\|_2=1$ and at least two nonzero coordinates. On the full-activation set $\mathcal{A}$, Lemma~\ref{lem:residual} gives the residual $ r(\boldsymbol{h})=\lambda\sum_k\operatorname{sign}(u_k)\boldsymbol{v}_k$, so orthonormality of the basis vectors yields
\[
  \widetilde{\boldsymbol{d}}^\top r(\boldsymbol{h})
  =
  \lambda\sum_{k=1}^{d_i}c_k\operatorname{sign}(u_k),
\]
which depends on $\boldsymbol{u}$ only through the sign pattern $\boldsymbol{\sigma}(\boldsymbol{u})=(\operatorname{sign}(u_1),\dots,\operatorname{sign}(u_{d_i}))$.
For any $\boldsymbol{\sigma}\in\{-1,+1\}^{d_i}$, $\sum_k c_k\sigma_k\le\sum_k|c_k|=\|\boldsymbol{c}\|_1$, with equality when $\sigma_k=\operatorname{sign}(c_k)$ for every nonzero $c_k$. Since at least two coordinates of $\boldsymbol{c}$ are nonzero, $\|\boldsymbol{c}\|_1 > \|\boldsymbol{c}\|_2 = 1$. On the aligned orthant $\mathcal{U}_{\boldsymbol{c}}$, the signs of $\boldsymbol{u}$ match those of the nonzero coordinates of $\boldsymbol{c}$, giving $\widetilde{\boldsymbol{d}}^\top\boldsymbol{r} = \lambda\|\boldsymbol{c}\|_1 > \lambda$.

\end{proof}

\paragraph{Proof of Proposition~\ref{prop:reduction}}

\begin{proof}
We turn the certificate violation of Theorem~\ref{thm:violation} into a strictly lower risk by exhibiting a feasible code that activates the extra column. At $\boldsymbol{D}=[\boldsymbol{D}_{\mathrm{basis}},\widetilde{\boldsymbol{d}}]$, the basis code $\widetilde{a}(\boldsymbol{h})=(a_1^*,\dots,a_{d_i}^*,0)$ is feasible with cost $L_{\mathrm{basis}}(\boldsymbol{h})$ but violates the $\ell_1$ stationarity (inactivity) condition $|\widetilde{\boldsymbol{d}}^\top\boldsymbol{r}|\le\lambda$ on $\mathcal{U}_{\boldsymbol{c}}\cap\mathcal{A}$ (Theorem~\ref{thm:violation}).

We construct a strictly better feasible code. Keep the basis coordinates fixed and optimize over $a_{d_i+1}$. The cost becomes
\[
  L(a_{d_i+1})
  =
  L_{\mathrm{basis}}(\boldsymbol{h})
  - a_{d_i+1}(\lambda\|\boldsymbol{c}\|_1-\lambda)
  + \frac{a_{d_i+1}^2}{2}.
\]
Minimizing over $a_{d_i+1}>0$ yields $a_{d_i+1}^*=\lambda\|\boldsymbol{c}\|_1-\lambda=\lambda(\|\boldsymbol{c}\|_1-1)>0$ with
\[
  L(a_{d_i+1}^*)
  =
  L_{\mathrm{basis}}(\boldsymbol{h})
  -
  \frac{\lambda^2(\|\boldsymbol{c}\|_1-1)^2}{2}.
\]
Since the $\ell_1$-regularized optimum $L^*(\boldsymbol{h};\boldsymbol{D})$ minimizes over all $d_i+1$ coordinates jointly, we have $L^*(\boldsymbol{h};\boldsymbol{D})\le L(a_{d_i+1}^*)$ and therefore
\[
  L_{\mathrm{basis}}(\boldsymbol{h}) - L^*(\boldsymbol{h};\boldsymbol{D})
  \ge
  \frac{\lambda^2(\|\boldsymbol{c}\|_1-1)^2}{2}
  >0
\]
on $\mathcal{U}_{\boldsymbol{c}}\cap\mathcal{A}$. For other points, feasibility of $\widetilde{a}(\boldsymbol{h})$ gives $L^*\le L_{\mathrm{basis}}$. The strict improvement on a set of positive measure and non-worsening elsewhere yield $R(\boldsymbol{D}_{\mathrm{basis}}) - R(\boldsymbol{D}) > 0$.
\end{proof}

\paragraph{Proof of Theorem~\ref{thm:instability}}
 
\begin{proof}
We promote the single-point certificate violation into a continuous descent path with rotating one extra decoder column toward the diagonal direction $\widetilde{\boldsymbol{d}}$ leaves the risk flat until the column crosses the activation boundary, after which it strictly lowers the risk. Set $\widetilde{\boldsymbol{d}}=\frac{1}{\sqrt{d_i}}\sum_{k=1}^{d_i}\boldsymbol{v}_k$, whose basis coordinates $\boldsymbol{c}=\frac{1}{\sqrt{d_i}}\mathbf{1}$ satisfy $\|\boldsymbol{c}\|_2=1$ and align with the all-positive orthant $\mathcal{U}^+:=\{\boldsymbol{u}\in\mathcal{A}:u_k>0\ \forall k\}$. Since $m > d_i$ and the $\ell_1$-regularized objective at $\boldsymbol{D}_0$ activates finitely many columns per input, there exists $j > d_i$ whose column $\boldsymbol{d}_j$ is inactive on $\mathcal{U}^+$, so that $|\boldsymbol{d}_j^\top r(\boldsymbol{h})|\le\lambda$ there. Rotate column $j$ along the geodesic
\[
  \boldsymbol{d}_j(\alpha)= \frac{\cos\alpha\,\boldsymbol{d}_j(0)+\sin\alpha\,\widetilde{\boldsymbol{d}}}{\|\cos\alpha\,\boldsymbol{d}_j(0)+\sin\alpha\,\widetilde{\boldsymbol{d}}\|_2},
  \qquad\alpha\ge 0.
\]
Let $\boldsymbol{D}(\alpha)$ be $\boldsymbol{D}_0$ with column $j$ replaced by
$\boldsymbol{d}_j(\alpha)$. Since column $j$ was inactive on
$\mathcal{U}^+\cap\mathcal{A}$, the optimal code at $\boldsymbol{D}_0$ for
these inputs uses only the shared columns and remains feasible at
$\boldsymbol{D}(\alpha)$ with the same cost. Any additional activation of
$\boldsymbol{d}_j(\alpha)$ can only reduce the $\ell_1$-regularized objective, so
\[
  L^*(\boldsymbol{h};\boldsymbol{D}(\alpha))
  \le
  L^*(\boldsymbol{h};\boldsymbol{D}_0)
\]
for all $\boldsymbol{h} = t\boldsymbol{u}$ and $\boldsymbol{u} \in \mathcal{U}^+$. Since $\|\boldsymbol{c}\|_1=\sqrt{d_i}$ here, Theorem~\ref{thm:violation} gives $\widetilde{\boldsymbol{d}}^\top r(\boldsymbol{h})=\lambda\sqrt{d_i}>\lambda$.
Writing $f(\alpha)=\boldsymbol{d}_j(\alpha)^\top r(\boldsymbol{h})/\lambda=\sqrt{d_i}\,\bigl(\boldsymbol{d}_j(\alpha)^\top\widetilde{\boldsymbol{d}}\bigr)$ and $\kappa=\boldsymbol{d}_j(0)^\top\widetilde{\boldsymbol{d}}$, the inactivity bound gives $f(0)=\sqrt{d_i}\,\kappa\le 1$, while $f(\alpha)$ goes to  $\sqrt{d_i}>1$ as $\boldsymbol{d}_j(\alpha)$ goes to $\widetilde{\boldsymbol{d}}$. Since $\boldsymbol{d}_j(\alpha)^\top\widetilde{\boldsymbol{d}}$ increases continuously from $\kappa$ to $1$ along the geodesic, $f$ is continuous and increasing, and the boundary angle $\alpha_0:=\inf\{\alpha\ge 0:f(\alpha)\ge 1\}$ is well defined with $f(\alpha_0)=1$, $f<1$ on $[0,\alpha_0)$, and $f>1$ on $\alpha>\alpha_0$.
 
Keeping the basis code fixed and optimizing only the code $a_j$ on the rotated column, the per-sample cost is $L(a_j)=L_{\mathrm{basis}}(\boldsymbol{h})-a_j\bigl(\boldsymbol{d}_j(\alpha)^\top r(\boldsymbol{h})\bigr)+\tfrac{1}{2}a_j^2$, minimized over $a_j\ge 0$ by $a_j^*=\lambda(f(\alpha)-1)_+$ (Proposition~\ref{prop:reduction}), so
\[
  L^*(\boldsymbol{h};\boldsymbol{D}(\alpha))
  \le
  L_{\mathrm{basis}}(\boldsymbol{h})-\tfrac{\lambda^2}{2}\bigl(f(\alpha)-1\bigr)_+^2 .
\]
For $\alpha\in[0,\alpha_0]$ we have $f(\alpha)\le 1$, the spare column stays inactive, and the reconstruction is unchanged on $\mathcal{U}^+$, so combined with $L^*(\boldsymbol{h};\boldsymbol{D}(\alpha))\le L^*(\boldsymbol{h};\boldsymbol{D}_0)$ on $(\mathcal{U}^+)^c$ we get $R(\boldsymbol{D}(\alpha))=R(\boldsymbol{D}_0)$. For $\alpha>\alpha_0$ the certificate is violated on $\mathcal{U}^+$ and the per-sample improvement is at least $\tfrac{\lambda^2}{2}(f(\alpha)-1)^2>0$ there. On $(\mathcal{U}^+)^c$ the $\boldsymbol{D}_0$ optimal code on the shared columns is still feasible at $\boldsymbol{D}(\alpha)$, giving $L^*(\boldsymbol{h};\boldsymbol{D}(\alpha))\le L^*(\boldsymbol{h};\boldsymbol{D}_0)$. The orthant $\mathcal{U}^+$ carries mass $\Pr(\mathcal{U}^+) = 2^{-d_i}$, so integrating the per-sample improvement over it leaves
\[
  R(\boldsymbol{D}_0)-R(\boldsymbol{D}(\alpha))
  \ge
  2^{-(d_i+1)}\lambda^2\bigl(f(\alpha)-1\bigr)^2
  >0
\]
for every $\alpha>\alpha_0$. Thus the path is flat on $[0,\alpha_0]$ and strictly descending afterward, exhibiting a continuous path from $\boldsymbol{D}_0$ to strictly lower risk.
\end{proof}
\subsection{Subspace-Aware Sparse Autoencoders}
\label{apx:theory-sasa}

\subsubsection{Proof of Proposition~\ref{prop:sasa-no-splitting}}
\label{apx:proof-sasa-no-splitting}

\begin{proof}
Let $\boldsymbol{V}_i=[\boldsymbol{v}_1,\dots,\boldsymbol{v}_{d_i}]$ be an orthonormal basis of $\mathcal{V}_i$, so that $\mathcal{M}_i(t)\subseteq\mathcal{V}_i=\operatorname{col}(\boldsymbol{V}_i)$. Because $r\ge d_i$, take the block $\boldsymbol{D}_{k^\star}=[\boldsymbol{v}_1,\dots,\boldsymbol{v}_{d_i},\boldsymbol{0},\dots,\boldsymbol{0}]\in\mathbb{R}^{d\times r}$, whose first $d_i$ columns carry the basis of $\mathcal{V}_i$ and whose remaining $r-d_i$ columns vanish, and let $\{\boldsymbol{D}_k\}_{k=1}^K$ be any block decoder that contains it. Then $\mathcal{V}_i\subseteq\operatorname{col}(\boldsymbol{D}_{k^\star})$, and every $\boldsymbol{h}\in\mathcal{M}_i(t)$ satisfies $\boldsymbol{h}\in\mathcal{V}_i\subseteq\operatorname{col}(\boldsymbol{D}_{k^\star})$, so $\operatorname{dist}\bigl(\boldsymbol{h},\operatorname{col}(\boldsymbol{D}_{k^\star})\bigr)=0$ and the supremum over the slice vanishes.

\noindent The reconstruction collapses to
\[
\boldsymbol{D}_{k^\star}a(\boldsymbol{h})_{k^\star}(\boldsymbol{h})=\boldsymbol{V}_i\boldsymbol{V}_i^\top\boldsymbol{h}=\boldsymbol{P}_{\mathcal{V}_i}\boldsymbol{h}=\boldsymbol{h},
\]
the final equality holding because $\boldsymbol{h}\in\mathcal{V}_i$. A single active group therefore reconstructs every $\boldsymbol{h}\in\mathcal{M}_i(t)$ with zero error while co-activating the $r$ columns of $\boldsymbol{D}_{k^\star}$, which is Definition~\ref{def:k-cover} with budget $k=r$. Since $r\ge d_i$, this lands in the regime where Theorem~\ref{thm:line-covering} gives $L_i^{(d_i)}(0)=d_i$ with no splitting.
\end{proof}
\subsubsection{Proof of Theorem~\ref{thm:block-recovery}}
\label{apx:proof-block-recovery}

\begin{proof}
We show the SASA objective does the opposite of the standard SAE objective. Rather than rejecting the consolidated solution, it makes a single block carrying the whole feature the unique global minimizer. Fix the encoder $\boldsymbol{E}$ and write $\boldsymbol{W}_k=\boldsymbol{D}_k\boldsymbol{E}_k$ for the per-block reconstruction map. The Top-$1$ gate partitions $\mathcal{M}_i(t)$ into the cells $\mathcal{C}_k=\{\boldsymbol{h}\in\mathcal{M}_i(t):\|\boldsymbol{E}_k\boldsymbol{h}\|_2>\|\boldsymbol{E}_{k'}\boldsymbol{h}\|_2\ \forall k'\neq k\}$ with pairwise boundaries of measure zero, so the active block is constant almost everywhere on each cell and the objective separates as
\[
  R(\boldsymbol{E},\boldsymbol{D})
  =
  \sum_k\Bigl[\operatorname{tr}\!\bigl((\boldsymbol{I}-\boldsymbol{W}_k)\boldsymbol{M}_k(\boldsymbol{I}-\boldsymbol{W}_k)^\top\bigr)+\lambda_{\mathrm{dim}}\|\boldsymbol{W}_k\|_*\Bigr],
  \qquad
  \boldsymbol{M}_k=\mathbb{E}\bigl[\boldsymbol{h}\boldsymbol{h}^\top\mathbbm{1}\{\boldsymbol{h}\in\mathcal{C}_k\}\bigr],
\]
and each $\boldsymbol{W}_k$ is optimized against its own $\boldsymbol{M}_k$.

Since $\mathcal{M}_i(t)\subseteq\mathcal{V}_i$, we have $\operatorname{col}(\boldsymbol{M}_k)\subseteq\mathcal{V}_i$. For an occupied cell, $\mathcal{C}_k$ has positive measure on the sphere $\mathbb{S}^{d_i-1}\cap\mathcal{V}_i$ and so is not contained in any proper subset of $\mathcal{V}_i$, hence $\boldsymbol{M}_k\succ\boldsymbol{0}$ on $\mathcal{V}_i$ with $\operatorname{col}(\boldsymbol{M}_k)=\mathcal{V}_i$ and rank $d_i$.
The trace term and the nuclear norm are unitarily invariant, so a minimizing $\boldsymbol{W}_k$ is diagonal in the eigenbasis of $\boldsymbol{M}_k$, with any off-eigenbasis or out-of-range component strictly increasing the nuclear norm at no gain to the trace term. Writing $\boldsymbol{M}_k=\sum_j\mu_j\boldsymbol{q}_j\boldsymbol{q}_j^\top$ and $\boldsymbol{W}_k=\sum_j w_j\boldsymbol{q}_j\boldsymbol{q}_j^\top$, the per-cell cost reduces to $\sum_j[(1-w_j)^2\mu_j+\lambda_{\mathrm{dim}}|w_j|]$, minimized over $w_j$ by $w_j^\star=(1-\lambda_{\mathrm{dim}}/2\mu_j)_+$ with optimal value $g(\mu_j)$, where
\[
  g(\mu)=\min\!\Bigl\{\mu,\ \lambda_{\mathrm{dim}}-\tfrac{\lambda_{\mathrm{dim}}^2}{4\mu}\Bigr\}.
\]
The function $g$ is continuous and nondecreasing on $[0,\infty)$ with $g(0)=0$ and is strictly concave on $(0,\infty)$, and the optimal cost of an occupied cell equals $\sum_j g(\mu_j)=\operatorname{tr}\,g(\boldsymbol{M}_k)$.

A single block reconstructing the whole slice carries $\boldsymbol{\Sigma}_i=\mathbb{E}_{\boldsymbol{h}\sim\mathcal{M}_i(t)}[\boldsymbol{h}\boldsymbol{h}^\top]=\tfrac{t^2}{d_i}\boldsymbol{P}_{\mathcal{V}_i}$, whose $d_i$ eigenvalues equal $t^2/d_i$. It is precisely the assumed regime $\lambda_{\mathrm{dim}}<t^2/d_i$ that places this common eigenvalue above the shrinkage threshold $\lambda_{\mathrm{dim}}/2$, so $w_j^\star>0$ for all $j$ and the single-block map attains $\operatorname{col}(\boldsymbol{W}_{k^\star})=\mathcal{V}_i$ at effective rank $d_i$. Across any partition the occupied moments satisfy $\sum_k\boldsymbol{M}_k=\mathbb{E}[\boldsymbol{h}\boldsymbol{h}^\top]=\boldsymbol{\Sigma}_i$, and the Rotfel'd trace inequality \citep{bourin2010matrix} for the concave $g$ with $g(0)=0$ gives
\[
\sum_k\operatorname{tr}\,g(\boldsymbol{M}_k)
\ge
\operatorname{tr}\,g\!\Bigl(\textstyle\sum_k\boldsymbol{M}_k\Bigr)
=
\operatorname{tr}\,g(\boldsymbol{\Sigma}_i).
\]
Strict concavity of $g$ forces equality only when the occupied summands have mutually orthogonal ranges, yet every occupied $\boldsymbol{M}_k$ has $\operatorname{col}(\boldsymbol{M}_k)=\mathcal{V}_i$, so two active blocks share the range $\mathcal{V}_i$, leading to strict inequality. Every global minimizer thus activates a single block $k^\star$.

The index $k^\star$ is free by symmetry of the objective across blocks. On the active block, $g$ depends on $\boldsymbol{M}_{k^\star}=\boldsymbol{\Sigma}_i$ only through its spectrum, and the trace term is invariant under orthogonal maps of $\mathcal{V}_i$, so the minimizer is determined up to orthogonal rotation within $\mathcal{V}_i$.

Restricting $R$ to the surviving block, the variational identity $\|\boldsymbol{D}_{k^\star}\boldsymbol{E}_{k^\star}\|_*=\min\tfrac12(\|\boldsymbol{D}_{k^\star}\|_F^2+\|\boldsymbol{E}_{k^\star}\|_F^2)$ recasts the objective as the regularized linear autoencoder $\|\boldsymbol{h}-\boldsymbol{D}_{k^\star}\boldsymbol{E}_{k^\star}\boldsymbol{h}\|_2^2 +\tfrac{\lambda_{\mathrm{dim}}}{2}(\|\boldsymbol{D}_{k^\star}\|_F^2+\|\boldsymbol{E}_{k^\star}\|_F^2)$, whose landscape has no spurious local minima and whose global minimizer aligns the decoder columns with the leading eigenvectors of $\boldsymbol{\Sigma}_i$ \citep[Theorem~4.2]{Kunin_Bloom_Goeva_Seed_2019}.
\end{proof}

\subsection{Sample Complexity Efficiency}
\label{apx:theory-sample-complexity}
This section introduces some useful facts, which are key in the proof of Theorem~\ref{thm:sasa-sample-complexity} and Proposition~\ref{prop:split-sample-complexity} in the next section. To start with, we first revise the Matrix Bernstein inequality.
\begin{theorem}[Theorem 1.4, \citet{Tropp_2012}]
    Consider a finite sequence $\{\boldsymbol{X}_k\}$ of independent, random, self-adjoint matrices with dimension $d$. Assume that each random matrix satisfies
\[
\mathbb{E}[\boldsymbol{X}_k] = \boldsymbol{0} \quad \text{and} \quad \lambda_{\max}(\boldsymbol{X}_k) \leq R \quad \text{almost surely.}
\]
Then, for all $t \geq 0$,
\[
\mathbb{P}\left\{ \lambda_{\max}\left( \sum_k \boldsymbol{X}_k \right) \geq t \right\} \leq d  \exp\left( \frac{-t^2}{2\sigma^2 + \frac{2}{3}Rt} \right) \qquad \text{where } \sigma^2 := \left\| \sum_k \mathbb{E}[ (\boldsymbol{X}_k^2)] \right\|.
\]
\end{theorem}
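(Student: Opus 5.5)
The plan is the matrix Laplace-transform method, the standard route for this inequality; we do not need anything from the main text. Write $\boldsymbol{Y}=\sum_k\boldsymbol{X}_k$.

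\emph{Step 1 (Laplace transform).} For any $\theta>0$, apply Markov's inequality to the positive scalar $e^{\theta\lambda_{\max}(\boldsymbol{Y})}$. Spectral mapping gives $e^{\theta\lambda_{\max}(\boldsymbol{Y})}=\lambda_{\max}(e^{\theta\boldsymbol{Y}})$, and since $e^{\theta\boldsymbol{Y}}\succ\boldsymbol{0}$ its largest eigenvalue is at most its trace. Together these yield
\[
\mathbb{P}\{\lambda_{\max}(\boldsymbol{Y})\ge t\}\le \inf_{\theta>0} e^{-\theta t}\,\mathbb{E}\,\mathrm{tr}\exp(\theta\boldsymbol{Y}).
\]

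\emph{Step 2 (subadditivity of matrix cumulants).} This is the step I expect to be the main obstacle, because $e^{\boldsymbol{A}+\boldsymbol{B}}\neq e^{\boldsymbol{A}}e^{\boldsymbol{B}}$ for non-commuting matrices. The key tool is Lieb's theorem: for fixed self-adjoint $\boldsymbol{H}$, the map $\boldsymbol{A}\mapsto\mathrm{tr}\exp(\boldsymbol{H}+\log\boldsymbol{A})$ is concave on positive-definite matrices.
\begin{itemize}
\item Condition on $\boldsymbol{X}_1,\dots,\boldsymbol{X}_{n-1}$.
\item Apply Jensen's inequality to this concave map with $\boldsymbol{A}=e^{\theta\boldsymbol{X}_n}$ and $\boldsymbol{H}=\sum_{k<n}\theta\boldsymbol{X}_k$.
\item Iterate over $k$ using independence.
\end{itemize}
This gives
\[
\mathbb{E}\,\mathrm{tr}\exp\Bigl(\sum_k\theta\boldsymbol{X}_k\Bigr)\le \mathrm{tr}\exp\Bigl(\sum_k\log\mathbb{E}\,e^{\theta\boldsymbol{X}_k}\Bigr).
\]
I will cite Lieb's theorem rather than reprove it, since it is the genuinely deep ingredient.

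\emph{Step 3 (Bernstein mgf bound).} Consider the scalar function $f(x)=(e^{\theta x}-\theta x-1)/x^2$, with $f(0)=\theta^2/2$. It is nondecreasing, so $f(x)\le f(R)$ for $x\le R$. The transfer rule applied to the eigenvalues of $\boldsymbol{X}_k$ (all $\le R$) then gives
\[
e^{\theta\boldsymbol{X}_k}\preceq \boldsymbol{I}+\theta\boldsymbol{X}_k+g(\theta)\boldsymbol{X}_k^2,\qquad g(\theta)=\frac{e^{\theta R}-\theta R-1}{R^2}.
\]
Taking expectations and using $\mathbb{E}\boldsymbol{X}_k=\boldsymbol{0}$ yields $\mathbb{E}\,e^{\theta\boldsymbol{X}_k}\preceq\boldsymbol{I}+g(\theta)\mathbb{E}\boldsymbol{X}_k^2\preceq\exp(g(\theta)\mathbb{E}\boldsymbol{X}_k^2)$. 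Operator monotonicity of $\log$ then gives $\log\mathbb{E}\,e^{\theta\boldsymbol{X}_k}\preceq g(\theta)\mathbb{E}\boldsymbol{X}_k^2$. Next:
\begin{itemize}
\item Sum these bounds over $k$ and insert into Step 2, using that $\mathrm{tr}\exp$ is monotone with respect to the semidefinite order.
\item Bound the trace by $d\,\lambda_{\max}$.
\end{itemize}
The result is $\mathbb{E}\,\mathrm{tr}\,e^{\theta\boldsymbol{Y}}\le d\exp(g(\theta)\sigma^2)$.

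\emph{Step 4 (optimize $\theta$).} For $0<\theta<3/R$, compare Taylor series termwise using $k!\ge 2\cdot3^{k-2}$. This gives $g(\theta)\le \frac{\theta^2/2}{1-R\theta/3}$. Substituting, the tail bound becomes $d\exp\bigl(-\theta t+\frac{\sigma^2\theta^2/2}{1-R\theta/3}\bigr)$. Choosing $\theta=t/(\sigma^2+Rt/3)$, which is always below $3/R$, reduces the exponent to $-t^2/(2\sigma^2+\tfrac23 Rt)$. This is the claimed bound.
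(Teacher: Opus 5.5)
The paper does not prove this statement; it quotes it from \citet{Tropp_2012}. Your argument is a correct reconstruction of Tropp's proof: the Laplace-transform bound, subadditivity of matrix cumulants via Lieb's theorem, and the Bernstein mgf bound via the transfer rule. The one cosmetic difference is that you finish with the Taylor bound $g(\theta)\le \frac{\theta^2/2}{1-R\theta/3}$ and the explicit choice $\theta=t/(\sigma^2+Rt/3)$, as in Tropp's later monograph, whereas the 2012 paper first derives the Bennett form and then invokes $h(u)\ge \frac{u^2/2}{1+u/3}$.

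One small correction: your chosen $\theta$ lies strictly inside $(0,3/R)$ only when $\sigma^2>0$ and $t>0$. The excluded cases are trivial. If $\sigma^2=0$, then $\boldsymbol{X}_k=\boldsymbol{0}$ almost surely. If $t=0$, the right-hand side equals $d\ge 1$. These should simply be dispatched separately.
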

We now provide a matrix concentration inequality based on that as follows.
\begin{lemma}
    \label{lem:cov-concentration}
    Let $\boldsymbol{h}_1,\dots,\boldsymbol{h}_n\in\mathbb{R}^d$ be i.i.d.\ with $\|\boldsymbol{h}\|_2=t$ and define
    $\boldsymbol{\Sigma}:=\mathbb{E}_{\boldsymbol{h} \sim \mathcal{D}}[\boldsymbol{h}\boldsymbol{h}^{\top}],\,
    \widehat{\boldsymbol{\Sigma}}:=\frac{1}{n}\sum_{i=1}^{n}\boldsymbol{h}_i\boldsymbol{h}_i^{\top}$.
    Then for any $\delta\in(0,1)$, with probability at least $1-\delta$,
    \[
    \bigl\|\widehat{\boldsymbol{\Sigma}}-\boldsymbol{\Sigma}\bigr\|_2
    \le
    2t^2\left(\frac{\log\!\left(\frac{2d}{\delta}\right)}{3n} +  \frac{1}{n}\sqrt{\frac{\log^2\!\left(\frac{2d}{\delta}\right)}{9} + 2n\log\!\left(\frac{2d}{\delta}\right)}\right).
    \]
\end{lemma}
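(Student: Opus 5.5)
We apply the Matrix Bernstein inequality of \citet{Tropp_2012} to the centered summands $\boldsymbol{X}_j:=\frac{1}{n}\bigl(\boldsymbol{h}_j\boldsymbol{h}_j^\top-\boldsymbol{\Sigma}\bigr)$, $j\in[n]$. These are independent and self-adjoint with $\mathbb{E}[\boldsymbol{X}_j]=\boldsymbol{0}$, and $\sum_j\boldsymbol{X}_j=\widehat{\boldsymbol{\Sigma}}-\boldsymbol{\Sigma}$. Since the theorem controls only $\lambda_{\max}$, we apply it twice, once to $\{\boldsymbol{X}_j\}$ and once to $\{-\boldsymbol{X}_j\}$. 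A union bound then costs a factor $2$ in front of $d$, which is the source of the $\log(2d/\delta)$.

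\textbf{Uniform bound $R$.} Because $\|\boldsymbol{h}_j\|_2=t$, the matrix $\boldsymbol{h}_j\boldsymbol{h}_j^\top$ is PSD with spectral norm $t^2$. Jensen's inequality gives $\boldsymbol{0}\preceq\boldsymbol{\Sigma}$ and $\|\boldsymbol{\Sigma}\|_2\le t^2$. The difference of two PSD matrices each of norm at most $t^2$ has spectral norm at most $t^2$, so $\|\boldsymbol{X}_j\|_2\le t^2/n$. We could take $R=t^2/n$, but a cruder $R=2t^2/n$ via the triangle inequality is also acceptable. Given the constant $2t^2$ factored out in the target bound, we expect the intended choice to be $R=2t^2/n$; we will fix constants to match.

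\textbf{Variance $\sigma^2$.} Expanding, $\mathbb{E}[(\boldsymbol{h}\boldsymbol{h}^\top-\boldsymbol{\Sigma})^2]=\mathbb{E}[\|\boldsymbol{h}\|_2^2\,\boldsymbol{h}\boldsymbol{h}^\top]-\boldsymbol{\Sigma}^2=t^2\boldsymbol{\Sigma}-\boldsymbol{\Sigma}^2\preceq t^2\boldsymbol{\Sigma}$. This gives $\sigma^2=\bigl\|\sum_j\mathbb{E}[\boldsymbol{X}_j^2]\bigr\|_2\le t^4/n$. Any looser bound, such as $4t^4/n$, also suffices to reproduce the displayed form after factoring $2t^2$.

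\textbf{Inversion.} Set the tail equal to $\delta/2$ per side, i.e.\ $2d\exp\!\bigl(-s^2/(2\sigma^2+\tfrac{2}{3}Rs)\bigr)=\delta$, and write $L:=\log(2d/\delta)$. This yields the quadratic $s^2-\tfrac{2}{3}RLs-2\sigma^2L=0$, whose positive root is $s=\tfrac{RL}{3}+\sqrt{\tfrac{R^2L^2}{9}+2\sigma^2L}$. Substituting $R=2t^2/n$ and $\sigma^2\le 4t^4/n$ (or the tighter values), then pulling out $2t^2$, gives exactly
\[
2t^2\Bigl(\tfrac{L}{3n}+\tfrac1n\sqrt{\tfrac{L^2}{9}+2nL}\Bigr),
\]
since $\tfrac{4t^4L^2}{9n^2}+2\cdot\tfrac{4t^4}{n}\cdot L=\tfrac{4t^4}{n^2}\bigl(\tfrac{L^2}{9}+2nL\bigr)$. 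Monotonicity of the tail bound in $s$ ensures that using upper bounds on $R$ and $\sigma^2$ only enlarges the deviation. Hence with probability at least $1-\delta$, $\|\widehat{\boldsymbol{\Sigma}}-\boldsymbol{\Sigma}\|_2\le s$.

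The main obstacle is purely bookkeeping. We must choose the constants for $R$ and $\sigma^2$ so the algebra lands exactly on the stated expression, and check that the two-sided union bound is what produces the factor $2d$.
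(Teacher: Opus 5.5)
Your proposal is correct and follows the paper's proof. The paper likewise applies Matrix Bernstein two-sidedly, with the union bound producing the $2d$, uses $R=2t^2$ and $\sigma^2\le 4nt^4$ for the unnormalized summands $\boldsymbol{h}_i\boldsymbol{h}_i^\top-\boldsymbol{\Sigma}$ (your $1/n$-scaled summands are only a cosmetic rescaling), and solves the same quadratic in the deviation. Your sharper observations, $\|\boldsymbol{X}_j\|_2\le t^2/n$ and $\mathbb{E}\bigl[(\boldsymbol{h}\boldsymbol{h}^\top-\boldsymbol{\Sigma})^2\bigr]=t^2\boldsymbol{\Sigma}-\boldsymbol{\Sigma}^2$, are valid and would tighten the constant, but like the paper you correctly fall back on the cruder bounds to recover exactly the displayed expression.
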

\begin{proof}
    We apply the Matrix Bernstein inequality to the centered rank-one summands $\boldsymbol{h}_i\boldsymbol{h}_i^\top-\boldsymbol{\Sigma}$, for which the norm bound $R$ and the variance proxy $\sigma^2$ follow directly from $\|\boldsymbol{h}\|_2=t$. Define the centered self-adjoint random matrices
    \[
    \boldsymbol{X}_i:=\boldsymbol{h}_i\boldsymbol{h}_i^\top-\boldsymbol{\Sigma},
    \qquad
    \boldsymbol{S}:=\sum_{i=1}^n \boldsymbol{X}_i,
    \]
    so that $\mathbb{E}[\boldsymbol{X}_i]=\boldsymbol{0}$, $\widehat{\boldsymbol{\Sigma}}-\boldsymbol{\Sigma}=\frac{1}{n}\boldsymbol{S}$, and therefore
    $\|\widehat{\boldsymbol{\Sigma}}-\boldsymbol{\Sigma}\|_2 = \frac{1}{n}\|\boldsymbol{S}\|_2$. Let us first bound $R$ and $\sigma^2$. For $R$, note that based on Jensen's inequality, we have
    \[
    \|\boldsymbol{\Sigma}\|_2
    =\lambda_{\max}(\boldsymbol{\Sigma})
    = \lambda_{\max}(\mathbb{E}[\boldsymbol{h}\boldsymbol{h}^\top])
    \le \mathbb{E}\bigl[\lambda_{\max}(\boldsymbol{h}\boldsymbol{h}^\top)\bigr]
    = \mathbb{E}\bigl[\|\boldsymbol{h}\boldsymbol{h}^\top\|_2\bigr]
    =t^2.
    \]
    Hence, we have
    \[
    \|\boldsymbol{X}_i\|_2
    \le \|\boldsymbol{h}_i\boldsymbol{h}_i^\top\|_2+\|\boldsymbol{\Sigma}\|_2
    \le 2t^2
    \]
    and we set $R= 2t^2$. Now for $\sigma^2$, note that we have
    \begin{equation*}
        \mathbb{E}[\boldsymbol{X}_i^2]\preceq 2\mathbb{E}\bigl[(\boldsymbol{h}\boldsymbol{h}^\top)^2\bigr] + 2\boldsymbol{\Sigma}^2
        = 2\mathbb{E}\bigl[\|\boldsymbol{h}\|_2^2 \boldsymbol{h}\boldsymbol{h}^\top\bigr] + 2\boldsymbol{\Sigma}^2
        \preceq 2t^2 \boldsymbol{\Sigma} + 2t^2 \boldsymbol{\Sigma}
        = 4t^2 \boldsymbol{\Sigma}.
    \end{equation*}
    So we have
    \[
    \sigma^2= \left\| \sum_{i=1}^n \mathbb{E}[\boldsymbol{X}_i^2] \right\|_2 \le \sum_{i=1}^n 4t^2 \|\boldsymbol{\Sigma}\|_2 \le 4nt^4.
    \]
    Now note that we can write the spectral norm as the maximum over the largest positive eigenvalue and the negative of the smallest negative eigenvalue
    \[
    \|\boldsymbol{S}\|_2 = \max \{\lambda_{\max}(\boldsymbol{S}), -\lambda_{\min}(\boldsymbol{S})\}
    \]
    
    As we are looking to bound $\|\boldsymbol{S}\|_2$. By a simple union bound, then, we have
    \[
    \mathbb{P}\left\{ \|\boldsymbol{S}\|_2 \geq u \right\} \leq \mathbb{P}\left\{\lambda_{\max}(\boldsymbol{S}) \geq u \right\} + \mathbb{P}\left\{ -\lambda_{\min}(\boldsymbol{S}) \geq u \right\} 
    \]
    which leads to
    \[
    \mathbb{P}\bigl(\|\boldsymbol{S}\|_2\ge u\bigr) \le 2d\exp\!\left( -\frac{u^2}{2\sigma^2+\frac{2}{3}Ru} \right).
    \]
    To obtain a high-probability bound, it suffices to choose $u$ so that
    \[
    2d\exp\!\left(
    -\frac{u^2}{2\sigma^2+\frac{2}{3}Ru}
    \right)\le \delta.
    \]
    Now, we have
    \begin{align*}
        &\frac{u^2}{2\sigma^2+\frac{2}{3}Ru} \ge \log\!\left(\frac{2d}{\delta}\right)\\
        \implies &u^2 - \frac{2R}{3}\log\!\left(\frac{2d}{\delta}\right)u -2\log\!\left(\frac{2d}{\delta}\right)\sigma^2 \ge 0
    \end{align*}
    which is quadratic in $u$ and is satisfied when
    \[
    u \ge \frac{R}{3}\log\!\left(\frac{2d}{\delta}\right) + \sqrt{\frac{R^2}{9}\log^2\!\left(\frac{2d}{\delta}\right) + 2\sigma^2\log\!\left(\frac{2d}{\delta}\right)}
    \]
    which leads to that with probability $1-\delta$,
    \[
    \|\boldsymbol{S}\|_2\le \frac{R}{3}\log\!\left(\frac{2d}{\delta}\right) + \sqrt{\frac{R^2}{9}\log^2\!\left(\frac{2d}{\delta}\right) + 2\sigma^2\log\!\left(\frac{2d}{\delta}\right)}.
    \]
    Finally, substituting  $\widehat{\boldsymbol{\Sigma}}-\boldsymbol{\Sigma}=\frac{1}{n}\boldsymbol{S}$, $R= 2t^2$, and $\sigma^2 \le 4nt^4$, we have that with probability $1-\delta$
    \begin{align*}
        \|\boldsymbol{S}\|_2 &\le \frac{R}{3}\log\!\left(\frac{2d}{\delta}\right) + \sqrt{\frac{R^2}{9}\log^2\!\left(\frac{2d}{\delta}\right) + 2\sigma^2\log\!\left(\frac{2d}{\delta}\right)}\\
        \implies  \|\widehat{\boldsymbol{\Sigma}}-\boldsymbol{\Sigma}\|_2
        &\le \frac{2t^2}{3n}\log\!\left(\frac{2d}{\delta}\right) + \frac{1}{n}\sqrt{\frac{4t^4}{9}\log^2\!\left(\frac{2d}{\delta}\right) + 8nt^4\log\!\left(\frac{2d}{\delta}\right)}\\
        \implies  \|\widehat{\boldsymbol{\Sigma}}-\boldsymbol{\Sigma}\|_2 &\le \frac{2t^2}{3n}\log\!\left(\frac{2d}{\delta}\right) + \frac{2t^2}{n} \sqrt{\frac{\log^2\!\left(\frac{2d}{\delta}\right)}{9} + 2n\log\!\left(\frac{2d}{\delta}\right)}\\
        &=2t^2\left(\frac{\log\!\left(\frac{2d}{\delta}\right)}{3n} +  \frac{1}{n}\sqrt{\frac{\log^2\!\left(\frac{2d}{\delta}\right)}{9} + 2n\log\!\left(\frac{2d}{\delta}\right)}\right).
    \end{align*}
\end{proof}

Lemma~\ref{lem:cov-concentration} provides a high-probability norm control of the perturbation $\widehat{\boldsymbol{\Sigma}}-\boldsymbol{\Sigma}$.
Next, we restate one variant of Davis--Kahan Theorems.
\begin{theorem}[Theorem 1, \citet{davis-kahan}]
\label{thm:dk}
Let $\boldsymbol{\Sigma}, \hat{\boldsymbol{\Sigma}} \in \mathbb{R}^{p \times p}$ be symmetric, with eigenvalues $\lambda_1 \ge \dots \ge \lambda_p$ and $\hat{\lambda}_1 \ge \dots \ge \hat{\lambda}_p$ respectively. Fix $1 \le r \le s \le p$, let $d = s - r + 1$, and let $\boldsymbol{V} = (\boldsymbol{v}_r, \boldsymbol{v}_{r+1}, \dots, \boldsymbol{v}_s) \in \mathbb{R}^{p \times d}$ and $\hat{\boldsymbol{V}} = (\hat{\boldsymbol{v}}_r, \hat{\boldsymbol{v}}_{r+1}, \dots, \hat{\boldsymbol{v}}_s) \in \mathbb{R}^{p \times d}$ have orthonormal columns satisfying $\boldsymbol{\Sigma} \boldsymbol{v}_j = \lambda_j \boldsymbol{v}_j$ and $\hat{\boldsymbol{\Sigma}} \hat{\boldsymbol{v}}_j = \hat{\lambda}_j \hat{\boldsymbol{v}}_j$ for $j = r, r+1, \dots, s$. Write $\delta = \inf\{|\hat{\lambda} - \lambda| : \lambda \in [\lambda_s, \lambda_r], \hat{\lambda} \in (-\infty, \hat{\lambda}_{s+1}] \cup [\hat{\lambda}_{r-1}, \infty)\}$, where we define $\hat{\lambda}_0 = -\infty$ and $\hat{\lambda}_{p+1} = \infty$, and assume that $\delta > 0$. Then
\[
\|\sin \Theta(\hat{\boldsymbol{V}}, \boldsymbol{V})\|_2 \le \frac{\|\hat{\boldsymbol{\Sigma}} - \boldsymbol{\Sigma}\|_2}{\delta}.
\]
\end{theorem}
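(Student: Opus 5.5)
We will prove the bound by the classical route: reduce the $\sin\Theta$ quantity to a Sylvester-type equation, then use the spectral separation $\delta$. Let $\hat{\boldsymbol{V}}_\perp\in\mathbb{R}^{p\times(p-d)}$ collect orthonormal eigenvectors $\hat{\boldsymbol{v}}_j$ of $\hat{\boldsymbol{\Sigma}}$ for the indices $j\notin\{r,\dots,s\}$. Let $\hat{\boldsymbol{\Lambda}}_\perp$ be the diagonal matrix of the corresponding eigenvalues, and let $\boldsymbol{\Lambda}=\operatorname{diag}(\lambda_r,\dots,\lambda_s)$.

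\textbf{Step 1: reduce to principal angles.} We first record the standard fact that the singular values of $\hat{\boldsymbol{V}}_\perp^\top\boldsymbol{V}$ are the sines of the principal angles between $\operatorname{col}(\hat{\boldsymbol{V}})$ and $\operatorname{col}(\boldsymbol{V})$. This follows because $\boldsymbol{V}^\top\hat{\boldsymbol{V}}\hat{\boldsymbol{V}}^\top\boldsymbol{V}+\boldsymbol{V}^\top\hat{\boldsymbol{V}}_\perp\hat{\boldsymbol{V}}_\perp^\top\boldsymbol{V}=\boldsymbol{I}_d$, so $\cos^2+\sin^2=1$ holds singular value by singular value. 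Hence $\|\sin\Theta(\hat{\boldsymbol{V}},\boldsymbol{V})\|_2=\|\boldsymbol{X}\|_2$ with $\boldsymbol{X}:=\hat{\boldsymbol{V}}_\perp^\top\boldsymbol{V}$.

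\textbf{Step 2: Sylvester identity.} Using $\hat{\boldsymbol{V}}_\perp^\top\hat{\boldsymbol{\Sigma}}=\hat{\boldsymbol{\Lambda}}_\perp\hat{\boldsymbol{V}}_\perp^\top$ and $\boldsymbol{\Sigma}\boldsymbol{V}=\boldsymbol{V}\boldsymbol{\Lambda}$, we get
\[
\boldsymbol{Y}:=\hat{\boldsymbol{V}}_\perp^\top(\hat{\boldsymbol{\Sigma}}-\boldsymbol{\Sigma})\boldsymbol{V}=\hat{\boldsymbol{\Lambda}}_\perp\boldsymbol{X}-\boldsymbol{X}\boldsymbol{\Lambda},
\]
and $\|\boldsymbol{Y}\|_2\le\|\hat{\boldsymbol{\Sigma}}-\boldsymbol{\Sigma}\|_2$ since both outer factors have orthonormal columns.

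\textbf{Step 3: the separation bound, the main obstacle.} The difficulty is that the spectrum of $\hat{\boldsymbol{\Lambda}}_\perp$ may lie on \emph{both} sides of the interval $[\lambda_s,\lambda_r]$, namely in $(-\infty,\hat\lambda_{s+1}]\cup[\hat\lambda_{r-1},\infty)$. A naive one-sided argument therefore fails, and we want constant $1$ rather than something like $\pi/2$. We handle this by centering. Set $c=(\lambda_r+\lambda_s)/2$ and $\rho=(\lambda_r-\lambda_s)/2$. Then $\|\boldsymbol{\Lambda}-c\boldsymbol{I}\|_2\le\rho$. By the definition of $\delta$, every diagonal entry of $\boldsymbol{B}:=\hat{\boldsymbol{\Lambda}}_\perp-c\boldsymbol{I}$ has absolute value at least $\rho+\delta>0$, so $\boldsymbol{B}$ is invertible with $\|\boldsymbol{B}^{-1}\|_2\le(\rho+\delta)^{-1}$. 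Rewriting $\boldsymbol{Y}=\boldsymbol{B}\boldsymbol{X}-\boldsymbol{X}(\boldsymbol{\Lambda}-c\boldsymbol{I})$ gives $\boldsymbol{X}=\boldsymbol{B}^{-1}\boldsymbol{Y}+\boldsymbol{B}^{-1}\boldsymbol{X}(\boldsymbol{\Lambda}-c\boldsymbol{I})$, and hence
\[
\|\boldsymbol{X}\|_2\le\frac{\|\boldsymbol{Y}\|_2+\rho\|\boldsymbol{X}\|_2}{\rho+\delta}.
\]
Rearranging yields $\delta\|\boldsymbol{X}\|_2\le\|\boldsymbol{Y}\|_2$.

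\textbf{Conclusion.} Combining the three steps gives $\|\sin\Theta(\hat{\boldsymbol{V}},\boldsymbol{V})\|_2\le\|\hat{\boldsymbol{\Sigma}}-\boldsymbol{\Sigma}\|_2/\delta$. The edge cases $r=1$ or $s=p$ need no separate treatment: the conventions $\hat\lambda_0=-\infty$ and $\hat\lambda_{p+1}=\infty$ simply remove one side of the excluded set.
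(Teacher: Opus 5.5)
The paper gives no proof of this statement. It is imported from the literature and used only inside Lemma~\ref{lem:dk-projector}, with $r=1$, $s=d_i$. Your argument is a correct, self-contained proof, and it is the classical Davis--Kahan mechanism.

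\begin{itemize}
\item \textbf{Pairing.} You correctly pair the unperturbed eigenvectors $\boldsymbol{V}$, whose spectrum lies in $[\lambda_s,\lambda_r]$, with the complementary perturbed eigenvectors $\hat{\boldsymbol{V}}_\perp$, whose spectrum lies in the excluded set. This matches how $\delta$ is defined.
\item \textbf{Separation.} Each diagonal entry $\hat\lambda_j$ of $\hat{\boldsymbol{\Lambda}}_\perp$ lies in the excluded set. Since $\delta>0$, it sits outside $[c-\rho,c+\rho]$ at distance at least $\delta$, so $|\hat\lambda_j-c|\ge\rho+\delta$.
\item \textbf{Constant $1$.} Your centering and one-line fixed-point inequality are exactly what give constant $1$ under two-sided separation. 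They replace the Neumann series $\boldsymbol{X}=\sum_{n\ge 0}\boldsymbol{B}^{-n-1}\boldsymbol{Y}(\boldsymbol{\Lambda}-c\boldsymbol{I})^n$ from the textbook annulus-separation proof.
\item \textbf{Extra generality.} Step 3 only uses $\|\boldsymbol{A}\boldsymbol{M}\boldsymbol{C}\|\le\|\boldsymbol{A}\|_2\|\boldsymbol{M}\|\|\boldsymbol{C}\|_2$. So the same proof gives the bound in every unitarily invariant norm, including the Frobenius-norm form in which this variant is usually stated.
\end{itemize}

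Two small points need tightening.

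\textbf{Choice of $\hat{\boldsymbol{V}}_\perp$.} Choose $\hat{\boldsymbol{V}}_\perp$ as an orthonormal eigenbasis of $\operatorname{col}(\hat{\boldsymbol{V}})^\perp$. This is possible because that complement is $\hat{\boldsymbol{\Sigma}}$-invariant, and its eigenvalues are exactly the multiset $\{\hat\lambda_j : j\notin\{r,\dots,s\}\}$. Merely collecting ``eigenvectors for the other indices'' can fail to be orthogonal to $\hat{\boldsymbol{V}}$ when $\hat\lambda_{r-1}=\hat\lambda_r$ or $\hat\lambda_s=\hat\lambda_{s+1}$. Such ties are compatible with $\delta>0$. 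Step 1 needs $\hat{\boldsymbol{V}}\hat{\boldsymbol{V}}^\top+\hat{\boldsymbol{V}}_\perp\hat{\boldsymbol{V}}_\perp^\top=\boldsymbol{I}$, so this choice matters.

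\textbf{Edge-case remark.} Your closing remark is backwards as literally stated. With $\hat\lambda_0=-\infty$, the set $[\hat\lambda_0,\infty)$ is all of $\mathbb{R}$, so for $r=1$ (or $s=p$) one gets $\delta=0$ and the theorem is vacuous. The convention that ``removes one side'' is $\hat\lambda_0=+\infty$, $\hat\lambda_{p+1}=-\infty$. That is the one the paper needs when it sets $\delta=\lambda_{d_i}-\hat\lambda_{d_i+1}$ in Lemma~\ref{lem:dk-projector}. Under either reading your proof covers these cases, vacuously in one and by the same argument in the other.
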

The following lemma now turns the matrix perturbation bound in Lemma~\ref{lem:cov-concentration} into
a bound on the operator norm of the difference of orthogonal projectors.
\begin{lemma}[Davis--Kahan projector bound]
\label{lem:dk-projector}
Let $\boldsymbol{\Sigma},\widehat{\boldsymbol{\Sigma}}\in\mathbb{R}^{d\times d}$ be symmetric. Fix an integer $d_i<d$, and let
$\mathcal{V}$ and $\widehat{\mathcal{V}}$ denote the top-$d_i$ eigenspaces of $\boldsymbol{\Sigma}$ and $\widehat{\boldsymbol{\Sigma}}$, respectively.  Write the eigenvalues of $\boldsymbol{\Sigma}$ as
$\lambda_1\ge \lambda_2\ge \cdots \ge \lambda_d$, and define the population eigengap $\Delta := \lambda_{d_i}-\lambda_{d_i+1} > 0$. 
If
\[
\bigl\|\widehat{\boldsymbol{\Sigma}}-\boldsymbol{\Sigma}\bigr\|_2 \le \frac{\Delta}{2},
\]
then
\[
\bigl\|\boldsymbol{P}_{\widehat{\mathcal{V}}}-\boldsymbol{P}_{\mathcal{V}}\bigr\|_2
\le
\frac{2}{\Delta}\,\bigl\|\widehat{\boldsymbol{\Sigma}}-\boldsymbol{\Sigma}\bigr\|_2.
\]
\end{lemma}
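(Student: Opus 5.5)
The plan is to deduce the projector bound from the Davis--Kahan $\sin\Theta$ theorem (Theorem~\ref{thm:dk}), taking $r=1$ and $s=d_i$. For this choice $\hat\lambda_0=-\infty$, so the upper excluded interval $[\hat\lambda_{0},\infty)$ is empty. The gap in Theorem~\ref{thm:dk} is therefore
\[
\delta=\inf\{|\hat\lambda-\lambda|:\lambda\in[\lambda_{d_i},\lambda_1],\ \hat\lambda\le\hat\lambda_{d_i+1}\}=\lambda_{d_i}-\hat\lambda_{d_i+1},
\]
provided this quantity is positive. The first step is to lower bound this gap using Weyl's inequality. Weyl gives $|\hat\lambda_j-\lambda_j|\le\|\widehat{\boldsymbol{\Sigma}}-\boldsymbol{\Sigma}\|_2$ for every $j$. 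In particular $\hat\lambda_{d_i+1}\le\lambda_{d_i+1}+\Delta/2$, and hence
\[
\delta\ge\lambda_{d_i}-\lambda_{d_i+1}-\Delta/2=\Delta/2>0.
\]
The same inequality shows $\hat\lambda_{d_i}\ge\lambda_{d_i}-\Delta/2>\lambda_{d_i+1}+\Delta/2\ge\hat\lambda_{d_i+1}$, so the empirical top-$d_i$ eigenspace $\widehat{\mathcal{V}}$ is well defined and does not depend on how ties are broken.

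The second step applies Theorem~\ref{thm:dk}, which yields
\[
\|\sin\Theta(\widehat{\boldsymbol{V}},\boldsymbol{V})\|_2\le \|\widehat{\boldsymbol{\Sigma}}-\boldsymbol{\Sigma}\|_2/\delta\le \tfrac{2}{\Delta}\|\widehat{\boldsymbol{\Sigma}}-\boldsymbol{\Sigma}\|_2 .
\]

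The last step converts the principal angles into a projector distance. I would invoke the standard identity for two subspaces of equal dimension $d_i$: the nonzero singular values of $\boldsymbol{P}_{\widehat{\mathcal{V}}}-\boldsymbol{P}_{\mathcal{V}}$ are exactly the sines of the principal angles, each appearing twice. This gives $\|\boldsymbol{P}_{\widehat{\mathcal{V}}}-\boldsymbol{P}_{\mathcal{V}}\|_2=\|\sin\Theta\|_2$. If I wanted to avoid citing the identity, I would argue as follows. Since $\boldsymbol{P}_{\widehat{\mathcal{V}}}-\boldsymbol{P}_{\mathcal{V}}=\boldsymbol{P}_{\widehat{\mathcal{V}}}(\boldsymbol{I}-\boldsymbol{P}_{\mathcal{V}})-(\boldsymbol{I}-\boldsymbol{P}_{\widehat{\mathcal{V}}})\boldsymbol{P}_{\mathcal{V}}$, the two pieces have mutually orthogonal ranges and mutually orthogonal row spaces. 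The norm of the difference is therefore the maximum of the two pieces' norms, and each of those norms equals $\|\sin\Theta\|_2$ because the dimensions agree. Chaining this with the previous display gives the claim.

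The main obstacle is bookkeeping in the gap definition rather than any estimate. The point to get right is that, with $r=1$, Theorem~\ref{thm:dk} only requires separation between the population eigenvalues $\lambda_1,\dots,\lambda_{d_i}$ and the empirical eigenvalue $\hat\lambda_{d_i+1}$, and Weyl's inequality is exactly what supplies the factor $2$ in the final bound.
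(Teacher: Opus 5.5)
Your proposal is correct and follows the paper's proof step for step. Both apply Theorem~\ref{thm:dk} with $\delta=\lambda_{d_i}-\hat\lambda_{d_i+1}$, use Weyl's inequality to get $\delta\ge\Delta/2$, and finish with the identity $\|\boldsymbol{P}_{\widehat{\mathcal{V}}}-\boldsymbol{P}_{\mathcal{V}}\|_2=\|\sin\Theta\|_2$; the paper simply cites that identity from Knyazev, while you also sketch a valid direct argument for it.

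There are two harmless slips. First, the convention that makes $[\hat\lambda_0,\infty)$ empty is $\hat\lambda_0=+\infty$; the paper's statement of Theorem~\ref{thm:dk} has the signs swapped, and you carried that over. Second, $\lambda_{d_i}-\Delta/2=\lambda_{d_i+1}+\Delta/2$ is an equality, not a strict inequality. So your well-definedness aside needs $\|\widehat{\boldsymbol{\Sigma}}-\boldsymbol{\Sigma}\|_2<\Delta/2$ strictly: at equality, ties such as $\boldsymbol{\Sigma}=\mathrm{diag}(1,0)$, $\widehat{\boldsymbol{\Sigma}}=\tfrac12\boldsymbol{I}$ can occur. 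The bound itself holds for any choice of top-$d_i$ eigenvectors, so it is unaffected.
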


\begin{proof}
The bound converts a perturbation of the second-moment matrix into a perturbation of its top-$d_i$ eigenspace, measured by the operator norm of the gap between the two projectors. Choose matrices $\boldsymbol{V},\widehat{\boldsymbol{V}}\in\mathbb{R}^{d\times d_i}$ with orthonormal columns spanning $\mathcal{V}$ and $\widehat{\mathcal{V}}$, respectively, so that $\boldsymbol{P}_{\mathcal{V}}=\boldsymbol{V}\boldsymbol{V}^\top$ and
$\boldsymbol{P}_{\widehat{\mathcal{V}}}=\widehat{\boldsymbol{V}}\widehat{\boldsymbol{V}}^\top$.
It is shown in Corollary~2.13 of \citet{Knyazev_2010} that 
\begin{equation}
\label{eq:proj-sin}
\bigl\|\boldsymbol{P}_{\widehat{\mathcal{V}}}-\boldsymbol{P}_{\mathcal{V}}\bigr\|_2
=
\max_{j\in[d_i]}\sin\theta_j
=
\bigl\|\sin\Theta(\widehat{\boldsymbol{V}},\boldsymbol{V})\bigr\|_2,
\end{equation}
where $\sin\Theta(\widehat{\boldsymbol{V}},\boldsymbol{V})$ is the diagonal matrix with entries $\sin\theta_j$ corresponding to the sine of the $j$-th principal angle between $\boldsymbol{V}$ and $\widehat{\boldsymbol{V}}$. Using Theorem~\ref{thm:dk} and the fact that $\delta = \lambda_{d_i}-\widehat{\lambda}_{d_i+1}$, we have 
\begin{equation}
\label{eq:dk}
\bigl\|\sin\Theta(\widehat{\boldsymbol{V}},\boldsymbol{V})\bigr\|_2
\le
\frac{\|\widehat{\boldsymbol{\Sigma}}-\boldsymbol{\Sigma}\|_2}{\lambda_{d_i}-\widehat{\lambda}_{d_i+1}},
\end{equation}
Using Corollary 6.3.4 of \citet{Horn_Johnson_1985}, we have $\widehat{\lambda}_{d_i+1} \le \lambda_{d_i+1} + \|\widehat{\boldsymbol{\Sigma}}-\boldsymbol{\Sigma}\|_2$, so
\[
\lambda_{d_i}-\widehat{\lambda}_{d_i+1}
\ge
\lambda_{d_i}-\lambda_{d_i+1}-\|\widehat{\boldsymbol{\Sigma}}-\boldsymbol{\Sigma}\|_2
=
\Delta-\|\widehat{\boldsymbol{\Sigma}}-\boldsymbol{\Sigma}\|_2
\ge
\frac{\Delta}{2},
\]
where the last step uses the assumption $\|\widehat{\boldsymbol{\Sigma}}-\boldsymbol{\Sigma}\|_2\le \frac{\Delta}{2}$.
Substituting this into \eqref{eq:dk} yields
\[
\bigl\|\sin\Theta(\widehat{\boldsymbol{V}},\boldsymbol{V})\bigr\|_2
\le
\frac{2}{\Delta}\,\|\widehat{\boldsymbol{\Sigma}}-\boldsymbol{\Sigma}\|_2.
\]
Finally, combining with \eqref{eq:proj-sin} gives
\[
\bigl\|\boldsymbol{P}_{\widehat{\mathcal{V}}}-\boldsymbol{P}_{\mathcal{V}}\bigr\|_2
\le
\frac{2}{\Delta}\,\bigl\|\widehat{\boldsymbol{\Sigma}}-\boldsymbol{\Sigma}\bigr\|_2,
\]
which concludes the proof.
\end{proof}

Now using these, we are ready to prove Theorem~\ref{thm:sasa-sample-complexity}.
\subsubsection{Proof of Theorem~\ref{thm:sasa-sample-complexity}}

\begin{proof}
By Theorem~\ref{thm:block-recovery} the SASA estimator on the slice is the top-$d_i$ eigenspace of the empirical second moment, so recovering the feature reduces to principal subspace estimation. We bound its sample complexity by combining the concentration of Lemma~\ref{lem:cov-concentration} with the Davis--Kahan bound of Lemma~\ref{lem:dk-projector}. Recall that $\mathcal{M}_i(t)$ is the radius-$t$ slice inside $\mathcal{V}_i$. Define the population covariance on the slice
\[
\boldsymbol{\Sigma}_i
:=
\underset{\boldsymbol{h}\sim\mathcal{M}_i(t)}{\mathbb{E}}\big[\boldsymbol{h}\boldsymbol{h}^{\top}\big],
\]
with $\|\boldsymbol{h}\|_2=t$.
Note that $\mathcal{M}_i(t)\subseteq \mathcal{V}_i$, so $\boldsymbol{\Sigma}_i$ has range contained in $\mathcal{V}_i$ and therefore
$\lambda_{d_i+1}(\boldsymbol{\Sigma}_i)=0$.
Moreover, because the slice is the uniform sphere of radius $t$ inside $\mathcal{V}_i$, the population covariance takes the isotropic form $\boldsymbol{\Sigma}_i=\frac{t^2}{d_i}\boldsymbol{P}_{\mathcal{V}_i}$.
Hence the top-$d_i$ eigenspace of $\boldsymbol{\Sigma}_i$ is exactly $\mathcal{V}_i$, and the population eigengap equals
\[
\Delta:=\lambda_{d_i}(\boldsymbol{\Sigma}_i)-\lambda_{d_i+1}(\boldsymbol{\Sigma}_i)=\frac{t^2}{d_i}.
\]

Let $\widehat{\boldsymbol{\Sigma}}_i=\frac{1}{n}\sum_{j=1}^{n}\boldsymbol{h}_j\boldsymbol{h}_j^{\top}$ and let
$\widehat{\mathcal{V}}_i$ be the top-$d_i$ eigenspace of $\widehat{\boldsymbol{\Sigma}}_i$.
By Lemma~\ref{lem:cov-concentration}, with probability at least $1-\delta$,
\[
\bigl\|\widehat{\boldsymbol{\Sigma}}_i-\boldsymbol{\Sigma}_i\bigr\|_2
\le
2t^2\left(
\frac{\log\!\left(\frac{2d}{\delta}\right)}{3n}
+
\frac{1}{n}\sqrt{\frac{\log^2\!\left(\frac{2d}{\delta}\right)}{9}+2n\log\!\left(\frac{2d}{\delta}\right)}
\right).
\]
Using $\sqrt{a+b}\le \sqrt{a}+\sqrt{b}$ with
$a=\frac{\log^2\!\left(\frac{2d}{\delta}\right)}{9}$ and $b=2n\log\!\left(\frac{2d}{\delta}\right)$ gives
\[
\bigl\|\widehat{\boldsymbol{\Sigma}}_i-\boldsymbol{\Sigma}_i\bigr\|_2
\le
2t^2\left(\frac{\log\!\left(\frac{2d}{\delta}\right)}{3n}
+\frac{1}{n}\left(\frac{\log\!\left(\frac{2d}{\delta}\right)}{3}+\sqrt{2n\log\!\left(\frac{2d}{\delta}\right)}\right)\right)
=
\frac{4t^2\log\!\left(\frac{2d}{\delta}\right)}{3n}
+
2t^2\sqrt{\frac{2\log\!\left(\frac{2d}{\delta}\right)}{n}}.
\]
We need to make sure the RHS is $\le \frac{t\varepsilon}{2d_i}$. It suffices to require each term on the right-hand side to be at most $\frac{t\varepsilon}{4d_i}$.
\[
\frac{4t^2\log\!\left(\frac{2d}{\delta}\right)}{3n}
\le
\frac{t\varepsilon}{4d_i},
\qquad
2t^2\sqrt{\frac{2\log\!\left(\frac{2d}{\delta}\right)}{n}}
\le
\frac{t\varepsilon}{4d_i}.
\]
These conditions are equivalent to
\[
n
\ge
\frac{16}{3}\,d_i\,t\,\frac{\log\!\left(\frac{2d}{\delta}\right)}{\varepsilon},
\qquad
n
\ge
128\,d_i^2\,\frac{t^2\log\!\left(\frac{2d}{\delta}\right)}{\varepsilon^2}.
\]
Now if
\[
n \ge 128\,d_i^2\,\frac{t^2\log\!\left(\frac{2d}{\delta}\right)}{\varepsilon^2},
\]
it satisfies both inequalities.
Then
\[
\bigl\|\widehat{\boldsymbol{\Sigma}}_i-\boldsymbol{\Sigma}_i\bigr\|_2
\le
\frac{t\varepsilon}{2d_i}
=
\frac{\Delta}{2} \cdot \frac{\varepsilon}{t}\le \frac{\Delta}{2}.
\]

Therefore Lemma~\ref{lem:dk-projector} applies and yields
\[
\bigl\|\boldsymbol{P}_{\widehat{\mathcal{V}}_i}-\boldsymbol{P}_{\mathcal{V}_i}\bigr\|_2
\le
\frac{2}{\Delta}\bigl\|\widehat{\boldsymbol{\Sigma}}_i-\boldsymbol{\Sigma}_i\bigr\|_2
\le
\frac{2}{\frac{t^2}{d_i}}\cdot \frac{t\varepsilon}{2d_i}
=
\frac{\varepsilon}{t}.
\]
Finally, for any $\boldsymbol{h}\in\mathcal{M}_i(t)$ with $\|\boldsymbol{h}\|_2=t$, we have $\boldsymbol{h}\in\mathcal{V}_i$ and thus
\[
\operatorname{dist}\big(\boldsymbol{h},\widehat{\mathcal{V}}_i\big)
=
\bigl\|(\boldsymbol{I}-\boldsymbol{P}_{\widehat{\mathcal{V}}_i})\boldsymbol{h}\bigr\|_2
=
\bigl\|(\boldsymbol{P}_{\mathcal{V}_i}-\boldsymbol{P}_{\widehat{\mathcal{V}}_i})\boldsymbol{h}\bigr\|_2
\le
\bigl\|\boldsymbol{P}_{\widehat{\mathcal{V}}_i}-\boldsymbol{P}_{\mathcal{V}_i}\bigr\|_2\,\|\boldsymbol{h}\|_2
\le
\frac{\varepsilon}{t}\cdot t
=
\varepsilon.
\]
Taking the supremum over $\boldsymbol{h}\in\mathcal{M}_i(t)$ with $\|\boldsymbol{h}\|_2=t$ completes the proof.
\end{proof}
\subsubsection{Proof of Proposition~\ref{prop:split-sample-complexity}}
\begin{proof}
    For a standard SAE in the line-covering regime, each of the $N$ decoder directions must receive at least one routed sample to be trained at all, so the number of activations required is governed by how many draws are needed to hit every index. Under the assumption of balanced routing, the problem of ensuring every direction receives at least one sample is equivalent to the Coupon Collector's Problem with $N$ distinct coupons (directions) and $n$ trials (samples). Let $T$ be the number of samples required to ensure $C_j \ge 1$ for all $j \in [N]$. According to \citet{Doumas_Papanicolaou_2015} Equation~1.3, the random variable $T$ follows a Gumbel limit distribution:
    \[
    \lim_{N\to\infty} \mathbb{P}\left( \frac{T - N \ln N}{N} \le y \right) = \exp(-e^{-y}).
    \]
    We seek to bound the probability of the failure event $\{\exists j \in [N] : C_j = 0\}$, which is equivalent to $\{T > n\}$. Let $y = \log(\frac{1}{\delta})$. Substituting into the limit expression, the threshold for sample size becomes $n^\star \approx N (\ln N + \log(\frac{1}{\delta}))$. At this threshold, the probability of successfully training all directions is
    \[
    \mathbb{P}(T \le n^\star) =  \exp(-e^{-\log(\frac{1}{\delta})}) = \exp(-\delta).
    \]
    Using the inequality $e^{-x} \ge 1-x$, we have $\mathbb{P}(T \le n^\star) \ge 1-\delta$. Consequently, the probability of failure at this threshold is $\mathbb{P}(T > n^\star) \le \delta$. Since $\mathbb{P}(T > n)$ is a strictly decreasing function of $n$, for any sample size $n < N(\ln N + \log(\frac{1}{\delta}))$, the probability of failure strictly exceeds $\delta$.
\end{proof}

\section{Training of SASA}
\label{apx:training}
This section provides details on SASA training.
\subsection{LLMs and Layers}
We train on residual-stream activations from two pretrained LLMs:
(i) GPT-2 Small ($d=768$) and (ii) Mistral-7B-v0.1 ($d=4096$).
For each model, we cache activations at a single mid-layer residual-stream hook of the form \texttt{blocks.$\ell$.hook\_resid\_pre}.
Concretely, we use \texttt{blocks.7.hook\_resid\_pre} for GPT-2 and \texttt{blocks.8.hook\_resid\_pre} for Mistral.
Unless stated otherwise, each token position yields one training example $h_t$.

\subsection{Data}
\paragraph{Corpora.}
GPT-2 SAEs/SASA are trained on OpenWebText \citep{Gokaslan_Cohen_Pavlick_Tellex_2019}.
Mistral SAEs/SASA are trained on Pile \citep{Gao_Biderman_Black_Golding_Hoppe_Foster_Phang_He_Thite_Nabeshima_etal._2020}.

\paragraph{Context length and token budgets.}
We train on sequences of a fixed context length per model.
For GPT-2 we use context length $128$ and a training budget of $150\mathrm{M}$ tokens.
For Mistral we use context length $512$ and a training budget of $500\mathrm{M}$ tokens.
Tokenization uses each model's native tokenizer. We stream data and do not update the LLM parameters.

\subsection{Optimization and Hyperparameters}
\paragraph{Implementations.}
We train SASA using SAELens with a custom \texttt{TrainingSAE} class that implements group-structured latents and Top-$s$ gating over group norms.

\paragraph{Architecture hyperparameters.}
We parameterize SASA by $(K,r,s)$.
Given a target width $m$ and target scalar sparsity $\ell_0$, we choose a rank $r$ and set $K=m/r$ and $s=\ell_0/r$.
In our main GPT-2 runs we use $(K,r,s)=(2048,6,10)$, yielding $m=12288$ and $\ell_0=60$.
In our Mistral runs we use $(K,r,s)=(4096,8,10)$, yielding $m=32768$ and $\ell_0=80$. 
We use $\lambda_{\mathrm{aux}}=1$. We set $s_{\mathrm{aux}}=512$ for GPT-2 runs and $s_{\mathrm{aux}}=256$ for Mistral runs.

\paragraph{Optimizer and schedule.}
We use AdamW with $(\beta_1,\beta_2)=(0.9,0.999)$, learning rate $2\times10^{-4}$, and weight decay $10^{-3}$.
We linearly warm up for $1000$ steps and then linearly decay the learning rate over one-fifth of the training.
The token batch size is $4096$ tokens, and we buffer $128$ such batches for efficient streaming.

\paragraph{Activation normalization and biases.}
Inputs are normalized using \texttt{layer\_norm} activation normalization.

\section{Algorithm Details of SASA}
\label{apx:alg}
\subsection{Auxilary Loss}
\label{apx:alg-aux}
Top-$s$ gating can leave some groups rarely active \citep{Gao_Tour_Tillman_Goh_Troll_Radford_Sutskever_Leike_Wu_2024}. Let $\pi_k$ denote a running estimate of the activation frequency of group $k$. We define the dead set
\[
\mathcal{K}_{\mathrm{dead}}=\{k\in[K]:\pi_k\le\nu\},
\]
where $\nu=10^{-4}$ is the deadness threshold over a window of $1000$ tokens.

For each $\boldsymbol{h}$, define the residual
\[
 r(\boldsymbol{h})=\boldsymbol{h}-\boldsymbol{D}a(\boldsymbol{h}),
\]
and treat $ r(\boldsymbol{h})$ as a frozen target (i.e., gradients do not flow through $\boldsymbol{r}$). For dead groups, compute residual pre-activations
\[
\widetilde{p}_k(\boldsymbol{h})=\boldsymbol{E}_k r(\boldsymbol{h}),
\qquad k\in\mathcal{K}_{\mathrm{dead}}.
\]
We then select $s_{\mathrm{aux}}$ dead groups with the largest residual energy.
\[
\widetilde{\mathcal{T}}_{s_{\mathrm{aux}}}(\boldsymbol{h})
\in
\arg\max_{T\subset\mathcal{K}_{\mathrm{dead}},|T|=s_{\mathrm{aux}}}
\sum_{k\in T}\|\widetilde{p}_k(\boldsymbol{h})\|_2^2,
\]
and define auxiliary activations
\[
\widetilde{a}_k(\boldsymbol{h})=
\begin{cases}
\widetilde{p}_k(\boldsymbol{h}), & k\in\widetilde{\mathcal{T}}_{s_{\mathrm{aux}}}(\boldsymbol{h}),\\
\boldsymbol{0}, & \text{o.w.}
\end{cases}.
\]
The auxiliary reconstruction uses only dead groups and targets the frozen residual.
\begin{equation}
\label{eq:sasa-aux}
\mathcal{L}_{\mathrm{aux}}(\boldsymbol{h})
=
\| r(\boldsymbol{h})-\boldsymbol{D}\widetilde{a}(\boldsymbol{h})\|_2^2.
\end{equation}

\subsection{Pseudo-code of SASA}
\label{apx:alg-pseuodo}
In this section, we present the pseudo-code for both the training and inference procedures of the proposed SASA in Algorithm~\ref{alg:sasa-train} and Algorithm~\ref{alg:sasa-infer}, respectively.
\subsubsection{Training}
Below is the pseudocode for training SASA on a minibatch of hidden states. In addition to the reconstruction loss, we add the auxiliary loss (see Appendix~\ref{apx:alg-aux}) and the nuclear-norm regularization loss.
\begin{algorithm}[!h]
  \caption{SASA training step}
  \label{alg:sasa-train}
  \begin{algorithmic}
    \STATE {\bfseries Input:} minibatch $\{\boldsymbol{h}^{(j)}\}_{j=1}^{B}$, parameters $\{\boldsymbol{E}_k,\boldsymbol{D}_k\}_{k=1}^{K}$
    \STATE {\bfseries Hyperparameters:} $s$, $s_{\mathrm{aux}}$, $\lambda_{\mathrm{aux}}$, $\lambda_{\mathrm{dim}}$
    \STATE $J \gets 0$.
    \FOR{$j=1$ {\bfseries to} $B$}
      \STATE $\boldsymbol{h}\gets \boldsymbol{h}^{(j)}$, $p(\boldsymbol{h}) \gets \boldsymbol{E}\boldsymbol{h}$ with blocks $p_k(\boldsymbol{h})=\boldsymbol{E}_k\boldsymbol{h}$
      \STATE $\mathcal{T}_s(\boldsymbol{h})\in\arg\max_{T\subset[K],|T|=s}\ \sum_{k\in T}\|p_k(\boldsymbol{h})\|_2$
      \STATE $a_k(\boldsymbol{h}) \gets p_k(\boldsymbol{h})$ if $k\in\mathcal{T}_s(\boldsymbol{h})$, else $a_k(\boldsymbol{h})\gets \boldsymbol{0}$
      \STATE $\mathcal{L}_{\mathrm{aux}}(\boldsymbol{h}) \gets 0$
      \STATE $\mathcal{K}_{\mathrm{dead}} \gets$ set of dead neurons
      \IF{$\mathcal{K}_{\mathrm{dead}}\neq\emptyset$}
      \STATE $ r(\boldsymbol{h}) \gets \boldsymbol{h}-\boldsymbol{D}a(\boldsymbol{h})$
      \FOR{\textbf{each} $k\in\mathcal{K}_{\mathrm{dead}}$}
        \STATE $\widetilde{p}_k(\boldsymbol{h}) \gets \boldsymbol{E}_k r(\boldsymbol{h})$
      \ENDFOR
      \STATE $\widetilde{\mathcal{T}}_{s_{\mathrm{aux}}}(\boldsymbol{h})\in\arg\max_{T\subset\mathcal{K}_{\mathrm{dead}},|T|=s_{\mathrm{aux}}}\ \sum_{k\in T}\|\widetilde{p}_k(\boldsymbol{h})\|_2^2$
      \STATE $\widetilde{a}_k(\boldsymbol{h}) \gets \widetilde{p}_k(\boldsymbol{h})$ if $k\in\widetilde{\mathcal{T}}_{s_{\mathrm{aux}}}(\boldsymbol{h})$, else $\widetilde{a}_k(\boldsymbol{h})\gets \boldsymbol{0}$
      \STATE $\mathcal{L}_{\mathrm{aux}}(\boldsymbol{h}) \gets \| r(\boldsymbol{h})-\boldsymbol{D}\widetilde{a}(\boldsymbol{h})\|_2^2$
        \ENDIF
      \STATE $J \gets J + \|\boldsymbol{h}-\boldsymbol{D}a(\boldsymbol{h})\|_2^2 + \lambda_{\mathrm{aux}}\mathcal{L}_{\mathrm{aux}}(\boldsymbol{h})$
    \ENDFOR
    \STATE $J \gets \frac{1}{B}J + \lambda_{\mathrm{dim}}\sum_{k=1}^{K}\|\boldsymbol{D}_k\boldsymbol{E}_k\|_*$
    \STATE Update $(\boldsymbol{E},\boldsymbol{D})$ to minimize $J$.
  \end{algorithmic}
\end{algorithm}

\subsubsection{Infernce}
Below is the pseudo-code for inference and interpretation using SASA. Unlike standard SAEs, for which a single latent value represents the \emph{intensity} of the feature, in SASA, the respective group norm serves for that.

\begin{algorithm}[!h]
  \caption{SASA reconstruction inference}
  \label{alg:sasa-infer}
  \begin{algorithmic}
    \STATE {\bfseries Input:} LLM activation $\boldsymbol{h}$, trained parameters $\{\boldsymbol{E}_k,\boldsymbol{D}_k\}_{k=1}^{K}$, sparsity level $s$
    \FOR{$k=1$ {\bfseries to} $K$}
      \STATE $p_k(\boldsymbol{h}) \gets \boldsymbol{E}_k\boldsymbol{h}$
    \ENDFOR
    \STATE $\mathcal{T}_s(\boldsymbol{h})\in\arg\max_{T\subset[K],|T|=s}\ \sum_{k\in T}\|p_k(\boldsymbol{h})\|_2$
    \FOR{$k=1$ {\bfseries to} $K$}
      \STATE $a_k(\boldsymbol{h}) \gets p_k(\boldsymbol{h})$ if $k\in\mathcal{T}_s(\boldsymbol{h})$, else $a_k(\boldsymbol{h})\gets \boldsymbol{0}$
    \ENDFOR
    \STATE {\bfseries Output:} Reconstruction $\boldsymbol{D}a(\boldsymbol{h}) = \sum_{k=1}^{K}\boldsymbol{D}_ka_k(\boldsymbol{h})$, sparse codes $\{a_k(\boldsymbol{h})\}_{k=1}^{K}$, active set $\mathcal{T}_s(\boldsymbol{h})$
    \STATE {\bfseries Feature intensity:} For feature $k$, intensity is $\|a_k(\boldsymbol{h})\|_2$.
  \end{algorithmic}
\end{algorithm}

\clearpage
\section{Low-dimensional Structure Validation}
\label{apx:low-dim-validation}

This appendix collects experimental results on the validation of the low-dimensional subspace assumption.

\begin{figure}[!h]
    \centering
    \begin{subfigure}[t]{0.48\linewidth}
        \centering
        \includegraphics[width=\linewidth]{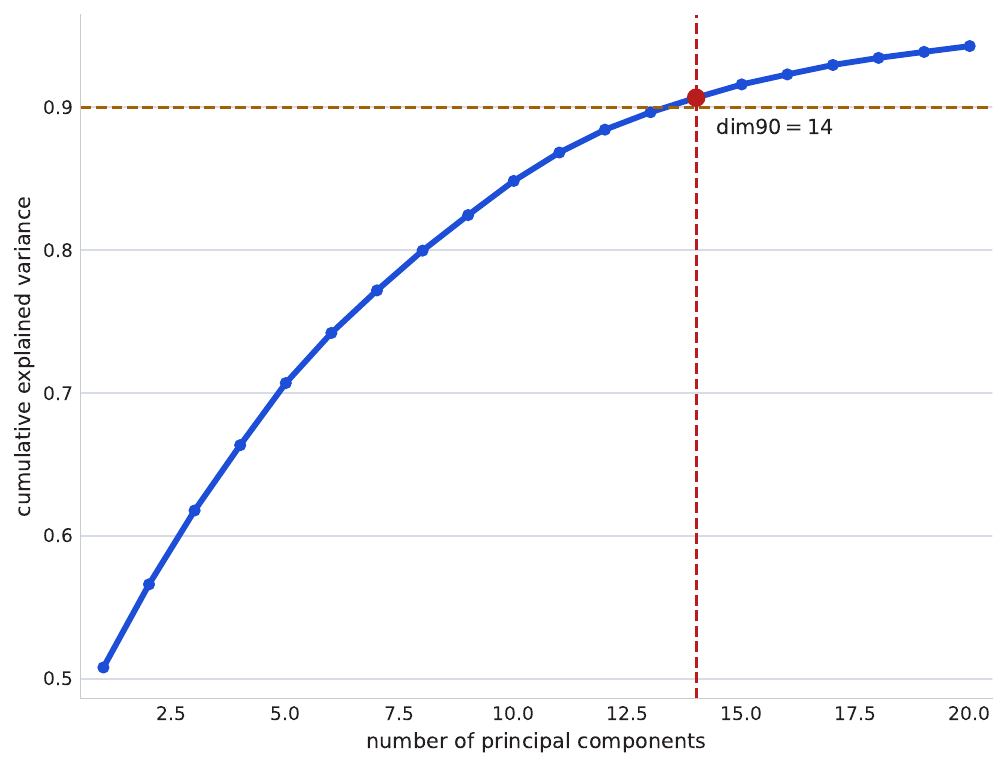}
        \caption{Temporal slice with $\mathrm{dim90}=14$.}
        \label{fig:apx-intrinsic-temporal}
    \end{subfigure}\hfill
    \begin{subfigure}[t]{0.48\linewidth}
        \centering
        \includegraphics[width=\linewidth]{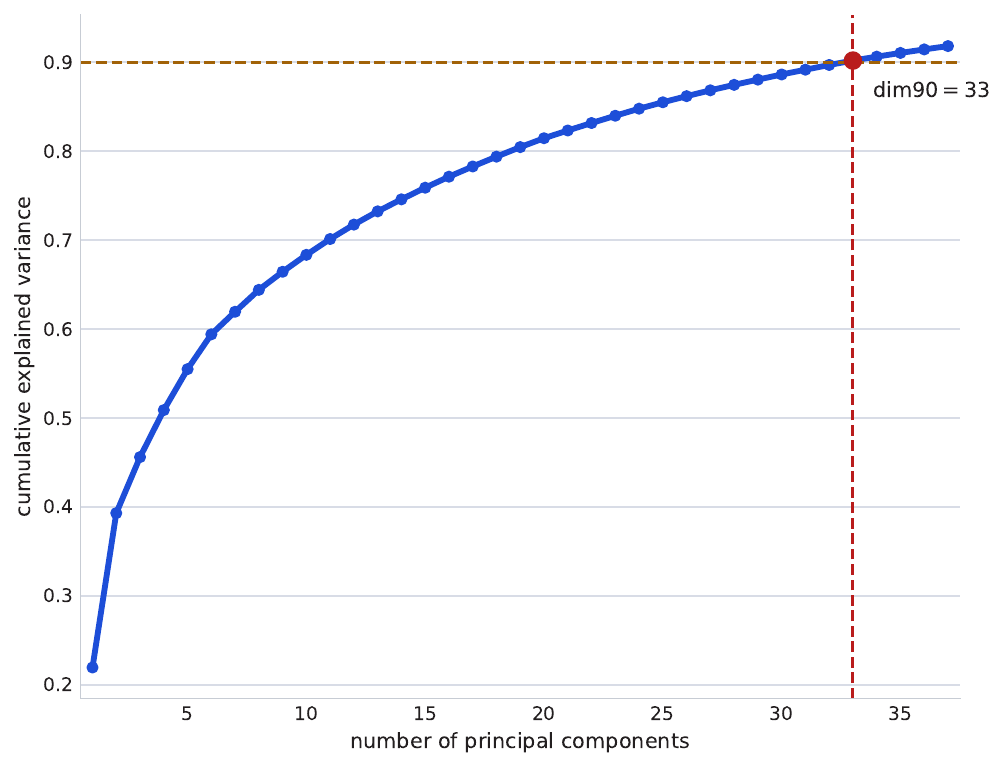}
        \caption{Geography slice with $\mathrm{dim90}=33$.}
        \label{fig:apx-intrinsic-geography}
    \end{subfigure}
    \caption{\textbf{Intrinsic dimensionality in raw GPT-2 activations} (no SAE involved). PCA on controlled concept prompts confirms compact subspaces within the $768$-dimensional activation space.}
    \label{fig:apx-intrinsic-dim}
\end{figure}

\begin{figure}[!h]
    \centering
    \includegraphics[width=0.70\linewidth]{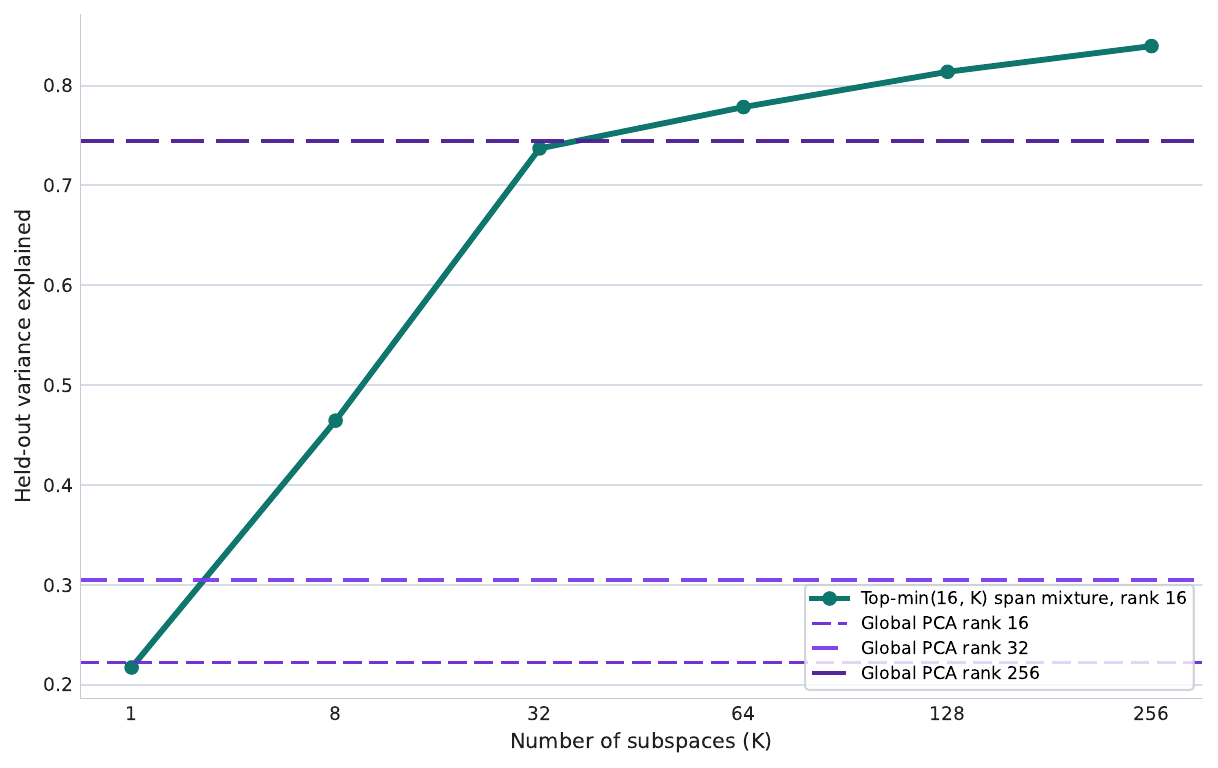}
    \caption{\textbf{Mixture-of-subspaces reconstruction in raw GPT-2 activations.} Rank-$16$ local PCA at $K=256$ clusters captures $83.95\%$ of held-out variance, exceeding global PCA at rank~$256$ ($74.46\%$).}
    \label{fig:apx-mixture-subspaces}
\end{figure}

Section~\ref{subsec:interpretability} briefly summarizes the low-dimensional structure validation. Here we provide the full figures and methodology. We apply PCA directly to GPT-2 layer-7 activations on controlled concept prompts (Appendix~\ref{apx:exp-interpretabiity}), without any SAE decoder. Figure~\ref{fig:apx-intrinsic-dim} shows that the temporal slice has $\mathrm{dim90}=14$ and the geography slice has $\mathrm{dim90}=33$, both out of $768$ total dimensions.

To test whether the low-dimensional structure extends beyond individual concepts, we fit a mixture-of-subspaces model on raw OpenWebText activations. We sample $120\mathrm{k}$/$30\mathrm{k}$ train/test activations from layer~7, run $K$-means, and fit local rank-$16$ PCA within each cluster. Each test activation is reconstructed by projection onto the span of up to $16$ selected local subspaces. At $K=256$ clusters, this model captures $83.95\%$ of held-out variance versus $74.46\%$ for global PCA at rank~$256$ (Figure~\ref{fig:apx-mixture-subspaces}), confirming that raw GPT-2 activations are better described by a mixture of low-dimensional local subspaces than by a single global one.
\clearpage
\section{Clustering Standard SAEs}
\label{apx:clusters}
Here in this section, we analyze the clusters of decoder atoms. We first cluster all the vectors in the SAE decoder. For GPT-2, we perform spectral clustering, and for Mistral-7B, we treat each vector as a node and compute the pairwise cosine similarity matrix of decoder vectors as the graph's adjacency matrix, then prune edges with similarity less than $0.5$. Clusters will be the connected components of the remaining graph.

We then select clusters with size greater than $3$, compute the PCA dimensionality that captures at least $80\%$ of the cluster's variance, and compare it with the cluster size. Figure~\ref{fig:mistral_clusters} and Figure~\ref{fig:gpt_cluster} show the result. As shown in these figures, the cluster size over PCA ratio indicates that we actually need fewer vectors to capture most of the variance, suggesting that many of these clusters have collinear coverage directions and can be expressed with fewer atoms. This shows the consequence of feature splitting on \emph{wasting SAE's capability} to devote more atoms than needed to discover a feature. 

\begin{figure}[!h]
    \centering
    \includegraphics[width=\linewidth]{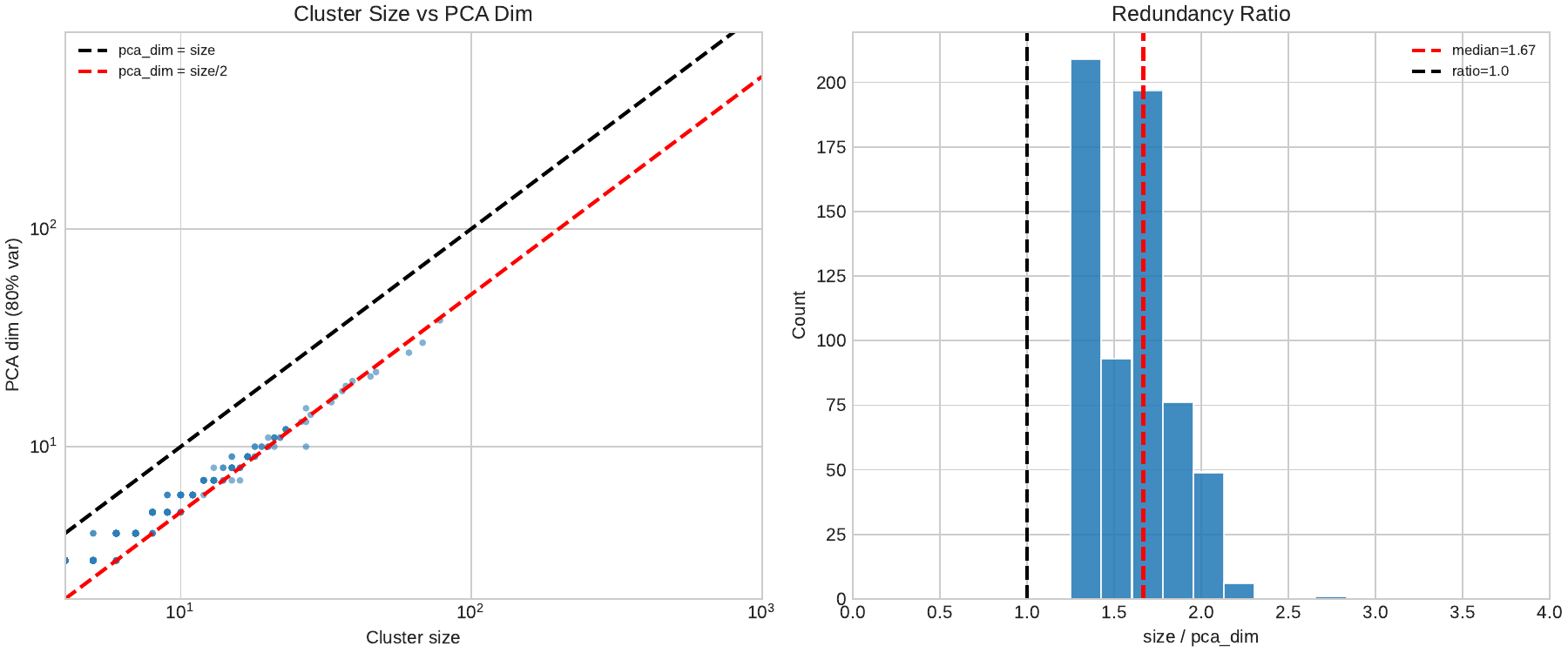}
    \caption{\textbf{Redundancy Ratio of Mistral-7B SAE Decoder Clusters.} The left panel shows cluster size vs PCA dimension (capturing $80\%$ variance). The right panel shows a histogram of redundancy ratios. The median ratio of $1.67$ suggests features are often split across multiple collinear vectors, indicating inefficiency.}
    \label{fig:mistral_clusters}
\end{figure}

\begin{figure}[!h]
    \centering
    \includegraphics[width=\linewidth]{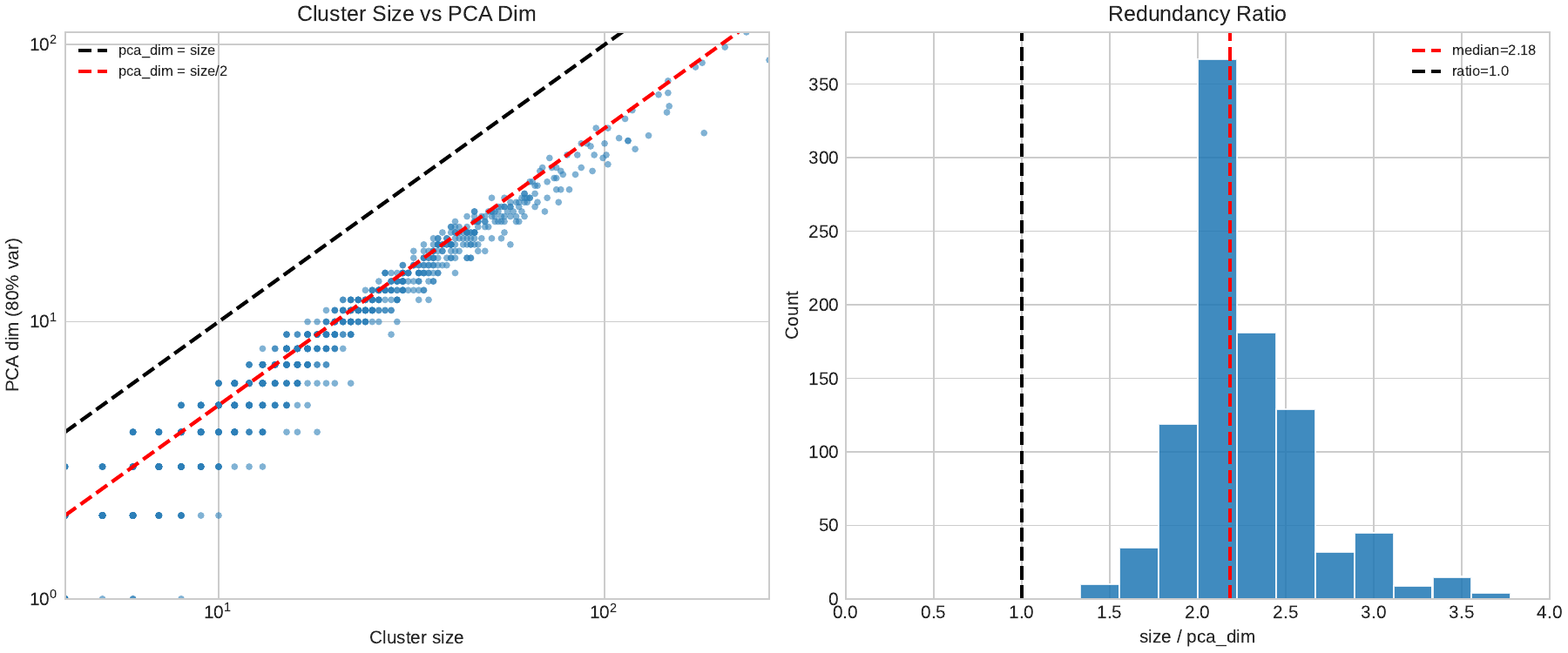}
    \caption{\textbf{Redundancy Ratio of GPT-2 SAE Decoder Clusters.} The median redundancy ratio of $2.18$ highlights significant feature splitting, where standard SAEs use excess vectors to represent lower-dimensional subspaces, wasting model capacity.}
    \label{fig:gpt_cluster}
\end{figure}
\newpage
\section{Additional Experiments Details}
\label{apx:exp}
\subsection{Feature Absorption Benchmark Details}
\label{apx:exp-absorption}

We evaluate feature absorption using the first-letter absorption benchmark from \citet{Chanin_Wilken-Smith_Dulka_Bhatnagar_Golechha_Bloom_2025}. Prompts follow the in-context spelling template
\texttt{\{word\} has the first letter:}
with up to $10$ in-context examples drawn from alphabetic tokens. We record residual-stream activations at the queried-word position.

\paragraph{Detecting splitting and selecting main features.}
For each letter $\ell \in \{\mathrm{a},\dots,\mathrm{z}\}$, we train $k$-sparse logistic probes on SAE activations $a(\boldsymbol{h})(\boldsymbol{h})\in\mathbb{R}^m$ to predict whether the correct answer is $\ell$, for $k=1,\dots,10$.
Let $\mathrm{F1}_\ell^{(k)}$ denote the best F1 score at sparsity $k$, and let $\mathcal{S}_\ell^{(k)}\subset[m]$ be the corresponding selected feature indices. We detect splitting when
\begin{equation}
\mathrm{F1}_\ell^{(k)}-\mathrm{F1}_\ell^{(k-1)} \ge \delta_{\mathrm{F1}},
\end{equation}
with $\delta_{\mathrm{F1}}=0.03$, and define the selected set $\mathcal{S}_\ell := \mathcal{S}_\ell^{(k)}$ as the letter's \emph{main} features (larger $k$ indicates more splitting).

\paragraph{Absorption metrics.}
Let $\boldsymbol{p}_\ell\in\mathbb{R}^d$ be a unit-normalized ground-truth residual-stream probe direction for letter $\ell$, and let $\boldsymbol{d}_j\in\mathbb{R}^d$ be the unit-normalized decoder direction of SAE feature $j$.
We decompose the probe-aligned projection across features via
\begin{equation}
s_{\ell,j}(\boldsymbol{h})
:=
[a(\boldsymbol{h})]_j\cos(\boldsymbol{d}_j,\boldsymbol{p}_\ell),
\qquad
S_\ell(\boldsymbol{h})
:=
\sum_{j:s_{\ell,j}(\boldsymbol{h})>0} s_{\ell,j}(\boldsymbol{h}).
\end{equation}
We evaluate on \emph{false negatives} for letter $\ell$, the token instances where the ground-truth probe predicts positive but the $k$-sparse SAE predictor (using $\mathcal{S}_\ell$) predicts negative.

Fix thresholds $\tau_{\mathrm{align}}=0.1$, $\tau_{\mathrm{frac}}=0.4$, and $K_{\max}=3$. For a false-negative instance with state $\boldsymbol{h}$, define the candidate absorbing features
\begin{equation}
\mathcal{J}_{\ell}(\boldsymbol{h})
:=
\left\{j\notin \mathcal{S}_\ell:\ \cos(\boldsymbol{d}_j,\boldsymbol{p}_\ell)\ge\tau_{\mathrm{align}}\right\},
\end{equation}
and let $\mathcal{A}_{\ell}(\boldsymbol{h})\subseteq \mathcal{J}_{\ell}(\boldsymbol{h})$ be the set of up to $K_{\max}$ indices with largest $s_{\ell,j}(\boldsymbol{h})$.
The \emph{absorption fraction} is
\begin{equation}
\alpha_\ell(\boldsymbol{h})
:=
\frac{\sum_{j\in\mathcal{A}_{\ell}(\boldsymbol{h})} s_{\ell,j}(\boldsymbol{h})}{S_\ell(\boldsymbol{h})},
\qquad
\end{equation}
and we count absorption when$\alpha_\ell(\boldsymbol{h})\ge\tau_{\mathrm{frac}}$.
We report \emph{Mean fraction absorption} by averaging $\alpha_\ell(\boldsymbol{h})$ over false negatives and then averaging over eligible letters.

\paragraph{Full absorption.}
Full absorption is the stricter event where no main feature fires and a \emph{single} probe-aligned feature explains at least a $\tau_{\mathrm{frac}}$ fraction of $S_\ell(\boldsymbol{h})$.
With $\tau_{\mathrm{full}}=0.025$, define
\begin{equation}
\beta_\ell(\boldsymbol{h})
:=
\mathbb{I}\!\left[\forall j\in\mathcal{S}_\ell,\ [a(\boldsymbol{h})(\boldsymbol{h})]_j=0\right]\cdot
\mathbb{I}\!\left[
\max_{j:\cos(\boldsymbol{d}_j,\boldsymbol{p}_\ell)\ge\tau_{\mathrm{full}}}
\frac{s_{\ell,j}(\boldsymbol{h})}{S_\ell(\boldsymbol{h})}
\ge \tau_{\mathrm{frac}}
\right].
\end{equation}
We report \emph{Full fraction absorption} by averaging $\beta_\ell(\boldsymbol{h})$ over false negatives and then over eligible letters.
We include a letter $\ell$ only if the ground-truth probe achieves F1 $\ge 0.6$, and we require at least $20$ eligible letters.

\subsection{Interpretability Experiments}
\label{apx:exp-interpretabiity}
\subsubsection{Finding specific groups for Temporal and Geographical concepts}

\paragraph{Automated group screening.}
To identify candidate groups, we rank all groups by how selectively their activations respond to a target concept (temporal or geographical) at the token level.
We construct two evaluation sets,
(i) a synthetic prompt suite with explicit concept terms, and
(ii) a natural-text slice from OpenWebText to check that selectivity carries over to real contexts.

\paragraph{Prompt templates.}
For each concept family we define a lexicon of terms and instantiate sentence templates by substituting \texttt{\{term\}}.
We generate all term-template combinations, shuffle, and keep at most \texttt{max\_prompts=800} prompts per family.

\noindent\emph{Geography templates}
\begin{quote}\small\ttfamily
I traveled to \{term\}.\\
She moved to \{term\} for work.\\
He lives in \{term\} now.\\
The city of \{term\} is famous.\\
The region around \{term\} is beautiful.\\
We drove across \{term\}.\\
The capital of \{term\} is well known.\\
The border of \{term\} is disputed.\\
Flights to \{term\} were delayed.\\
A map of \{term\} was published.\\
The river in \{term\} floods in spring.\\
The cuisine of \{term\} is popular.\\
\{term\} appears on the map.\\
The mountains in \{term\} are high.\\
The coastline of \{term\} is long.\\
The embassy in \{term\} reopened.
\end{quote}

\noindent\emph{Temporal templates}
\begin{quote}\small\ttfamily
The event happened in \{term\}.\\
The meeting is on \{term\}.\\
She will return in \{term\}.\\
They met every \{term\}.\\
He works best in the \{term\}.\\
The deadline is \{term\}.\\
We arrived \{term\}.\\
The festival is held in \{term\}.\\
The schedule changed by \{term\}.\\
She called me on \{term\}.\\
The report is due in \{term\}.\\
The appointment is \{term\}.\\
He studies each \{term\}.\\
We meet once a \{term\}.\\
The contract ends in \{term\}.\\
The anniversary is in \{term\}.
\end{quote}

\noindent\emph{Negative templates}
\begin{quote}\small\ttfamily
I bought a \{term\} at the store.\\
She found a \{term\} in the garden.\\
He repaired the \{term\} at home.\\
We cooked a \{term\} for dinner.\\
The \{term\} was on the table.\\
A \{term\} fell from the shelf.\\
He cleaned the \{term\} yesterday.\\
She carried a \{term\} to work.\\
The \{term\} needs a new battery.\\
They painted the \{term\} blue.
\end{quote}
\paragraph{Token-level activations and labels.}
For each prompt, we run a forward pass, cache the residual stream at the SAE hook, and compute encoder activations.
We reshape the SAE activations into group coordinates $a_{t,g}\in\mathbb{R}^r$ for each token position $t$ and group $g$, and use the group norm $\|a_{t,g}\|_2$ as the scalar activation for selectivity scoring.
Tokens are labeled positive if their normalized string matches the corresponding lexicon. For temporal concepts, we additionally label any token matching a four-digit year pattern as positive.

\paragraph{Ranking metrics and hyperparameters.}
For each group we compute AUC between positive and negative tokens using $\|a_{t,g}\|_2$, Cohen's $d$ based on the difference in means, and precision at an ``active'' threshold defined per-group as the \texttt{quantile=0.99} quantile of $\|a_{t,g}\|_2$.
We use \texttt{seed=42} and \texttt{batch\_size=16}.
We also rerun the same token-level evaluation on an OpenWebText slice (\texttt{corpus\_docs=200}, \texttt{corpus\_max\_tokens=128}, stopping once at least \texttt{corpus\_min\_pos=200} positive tokens are observed).
As a sanity check, we fit an $\ell_1$-regularized logistic-regression probe on the full vector of group norms per token and inspect the largest positive weights.
\paragraph{Selection.}
For geography, the analysis of Mistral SASA consistently identifies Group 1570 as a strong, robust geography-selective group with $\mathrm{AUC}=0.98$, Cohen's $d=4.36$, and $\ell_1\text{-LR weight}=0.99$, and we use it in the downstream analysis.
For temporal concepts, we select Group 1473 similarly with $\mathrm{AUC}=0.94$, Cohen's $d=2.42$, and $\ell_1\text{-LR weight}=0.96$.

\subsubsection{Token Activation Profiles}

A token activation profile visualizes where a group fires within a prompt.
Given a prompt, we compute $\|a_{t,g}\|_2$ for each token position $t$ and plot this value as a function of token index.

\paragraph{Temporal prompts and hyperparameters.}
For Group 1473 we use three fixed prompts containing days, months, years, and seasons.
\begin{quote}\small\ttfamily
On Monday, March 3, 1997, the committee met in private session.\\
On Friday, September 21, 2001, the city was marked by heavy rain.\\
In the summer of 2012, the team traveled to Europe for training.
\end{quote}

\paragraph{Geography prompts and hyperparameters.}
For Mistral SASA Group 1570 we use three fixed prompts spanning city, country, and continent mentions.
\begin{quote}\small\ttfamily
A storm delayed my flight to Chicago in the US.\\
The Toronto skyline in Canada is impressive.\\
Our team met in Paris in France before heading back from Europe.
\end{quote}

\paragraph{OpenWebText activation examples.}
Furthermore, Table~\ref{tab:temporal_group1473_owt} lists five OpenWebText contexts with the highest Group 1473 activations (group norms), highlighting the most active token.

\begin{table}[!h]
    \centering
    \small
    \begin{tabular}{@{}r p{0.78\linewidth}@{}}
        \toprule
        \textbf{Activation} & \textbf{Prompt} \\
        \midrule
        14.936 & Attorney General Dominic Grieve was presented with legal papers on \colorbox{yellow}{Monday} arguing that because there were no fingerprints on five items found with \\
        14.880 & state of Chihuahua returned to the United States on \colorbox{yellow}{Friday} night. The four-year-old remained in the custody \\
        14.703 & office. The draft resolution, circulated by Egypt on \colorbox{yellow}{Wednesday} night and originally slated for a vote Thursday, demands \\
        14.602 & President Trump made his debut at the United Nations on \colorbox{yellow}{Tuesday}, addressing the U.N. General Assembly at its annual \\
        14.506 & s president Uhuru Kenyatta said in a speech on \colorbox{yellow}{Tuesday} (Feb. 16) that his government is thinking about building \\
        \bottomrule
    \end{tabular}
    \caption{Activation values and corresponding OpenWebText prompts for GPT-2 SASA Group 1473.}
    \label{tab:temporal_group1473_owt}
\end{table}

\subsubsection{Structure of subspaces}

\paragraph{What the points represent.}
For a fixed group $g$, each point in a geometry plot corresponds to the group coordinate vector $a_{t,g}\in\mathbb{R}^r$ at a token position $t$ where the token matches a concept instance (for example, a month name or a city name).
We then project these vectors with PCA for visualization.

\paragraph{Temporal subspace (Group 1473).}
We construct a set of target tokens for days of the week, months, seasons, and years in the range 1980--2024, including their abbreviations, retaining only single-token terms under the model tokenizer.
We stream OpenWebText, process $2000$ documents truncated to $128$ tokens, and collect at most $40$ vectors per label.
To focus on confident activations, we retain occurrences with group norm above $0.1$.
We standardize all collected vectors and fit PCA. The plotted points are the PCA-projected group coordinates, colored by temporal category.

\paragraph{Geographical subspace (Mistral SASA Group 1570).}
We use the RAVEL city dataset, which comprises $3552$ cities, along with their associated countries and continents.
For each city, we form an evaluation prompt of the form \texttt{City: \{city\}. Country: \{country\}. Continent: \{continent\}.} and run the model with padding and truncation to $96$ tokens.
We locate the token spans corresponding to the city, country, and continent strings. If an entity spans multiple tokens, we average $a_{t,g}$ across the span.
We then average across all prompts to obtain a single vector for each city and each country, standardize the combined set, and fit a 2D PCA for the scatter plot (subsampling cities to at most $2000$ points for readability).

\subsubsection{Temporal Concept cyclic season recovery}

\paragraph{Month centroids.}
Using the same OpenWebText collection described above, we take all vectors labeled as months and compute a centroid vector for each month name by averaging its collected group coordinates.

\paragraph{Circular projection.}
To test whether the learned representation respects the cyclic calendar topology, we fit a linear map from group space to $\mathbb{R}^2$ that sends month centroids to evenly spaced points on the unit circle in chronological order.
Concretely, for month index $m\in\{1,\dots,12\}$ we set the target to $\left(\cos(2\pi m/12),\sin(2\pi m/12)\right)$ and solve a ridge-regularized least-squares problem with ridge coefficient $10^{-6}$.

\paragraph{Season ordering plot.}
We apply this projection to the month centroids, compute season centroids by averaging projected months within each season (Winter is Dec--Feb, Spring is Mar--May, Summer is Jun--Aug, Autumn is Sep--Nov), and plot both months and seasons in the learned 2D coordinates.
The resulting figure visualizes whether the four seasons appear in the expected cyclic order.

\clearpage
\section{Feature Analysis: Geographical Concepts}
\label{apx:geo}
Analogous to temporal concepts, geographical knowledge in LLMs is also inherently structured. We identify that Mistral-7B SASA learns a coherent \emph{Geographical Subspace} (Mistral SASA Group 1570) that unifies the concepts of city, country, and continent.
As shown in the activation profiles in Figure~\ref{fig:geo_profiles}, this group activates robustly across distinct levels of granularity. It fires on cities (``Chicago'', ``Paris''), countries (``US'', ``France''), and continents (``Europe'') within natural contexts.
The PCA visualization in Figure~\ref{fig:geo_structure} demonstrates that the subspace organizes these entities into distinct, geometrically related clusters.
\begin{figure}[!h]
    \centering
    \includegraphics[width=\linewidth]{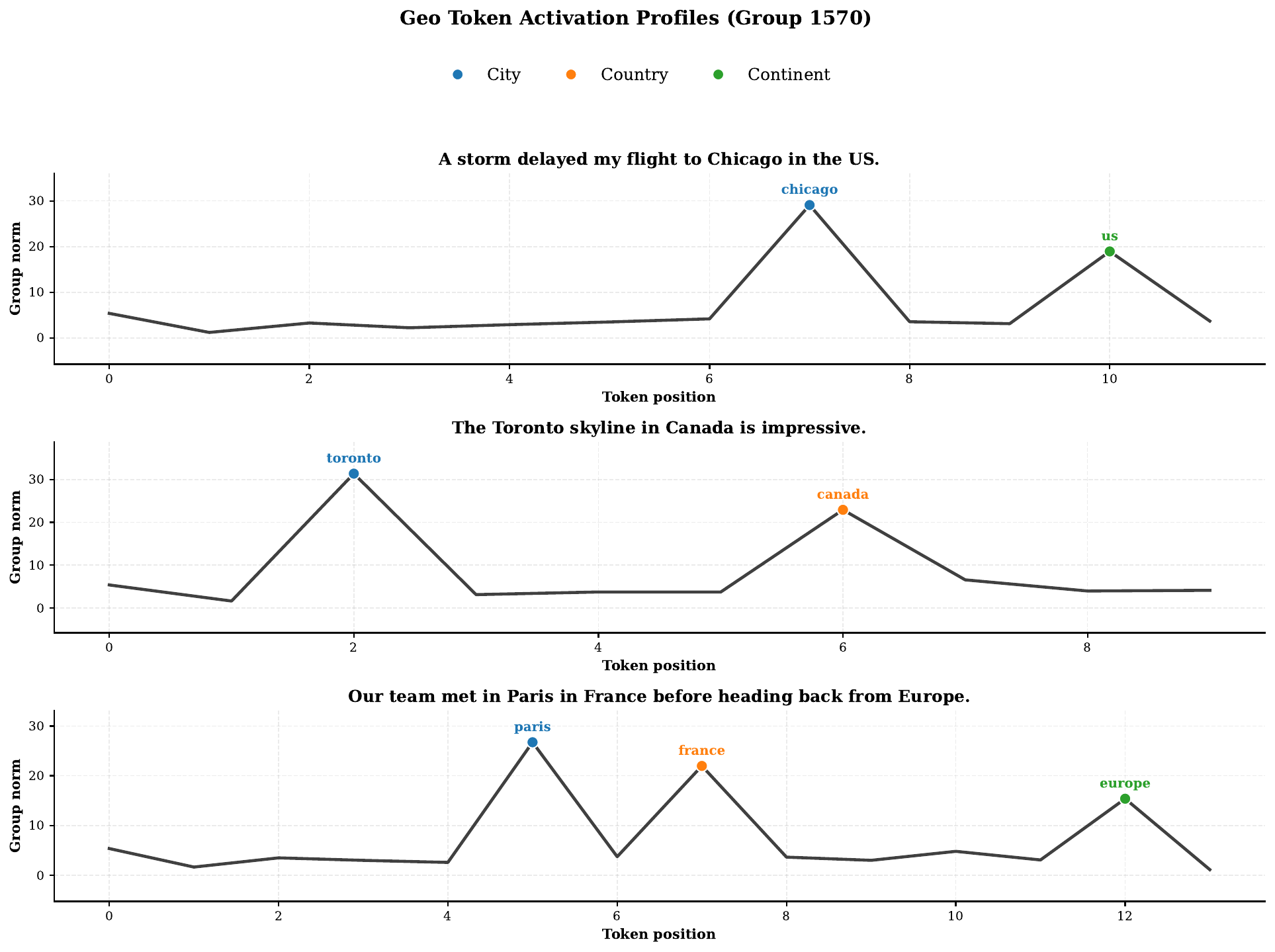}
    \caption{\textbf{Mistral SASA Group 1570 Activation Profiles.} The group consistently activates on geographical tokens.}
    \label{fig:geo_profiles}
\end{figure}

\begin{figure}[!h]
    \centering
    \includegraphics[width=\linewidth]{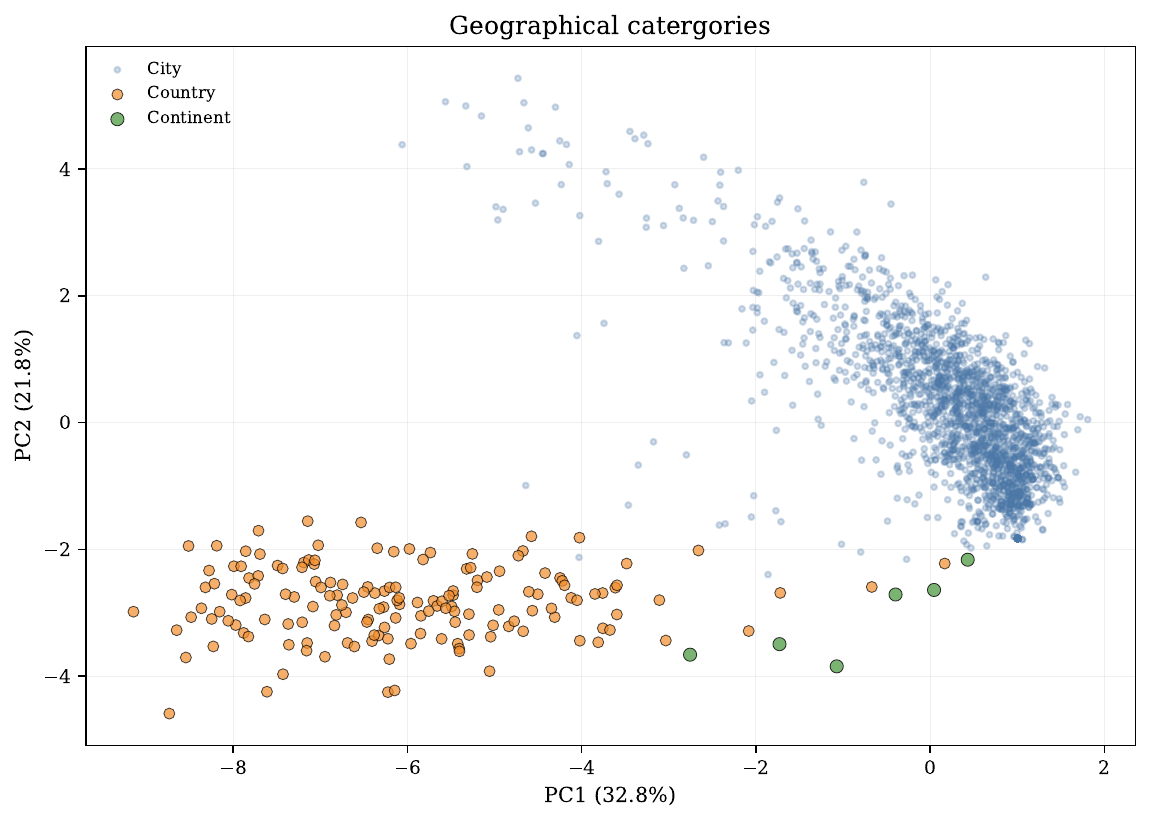}
    \caption{\textbf{Geometry of the Geographical Subspace.} A PCA projection of the latent activations in Mistral SASA Group 1570. The subspace organizes geographical concepts into distinct clusters, preserving the hierarchical distinction between cities (blue), countries (orange), and continents (green).}
    \label{fig:geo_structure}
\end{figure}

\section{Feature analysis: Sports Concepts}
\label{apx:additional-interp}

\begin{figure}[!h]
    \centering
    \includegraphics[width=0.75\linewidth]{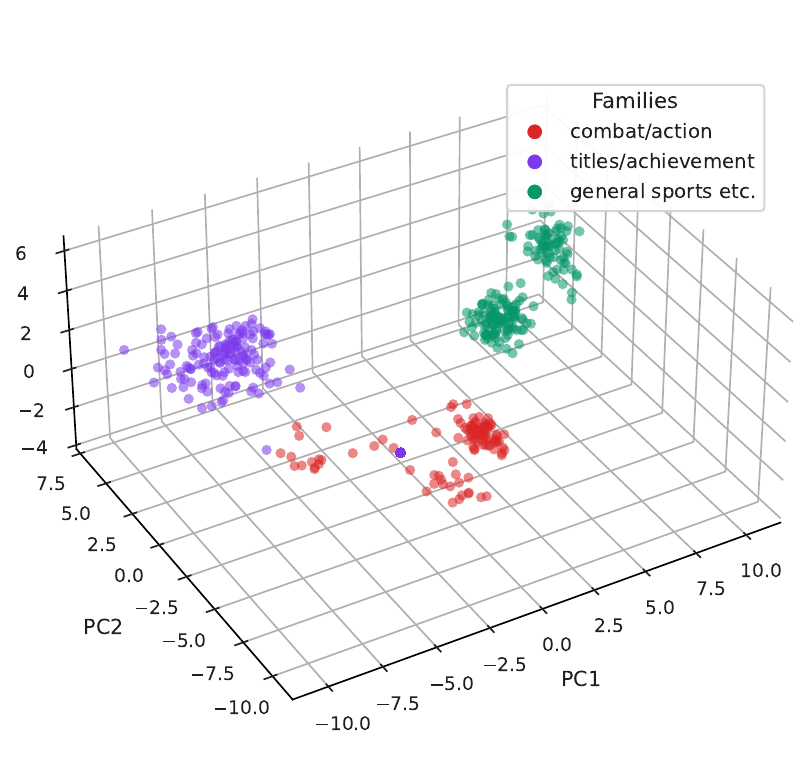}
    \caption{\textbf{SASA Group 1056 --- Sports subspace.} AutoInterp labels this group as \emph{Sports and athletic activity terms}. A 3D PCA view separates combat/action, titles/achievement, and general sports contexts (e.g., sport, athletic).}
    \label{fig:apx-sports-topic}
\end{figure}

SASA's subspaces extend beyond temporal and geographical concepts. Figure~\ref{fig:apx-sports-topic} shows Group~1056, which AutoInterp labels as \emph{Sports and athletic activity terms} with score~$1.0$. Within this subspace, 3D PCA separates combat/action terms (fight, round, boxing), titles/achievement terms (champion, championship, tournament), and general sports context into three geometrically distinct families.

\end{document}